\newtheorem{theorem}{Theorem}[section]
\newtheorem{lemma}[theorem]{Lemma}
\newtheorem{definition}[theorem]{Definition}
\newtheorem{corollary}[theorem]{Corollary}
\newtheorem{remark}[theorem]{Remark}
\DeclareMathOperator{\relu}{ReLu}
\DeclareMathOperator{\rank}{rank}
\DeclareMathOperator{\sign}{sign}
\newcommand{\R}{\mathbb{R}}
\definecolor{DarkBlue}{rgb}{0,0,0.7} 
\long\def\comment#1{}
\newcommand{\blue}[1]{\textcolor{blue}{#1}}
\newcommand{\rh}[1]{{\bf{{\blue{{RH --- #1}}}}}}
\newcommand{\footremember}[2]{
	\footnote{#2}
	\newcounter{#1}
	\setcounter{#1}{\value{footnote}}
}
\author{Stefan Bamberger\footremember{holsten}{stefan.bamberger@tum.de, Department of Mathematics, Technical University of Munich, now Holsten Systems GmbH}
        \and
        Reinhard Heckel\footremember{tum1}{reinhard.heckel@tum.de, Department of Computer Engineering, Technical University of Munich and Munich Center for Machine Learning}
        \and 
        Felix Krahmer\footremember{tum2}{felix.krahmer@tum.de, Department of Mathematics and Munich Data Science Institute, Technical University of Munich and Munich Center for Machine Learning}}
\begin{document}
	
	\title{Approximating Positive Homogeneous Functions with Scale Invariant Neural Networks}
	
	\maketitle
	
	\begin{abstract}
		We investigate to what extent it is possible to solve linear inverse problems with $\relu$ networks. 
    Due to the scaling invariance arising from the linearity, an optimal reconstruction function $f$ for such a problem is positive homogeneous, i.e., satisfies $f(\lambda x) = \lambda f(x)$ for all non-negative $\lambda$. In a $\relu$ network, this condition translates to considering networks without bias terms. 
    We first consider recovery of sparse vectors from few linear measurements. We prove that $\relu$- networks with only one hidden layer cannot even recover $1$-sparse vectors, not even approximately, and regardless of the width of the network. However, with two hidden layers, approximate recovery with arbitrary precision and arbitrary sparsity level $s$ is possible in a stable way. We then extend our results to a wider class of recovery problems including low-rank matrix recovery and phase retrieval.
		Furthermore, we also consider the approximation of general positive homogeneous functions with neural networks. Extending previous work, we establish new results explaining under which conditions such functions can be approximated with neural networks.
  Our results also shed some light on the seeming contradiction between previous works showing that neural networks for inverse problems typically have very large Lipschitz constants, but still perform very well also for adversarial noise. Namely, the error bounds in our expressivity results include a combination of a small constant term and a term that is linear in the noise level, indicating that robustness issues may occur only for very small noise levels.
	\end{abstract}
	
	\section{Introduction}
A variety of signal reconstruction problems including accelerated magnetic resonance imaging and accelerated computed tomography can be formulated as reconstructing a signal from few measurements. Traditionally, reconstruction  is often formulated as assuming the signal is sparse in some basis~\cite{carota06-1, Don06} and reconstructing a sparse vector from few measurements. This is known as compressed sensing~\cite{carota06-1, Don06}. 
A sparse signal is typically reconstructed by solving a convex optimization problem.

The most commonly studied setup in compressed sensing is to recover a signal $x \in \mathbb{R}^n$ from $m$ linear measurements $y = A x \in \mathbb{R}^m$ ($A \in \mathbb{R}^{m \times n}$), assuming that $x$ is $s$-sparse, i.e., at most $s$ of its $n$ entries are non-zero. For suitable measurement matrices $A$, for example random matrices with Gaussian entries, compressed sensing can provide recovery guarantees even if the number of measurements $m$ is significantly smaller than the dimension $n$. In addition, $\ell_1$ minimization is an efficient method to find the unique solution. For details on the theory and the algorithms see \cite{comp_sen}.
 
Neural networks outperform classical sparsity-based methods for a variety of signal and image reconstruction problems. 
Neural networks achieve state-of-the-art results in tasks such as denoising \cite{zhang_GaussianDenoiserResidual_2017} and reconstructing images from few and noisy examples~\cite{zbontar_FastMRIOpenDataset_2018}. 
However, contrary to optimization-based methods, for which a rich literature on performance guarantees exists~\cite{comp_sen}, many underlying theoretical questions are still open for neural network-based signal reconstruction.

 In this work, we consider the question of whether a sparse signal can be provably recovered with a neural network. 
	We study neural networks $f$ that reconstruct a sparse signal $x$ from a measurement $y = A x$, i.e., $x = f(y)$ or at least ensure that the reconstruction error $\|x - f(y)\|$ is small. 
	
We investigate under what conditions (in particular number of required layers) bias-free $\relu$ networks can approximately solve the sparse recovery problem. 

We consider bias free $\relu$ networks since they make use of the positive homogeneous structure of the problem. 
Specifically, we want the reconstruction network $f$ to satisfy $f(\lambda y) = \lambda f(y)$ for all $\lambda \geq 0$ because we know that if $x$ has measurement $y = A x$, then the measurement $\lambda y$ will be obtained from the signal $\lambda x$. 
Such a scale-invariant network has the property that if it can reconstruct every $s$-sparse vector on the unit-ball then it can also recover each $s$-sparse vector on an unbounded domain. Therefore investigating the number of required layers goes beyond the usual universal approximation theorem (\cite{pinkus_1999}, see Theorem~\ref{thm:universal_approximation} below), which can guarantee arbitrarily precise approximations but only on a compact domain and without incorporating the positive homogeneous structure of the problem into the network.

    Taking knowledge about a function into account for the design of the network to approximate it is a strategy that can significantly improve reliability and training effort. For this reason, a large number of works have studied this strategy over a long period of time for different types of functions \cite{wood1996representation, dugas2009incorporating, kicki2021, Chidester2019}. Specifically, Tang et al.~\cite{tang2020towards} consider positive homogeneous functions.

	Furthermore, positive homogeneity can also have applications for other problems. For example in image denoising, rescaling the brightness of a picture might not change the underlying procedure and if the corresponding network is designed to be positive homogeneous, different brightness levels do not need to be learned separately.

	\subsection{Contributions of this work} 
	\label{sec:goals_this_work}

	We first show that with one hidden layer, it is not possible to even approximately recover $1$-sparse vectors. 
    Secondly, we show that two hidden layers are sufficient to recover sparse vectors with arbitrary sparsity levels $s$ and to arbitrary precision. We also establish a robustness guarantee for these networks ensuring that the resulting network reconstruction function can reconstruct vectors that are approximately $s$-sparse from corrupted measurements.
	
We also show that our reconstruction guarantee generalizes as follows to a more general class of inverse problems. Instead of the set of sparse vectors, we can have any subset $U \in \R^n$ that is positive homogeneous (i.e., $\lambda u \in U$ for any $\lambda \in [0, \infty)$ and $u \in U$), and instead of a linear map $A$ we can have a positive homogeneous function $g$ that satisfies certain requirements. Besides sparse recovery, using this generalized result we also show that low-rank matrix recovery and phase retrieval problems can be solved using $\relu$ networks with two hidden layers.
	
	Since for problems like sparse recovery or phase retrieval, there are reconstruction methods based on optimization problems, we also show  how these can be translated to a $\relu$ network with two hidden layers. The central argument is that there exists a continuous solution function of the optimization problem. We show this using a generalization of the continuity concept to functions with multiple values.
	
	Furthermore, we also gain more insights into the general approximation of continuous, positive homogeneous functions. Specifically, \cite{tang2020towards} already shows that with the $\relu$ activation function, the unbiased networks with two hidden layers represent a class of functions such that (i) all these functions are positive homogeneous and (ii) they can approximate every continuous positive homogeneous function to arbitrary precision. We complement their result by showing that (up to certain modifications of itself), the $\relu$ function is the only activation function such that the unbiased networks satisfy these two conditions (i) and (ii). We establish this in a theorem that is similar to the classical universal approximation theorem of neural networks. Furthermore, using the negative results about sparse recovery, we also prove that this universal approximation property fails to hold for just one hidden layer such that the assumption of two hidden layers is actually necessary.

	In Section~\ref{sec:main_results}, we present our main results in two parts. One part is about solving inverse problems and the other one about universal approximation of positive homogeneous functions. Then Section~\ref{sec:universal_approximation} contains the proof of the main result about universal approximation. In Section~\ref{sec:inverse_problems}, we prove the main results regarding inverse problems and show some other applications of them. Section~\ref{sec:optimization_networks} then shows that $\relu$ networks can be used to solve inverse problems in the way optimization-based methods do.  In Section~\ref{sec:nn_discussion}, we discuss implications and relations to other work.

\subsection{Related work}

We briefly discuss prior work on universal approximation of neural networks, prior work on sparse recovery with neural networks, and prior work on robustness. 

\paragraph{Universal approximation with neural networks.}

	To solve the aforementioned signal recovery problem, we need to compute the function that maps measurements $y = A x$ to their original signals $x$. Compressed sensing guarantees the well-definedness of this function and the question is if, how, and how well this function can be approximated by certain classes of neural networks.
	
	The general question of how well certain functions can be approximated has been a central question in the research of neural networks for a long time. Cybenko \cite{cybenko_uat} showed that neural networks with only one hidden layer and any bounded measurable sigmoidal activation function can approximate any continuous function on the $n$-dimensional unit cube to arbitrary precision if the width of the network is sufficiently large. This result has been known as the \textit{universal approximation theorem} and has been extended several times. For example, Leshno et al.~\cite{leshno_uat} generalized it to the case of any non-polynomial activation function. \cite{pinkus_1999} even proved for a large class of functions that this approximation property is equivalent to the function being non-polynomial.

    \begin{theorem} [Universal Approximation Theorem, \cite{pinkus_1999}] \label{thm:universal_approximation}
		Let $n \geq 1$ be a dimension and $\sigma: \R \rightarrow \R$ continuous. Then the following are equivalent.
		\begin{enumerate}[label=(\alph*)]
			\item For any compact $K \subset \R^n$, any continuous $f: K \rightarrow \R$ and any $\delta > 0$, there exists a network with one hidden layer and activation function $\sigma$, representing $\tilde{f}: \R^n \rightarrow \R$, such that for all $x \in K$,
			\[
			|\tilde{f}(x) - f(x)| \leq \delta.
			\]
			
			\item $\sigma$ is not a polynomial.
		\end{enumerate}
	\end{theorem}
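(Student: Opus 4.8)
The plan is to prove the two implications separately, disposing of (a)$\Rightarrow$(b) by contraposition and then establishing (b)$\Rightarrow$(a), which is the substantive direction. For (a)$\Rightarrow$(b) I would argue the contrapositive: if $\sigma$ is a polynomial of degree $d$, then for every weight $w \in \R^n$ and bias $b \in \R$ the map $x \mapsto \sigma(\langle w, x\rangle + b)$ is a polynomial in $x$ of degree at most $d$, so every one-hidden-layer network $\sum_i c_i \sigma(\langle w_i, x\rangle + b_i)$ lies in the finite-dimensional space $\mathcal{P}_d$ of polynomials of degree $\le d$. Since $\mathcal{P}_d$ is a proper closed subspace of $C(K)$ whenever $K$ has nonempty interior, a monomial of degree $d+1$ cannot be approximated to arbitrary precision, so (a) fails.

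For (b)$\Rightarrow$(a), let $\mathcal{M}$ denote the linear span of $\{\,x \mapsto \sigma(\langle w, x \rangle + b) : w \in \R^n,\ b \in \R\,\}$; the goal is to show $\overline{\mathcal{M}} = C(K)$ in the sup-norm. I would first treat the univariate smooth case $n=1$, $\sigma \in C^\infty$. Here the difference quotients of $w \mapsto \sigma(wx + b)$ all lie in $\mathcal{M}$, so its closure contains the $k$-th $w$-derivative at $w=0$, which equals $x^k \sigma^{(k)}(b)$. Because $\sigma$ is not a polynomial, for each $k$ there is a $b_k$ with $\sigma^{(k)}(b_k) \ne 0$; dividing out this nonzero constant shows $x^k \in \overline{\mathcal{M}}$ for every $k$, whence all polynomials and, by Weierstrass, all of $C(K)$ lie in $\overline{\mathcal{M}}$.

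Next I would remove the smoothness assumption by mollification. For a smooth compactly supported bump $\phi$ the convolution $\sigma * \phi$ is smooth, and its rescaled-and-shifted versions $(\sigma*\phi)(wx+b)$ lie in $\overline{\mathcal{M}}$, since the convolution integral is a uniform limit of Riemann sums $\sum_j \phi(t_j)\,\sigma(wx + b - t_j)\,\Delta t$ whose terms are network units with the shift absorbed into the bias. A standard fact is that a non-polynomial $\sigma$ admits some $\phi$ for which $\sigma * \phi$ is again non-polynomial (otherwise $\sigma$ itself would be a polynomial), so applying the smooth case to $\sigma * \phi$ places every monomial in $\overline{\mathcal{M}}$. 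Finally, for general $n$ the univariate conclusion gives that $\overline{\mathcal{M}}$ contains every ridge monomial $(\langle a, x\rangle)^k$ with $a \in \R^n$; as $a$ ranges over $\R^n$ these powers of linear forms span the homogeneous polynomials of degree $k$ in $n$ variables, so all multivariate polynomials lie in $\overline{\mathcal{M}}$, and the multivariate Stone--Weierstrass theorem yields $\overline{\mathcal{M}} = C(K)$.

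The main obstacle is the mollification step: passing from the clean derivative argument for smooth $\sigma$ to a merely continuous activation while keeping every approximant inside $\mathcal{M}$, and in particular verifying that \emph{some} mollification of a non-polynomial $\sigma$ remains non-polynomial. The derivative trick is the conceptual core, and the ridge-function spanning fact, though standard, must be invoked with care to complete the reduction to $n$ dimensions.
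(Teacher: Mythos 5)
This theorem is not proved in the paper at all: it is quoted verbatim from the literature (Pinkus, 1999) as a known result, so there is no ``paper proof'' to compare against. Your sketch is a correct reconstruction of the standard argument from that very source (originally Leshno--Lin--Pinkus--Schocken): polynomial activations give a finite-dimensional, hence non-dense, function class for (a)$\Rightarrow$(b); and for (b)$\Rightarrow$(a) the derivative/difference-quotient trick for smooth $\sigma$, mollification to handle merely continuous $\sigma$, and the fact that powers of linear forms $\langle a, x\rangle^k$ span the homogeneous polynomials to pass to $n$ variables. The one point deserving emphasis is the step you rightly flag as the main obstacle: the claim that a non-polynomial continuous $\sigma$ has some mollification $\sigma * \phi$ that is again non-polynomial is genuinely nontrivial --- a priori the degree of $\sigma*\phi$ could vary with $\phi$, and the standard proof rules this out via a Baire category (or distribution-theoretic) argument; your parenthetical ``otherwise $\sigma$ itself would be a polynomial'' is the correct statement of what must be shown, but it is a lemma requiring proof, not an immediate observation. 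With that lemma supplied, your outline is complete and is essentially the proof given in the cited reference.
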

	
	As mentioned previously, taking known properties of the approximated functions into account for the network design has been studied in multiple previous works.	To mention some concrete examples, \cite{dugas2009incorporating} considers functions satisfying a certain monotonicity and convexity condition, \cite{kicki2021} considers functions that are invariant under certain permutations of their input variables, and \cite{tang2020towards} considers the positive homogeneous functions.
	
	All these works construct a class of networks and show that, on the one hand, all these networks represent functions of the particular class, and on the other hand, every function of the respective class can be approximated by one of these networks. The latter part corresponds to the implication (b) $\Rightarrow$ (a) in the universal approximation Theorem~\ref{thm:universal_approximation} for the class of general continuous functions on compact domains.
	
	In particular Tang et al. \cite{tang2020towards} show these things for positive homogeneous functions. In this work, we extend their result to an equivalence statement similar to Theorem~\ref{thm:universal_approximation} which will be Theorem~\ref{thm:uat_homogeneous}. We also show that their requirement of having at least two hidden layers is required.

	\paragraph{Sparse recovery with neural networks. }
	A popular approach to recovery a sparse vector $x^\ast$ from a linear measurement $y=Ax^\ast$ is to solve the $\ell_1$ regularized least-squared problem
	\begin{align}
		\min_{x} \frac{1}{2} \| Ax - y \|_2^2 + \lambda \|x\|_1.
	\end{align}
	This is a convex program and popular algorithms to solve it are proximal methods, such as the iterative shrinkage thresholding algorithm (ISTA). ISTA is initialized at some $x^0$ and iterates for $\ell=1,2,\ldots$
	\begin{align}
		x^{\ell+1} = \eta_{\lambda/L}\left(
		x^\ell - \frac{1}{L} A^T (A x^\ell - y)
		\right),
	\end{align}
	where $\eta_z$ is the soft-thresholding function, i.e., $\eta_z(t) = \sign(t)(|t| - z)$ if $|t| \geq z$ and $\eta_z = 0$ otherwise. Note that $d$ many iteration can be viewed as a recurrent neural network with depth $d$. Inspired by this, a number of works starting with~\cite{gregor_LearningFastApproximations_2010}
	studied un-rolled algorithms which unrolls $d$ many ISTA iterations as 
	\begin{align}
		x^{\ell+1} = \eta_{\lambda/L}\left(W_1^\ell
		x^\ell + W_2^\ell y
		\right).
	\end{align}
	This is a feed-forward neural network of depth $d$ and $W_1^\ell$ and $W_2^\ell$ are weight matrices that are typically learned based on data. In all layers, $\eta_{\lambda/L}$ is used as an activation function.
	Chen at al.~\cite{chen_TheoreticalLinearConvergence_2018} (Thm. 2) established that there exist choices of weights such that a $s$-sparse signal $x^\ast$ with entries bounded by $|x_i^\ast| \leq B$, and with $s$ sufficiently small can be approximated as
	\begin{align*}
		\| x^d - x^\ast \|_2 \leq s B e^{-cd},
	\end{align*}
	where $c$ is a constant depending on the matrix $A$ and mildly on the sparsity $s$ of the signal. This result requires at least $s^2\leq m$, where $m$ is the number of measurements, as it works with the incoherence of the matrix $A$, and requires the sparsity to be sufficiently small relative to the incoherence (see~\cite[Appendix B, Step 3]{chen_TheoreticalLinearConvergence_2018}. 
	This result establishes that there is a relatively shallow neural network that can approximately sparse signals well.

	So with those approaches, depth $d = \mathcal{O}(\log s)$ is sufficient to approximate the signal $x^*$ with the output of the network. In contrast, the goal of this work is to determine the exact number of layers that is necessary and sufficient to solve the sparse recovery problem. Furthermore, other than the aforementioned unrolling approach, which works for signals whose entries are bounded by $B$, we consider networks that can solve the sparse recovery problem on the entire (unbounded) set of possible signals. We prove that for these requirements and with $\relu$ activation function, one hidden layer is not sufficient to solve the problem, but two hidden layers are, even for a stable solution. However, our results do not yield a construction for these networks and also do not specify their width.
	
\paragraph{Robustness.} 
\label{subsec:nn_robustness_intro}

	An important aspect of solutions to the sparse recovery problem is how sensitive they are to noisy measurements $y = A x + e$. For a robust recovery method $f: \R^m \rightarrow \R^n$, we expect $\|f(A x + e) - f(A x)\|_2$ to be small for small $\|e\|_2$. With minimization-based methods such as the quadratically-constrained basis pursuit
	\begin{align} \label{eq:qc_bp_nn_introduction}
	    & \min_{x} \|x\|_1 & \text{s.t. } & \|A x - y\|_2 \leq \eta
	\end{align}
	for a parameter $\eta \geq 0$, robust recovery has been proven to be successful (see Section~4.3 in \cite{comp_sen}).
	
	For neural networks on inverse problems, the question of robustness is currently studied under various aspects. In \cite{genzel2022solving}, an empirical analysis is conducted suggesting that neural networks can provide robust solutions to specifically chosen problems similar to sparse recovery and image reconstruction in a similar or even better way compared to optimization-based methods. 

 In practice, neural networks are relatively stable to worst-case perturbation, empirical studies indicate that they are similarlty sensitive or stable to worst-case perturbations than traditional methods~\cite{Darestani_Chaudhari_Heckel_2021}. 
 
	In contrast to this, \cite{gottschling2020troublesome} provides a theoretical analysis of certain situations that necessarily lead to robustness issues for neural networks on inverse problems. Specifically, they show that in certain situations, neural networks applied to inverse problems necessarily have large local Lipschitz constants.

   Krainovic~\cite{krainovic_learning_2023} discusses jittering and robust training to provably learn robust denoisers. 
	
	We also review our work in the context of the aforementioned results. This provides a possible interpretation of this seeming contradiction. We show robustness similar to the one for the minimization problem \eqref{eq:qc_bp_nn_introduction} but still the local Lipschitz constants of our solution might be very large. This is due to possible large gradients $\frac{\|f(A x + e) - f(A x)\|_2}{\|e\|_2}$ for very small error levels $\|e\|_2$. This is discussed in detail in Section~\ref{sec:nn_robust}.

	\subsection{Notation}
	
	We consider neural networks with the activation function $\relu: \R \rightarrow \R$, defined by $\relu(x) = \max\{x, 0\}$. We also use the shorter notation $\phi := \relu$.
	
	For $x \in \R^n$, we denote $\|x\|_0$ for the number of non-zero entries. We write the set of $s$-sparse vectors as $\Sigma_s := \{x \in \R^n:\quad \|x\|_0 \leq s\}$.
	
	A lot of statements in this work concern feedforward neural networks of the following type.
	\begin{definition}
	    We say that a function $f: \R^m \rightarrow \R^n$ is represented by a (feedforward) neural network with activation function $\sigma: \R \rightarrow \R$ and $d$ hidden layers if there are matrices
	    \begin{align*}
	        W_1 \in \R^{k_1 \times m}, W_2 \in \R^{k_2 \times k_1}, \dots, W_{d} \in \R^{k_d \times k_{d - 1}}, W_{d + 1} \in \R^{n \times k_d}
	    \end{align*}
	    and vectors
	    \begin{align*}
	        b_1 \in \R^{k_1}, \dots, b_d \in \R^{k_d}, b_{d + 1} \in \R^{k_{n}}
	    \end{align*}
	    such that for all $x \in \R^m$,
	    \begin{align*}
	        f(x) = W_{d + 1} \sigma\left( W_d \sigma( \dots \sigma(W_1 x + b_1) \dots ) + b_d \right) + b_{d + 1},
	    \end{align*}
	    where $\sigma$ is applied component-wise $d$ times.
	    
	    We call $W_1, \dots, W_{d + 1}$ the weight matrices, $b_1, \dots, b_{d + 1}$ the biases of the network and say that the network is unbiased if $b_j = 0$  for $1 \leq j \leq d + 1$. $d$ is called the depth of the network and $k_1, \dots, k_d$ the widths of the hidden layers.
	    
	    If $\sigma = \relu$, then we also call the network a \textit{$\relu$ network}.
	\end{definition}
	
	\begin{definition}
		We define a set $U \subset \R^n$ to be positive homogeneous if for all $\lambda \in [0, \infty)$ and all $x \in U$, also $\lambda x \in U$.
		
		If $U \subset \R^n$ is a positive homogeneous set, we define a function $f: U \rightarrow \R^m$ to be positive homogeneous if for all $\lambda \in [0, \infty)$ and all $x \in U$, $f(\lambda x) = \lambda f(x)$.
	\end{definition}
	
	Furthermore, we use the following basic notations:
	\begin{itemize}
	    \item $[n] = \{1, \dots, n\}$ for $n \in \mathbb{Z}_{\geq 1}$.
	    \item $B_r(x_0) := \{x \in X \,\big|\, d(x, x_0) < r\} \subset X$ is the open and $\bar{B}_r(x_0) := \{x \in X \,\big|\, d(x, x_0) < r\}$ the closed ball with radius $r$ in the metric space $(X, d)$. Unless noted otherwise, if $X = \R^n$, then $d$ is the Eucledian distance.
	\end{itemize}

	\section{Main Results} 
 
 In this section we state and discuss our main results for approximating positive homogeneous functions with bias-free $\relu$-networks. 
 We start with considering the reconstruction of sparse signals from underdetermined linear measurements, then extend our results to more general constraint sets, and finally we consider the problem of approximating arbitrary positive homogeneous functions.

 \label{sec:main_results}
	
\subsection{$\relu$-networks reconstructing sparse signals}

	The main results of this work regarding sparse recovery are the following two Corollaries~\ref{cor:imposs_one_hidden} and \ref{cor:approximation_rip} which are consequences of the slightly more general Theorems~\ref{thm:approximation_inverse_function} and \ref{thm:lower_bound_generalized} below. 
	
	Corollary \ref{cor:imposs_one_hidden} below states that a $\relu$ network with one hidden layer cannot recover all sparse vectors from any $m \ll n$ linear measurements, not even approximately and for $1$-sparse vectors. 
	
	\begin{corollary}[Impossibility result for one hidden layer] \label{cor:imposs_one_hidden}
		Let $A \in \mathbb{R}^{m \times n}$, $m \leq n$, and $f: \R^m \rightarrow \R^n$ be a function represented by a $\relu$ network with one hidden layer. Then, for any width and any choice of the network parameters,
		\[
		\sup_{x \in \Sigma_1 \backslash \{0\}} \frac{\|x - f(A x)\|_2}{\|x\|_2} \geq \sqrt{1 - \frac{m}{n}},
		\]
  where $\Sigma_1$ is the set of one-sparse vectors. 
	\end{corollary}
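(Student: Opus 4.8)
The plan is to exploit the (asymptotic) positive homogeneity of a one‑hidden‑layer $\relu$ network to collapse the supremum over the unbounded set $\Sigma_1 \setminus \{0\}$ into a finite comparison involving only the $2n$ vectors $f(A e_i)$ and $f(-A e_i)$, $i \in [n]$, and then to obtain a contradiction from a rank count. Writing $u_i := A e_i$ for the $i$-th column of $A$, I would first note that every $x \in \Sigma_1 \setminus \{0\}$ lies on a coordinate ray, so evaluating the error ratio along $x = t e_i$ and letting $t \to +\infty$ and $t \to -\infty$ gives
\[
\sup_{x \in \Sigma_1 \setminus \{0\}} \frac{\|x - f(Ax)\|_2}{\|x\|_2} \;\ge\; \max_{i \in [n]} \max\bigl\{ \|e_i - p_i\|_2,\ \|e_i + q_i\|_2 \bigr\} =: E,
\]
where $p_i := \lim_{t \to +\infty} f(t u_i)/t$ and $q_i := \lim_{t \to +\infty} f(-t u_i)/t$. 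For an unbiased network these limits are exactly $p_i = W_2 \phi(W_1 u_i)$ and $q_i = W_2 \phi(-W_1 u_i)$; the point of passing to the limit on the unbounded domain is that any bias terms wash out, since $\phi(t z + b)/t \to \phi(z)$, so the very same formulas hold for general networks with bias. This is where the unboundedness of $\Sigma_1$, emphasized throughout the paper, does the real work.

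Next I would apply the elementary identity $\phi(z) - \phi(-z) = z$ component-wise to compute
\[
p_i - q_i \;=\; W_2 \bigl( \phi(W_1 u_i) - \phi(-W_1 u_i) \bigr) \;=\; W_2 W_1 u_i \;=\; W_2 W_1 A e_i .
\]
Thus the matrix $M := W_2 W_1 A \in \R^{n \times n}$ has $i$-th column $M e_i = p_i - q_i$. Setting $\alpha_i := p_i - e_i$ and $\beta_i := q_i + e_i$, so that $\|\alpha_i\|_2, \|\beta_i\|_2 \le E$ by the definition of $E$, one gets $M e_i = 2 e_i + (\alpha_i - \beta_i)$; equivalently $M = 2 I_n + R$ with $\|R e_i\|_2 \le 2E$ for every $i$, whence $\|R\|_F^2 = \sum_i \|R e_i\|_2^2 \le 4 n E^2$.

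The crux is then a dimension count: since $M = W_2 W_1 A$ factors through $A \in \R^{m \times n}$, we have $\rank(M) \le \rank(A) \le m$, whereas $2 I_n$ has full rank $n$. To quantify this, let $P$ be the orthogonal projection onto the orthogonal complement of $\mathrm{range}(M)$, so that $PM = 0$ and $\rank(P) = n - \dim(\mathrm{range}\,M) \ge n - m$. Then $P R = P(M - 2 I_n) = -2 P$, and since orthogonal projections do not increase the Frobenius norm,
\[
4 n E^2 \;\ge\; \|R\|_F^2 \;\ge\; \|P R\|_F^2 \;=\; 4\, \|P\|_F^2 \;=\; 4\,\rank(P) \;\ge\; 4(n - m).
\]
Rearranging gives $E \ge \sqrt{1 - m/n}$, which by the first display is the claimed lower bound.

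The main obstacle I anticipate is not the rank count, which is routine, but justifying the reduction in the first display carefully — in particular verifying that the pointwise limits $p_i, q_i$ exist and that the bias terms genuinely vanish, so that the bound applies to arbitrary (possibly biased) one‑hidden‑layer $\relu$ networks and not merely to unbiased ones. A secondary point is to confirm that the constant is exactly $\sqrt{1 - m/n}$ and not something weaker: this needs the triangle‑inequality estimate $\|\alpha_i - \beta_i\|_2 \le 2E$ together with the sharp identity $\|P\|_F^2 = \rank(P) \ge n - m$, and the observation that it is $\rank(A) \le m$ that controls $\rank(M)$.
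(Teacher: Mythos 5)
Your proposal is correct and follows essentially the same route as the paper: both arguments eliminate the biases by letting the scale tend to infinity, extract the rank-$\leq m$ matrix $W_2 W_1 A$ from the odd part $f(Ax) - f(-Ax)$ via the identity $\phi(z) - \phi(-z) = z$, and then conclude via a triangle inequality over the $\pm e_i$ directions that a rank-$\leq m$ matrix cannot be close to $2\,Id_n$ in Frobenius norm. The only difference is presentational: where the paper (proving the more general Theorem~\ref{thm:lower_bound_generalized}) cites the Eckart--Young--Mirsky theorem to lower-bound that distance by the tail singular values of $2X$, you prove the one instance you need ($X = Id_n$) by an elementary orthogonal-projection argument, which is a self-contained proof of exactly that special case.
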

	
	Note that in typical signal reconstruction problems, the number of measurements is much smaller than the dimension of the vector $x$ (i.e., $m \ll n$) such that the lower bound for the relative error is close to~$1$. Thus, the reconstruction function is guaranteed to make a large error for reconstructing at least one $1$-sparse signal.
	
In contrast, Corollary~\ref{cor:approximation_rip} below states that for a $\relu$ network with two hidden layers, recovery of all $s$-sparse vectors is possible to arbitrary precision and in a stable way for not exactly sparse signals or corrupted measurements.

	\begin{corollary} \label{cor:approximation_rip}
		Let $A \in \R^{m \times n}$ be a matrix satisfying the $(2 s, \delta)$-RIP for a $\delta \in (0, 1)$. Then for each $\delta' \in (0, 1)$, there exists a function $\tilde{f}: \R^m \rightarrow \R^n$, represented by an unbiased $\relu$ network with two hidden layers such that for all $x \in \R^n$, $e \in \R^m$,
		\[
		\|\tilde{f}(A x + e) - x\|_2 \leq \delta' \|x\|_2 + C \sigma_s(x)_1 + D \|e\|_2,
		\]
		where $C = 1 +  2\frac{1 + \delta}{1 - \delta}$, $D = \frac{3}{1 - \delta}$, and
		\[
		\sigma_s(x)_1 = \inf_{x' \in \Sigma_s} \|x - x'\|_1.
		\]
	\end{corollary}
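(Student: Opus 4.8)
The plan is to deduce Corollary~\ref{cor:approximation_rip} from the general result Theorem~\ref{thm:approximation_inverse_function} by specializing the positive-homogeneous set to $U = \Sigma_s$ and the positive-homogeneous (indeed linear) map to $g = A$. The only thing to verify for the reduction is that $A$ restricted to the sparse vectors is stably injective in the sense the general theorem requires: since $A$ satisfies the $(2s,\delta)$-RIP and the difference of two $s$-sparse vectors is $2s$-sparse, we have $\sqrt{1-\delta}\,\|x - x'\|_2 \le \|A x - A x'\|_2 \le \sqrt{1+\delta}\,\|x - x'\|_2$ for all $x, x' \in \Sigma_s$. This lower bound is the quantitative injectivity that controls the inverse on the measurement cone $K := A(\Sigma_s)$, and it is the source of the $(1-\delta)$ terms in the denominators of $C$ and $D$. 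Plugging the RIP constants into the bound provided by Theorem~\ref{thm:approximation_inverse_function} should yield the stated inequality.

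Since that theorem is itself still to be proved, let me also sketch the mechanism I expect it to use, as this is how I would argue from scratch. First I would construct a continuous, positive-homogeneous \emph{ideal decoder} $f \colon \R^m \to \R^n$ that already achieves a robust recovery bound of the form $\|f(Ax+e) - x\|_2 \le C\,\sigma_s(x)_1 + D\,\|e\|_2$. On the measurement cone $K$ the natural choice is the exact inverse $(A|_{\Sigma_s})^{-1}$, which is positive homogeneous and, by the RIP lower bound above, Lipschitz. The task is then to extend it to a continuous positive-homogeneous map on all of $\R^m$ while keeping the error off the cone controlled; because a noise-level-dependent program such as quadratically-constrained basis pursuit is not scale-invariant, positive homogeneity forces one to obtain robustness from the Lipschitz and extension properties of $f$ together with RIP, rather than from a fixed noise budget. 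Standard RIP-based compressed-sensing estimates (see \cite{comp_sen}) then supply the precise values of $C$ and $D$.

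Second, I would invoke the positive-homogeneous universal approximation result (Theorem~\ref{thm:uat_homogeneous}, extending \cite{tang2020towards}): because $f$ is continuous and positive homogeneous, for every $\epsilon > 0$ there is an unbiased $\relu$ network $\tilde f$ with two hidden layers satisfying $\sup_{\|y\|_2 = 1}\|\tilde f(y) - f(y)\|_2 \le \epsilon$. Homogeneity of both $f$ and $\tilde f$ upgrades this to $\|\tilde f(y) - f(y)\|_2 \le \epsilon\,\|y\|_2$ on all of $\R^m$, which is the key point that converts approximation on the compact unit sphere into a global relative guarantee. A triangle inequality gives $\|\tilde f(Ax+e) - x\|_2 \le \epsilon\,\|Ax+e\|_2 + C\,\sigma_s(x)_1 + D\,\|e\|_2$, and bounding $\|Ax+e\|_2$ by splitting $x$ into its best $s$-term approximation (controlled via the RIP upper bound) and its tail (controlled via $\sigma_s(x)_1$ and $\|e\|_2$) lets me take $\epsilon$ small enough that the leading term becomes $\le \delta'\|x\|_2$ and the remaining cross terms fold into the $C$ and $D$ terms.

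The main obstacle I anticipate is establishing that the ideal decoder $f$ is genuinely continuous and positive homogeneous. The $\ell_1$ minimizer need not be unique, so the recovery map is a priori set-valued and may be discontinuous across changes of the active support, whereas the universal approximation theorem requires an honest continuous function. Producing a continuous positive-homogeneous \emph{selection} of the recovery map -- equivalently, a continuous positive-homogeneous Lipschitz extension of $(A|_{\Sigma_s})^{-1}$ from the cone $K$ to all of $\R^m$ that still obeys the robust bound -- is the delicate step, and it is precisely the multivalued-continuity issue the paper isolates for its optimization-network results. The secondary, more bookkeeping-level difficulty is the constant tracking in the final combination, ensuring the network error contributes only the $\delta'\|x\|_2$ term and that the cross terms are absorbed cleanly into the advertised $C$ and $D$.
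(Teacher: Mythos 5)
Your overall route coincides with the paper's: specialize the general inverse-problem result to $U = \Sigma_s$, $g = A$, and your sketch of that result's mechanism (exact inverse of $A$ on the cone $A(\Sigma_s)$, Lipschitz by the RIP lower bound, extended to a positive homogeneous Lipschitz map on all of $\R^m$, then approximated by an unbiased two-hidden-layer $\relu$ network via Theorem~\ref{thm:uat_homogeneous}, with the sphere error upgraded to a global relative error by homogeneity) is exactly how the paper proves Theorem~\ref{thm:approximation_inverse_function}. However, your reduction has a genuine gap: you claim the \emph{only} thing to verify is stable injectivity of $A$ on sparse vectors, which supplies the constant $\tau$. Theorem~\ref{thm:approximation_inverse_function} has a second hypothesis you never address: a finite bound $\rho$ on $\|A x^{(1)} - A x^{(2)}\|_2 / \|x^{(1)} - x^{(2)}\|_{II}$ over \emph{all} of $\R^n$ (not just $\Sigma_s$), for a norm $\|\cdot\|_{II} \geq \|\cdot\|_2$, and the conclusion is phrased in terms of $d_{II}(x, U)$ with $C = 1 + 2\rho/\tau$. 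To obtain the advertised $\sigma_s(x)_1$ term you must take $\|\cdot\|_{II} = \|\cdot\|_1$, so that $d_{II}(x, \Sigma_s) = \sigma_s(x)_1$, and to obtain $C = 1 + 2\tfrac{1+\delta}{1-\delta}$ you must show $\|A z\|_2 \leq (1+\delta)\|z\|_1$ for \emph{every} $z \in \R^n$. The paper does this by decomposing $z = \sum_j z_j e_j$ into $1$-sparse pieces and applying the RIP upper bound to each piece; this decomposition hypothesis is precisely what the intermediate Theorem~\ref{thm:generalized_rip} isolates, and it is the actual content of the paper's proof of the corollary. Without this step the general theorem yields neither the $\ell_1$ best-approximation error nor the stated constants. (A smaller issue: the paper's RIP convention is $(1-\delta)\|z\|_2 \leq \|Az\|_2 \leq (1+\delta)\|z\|_2$; with your square-root convention the same argument produces $C = 1 + 2\sqrt{(1+\delta)/(1-\delta)}$ and $D = 3/\sqrt{1-\delta}$, which are not the constants in the statement.)

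Your anticipated ``main obstacle'' --- non-uniqueness of the $\ell_1$ minimizer and the need for a continuous selection of a set-valued recovery map --- is a red herring for this corollary. The paper's ideal decoder never involves $\ell_1$ minimization: it is the exact inverse of $A$ on $A(\Sigma_s)$, and the step you correctly flag as delicate (a continuous, positive homogeneous, Lipschitz extension to all of $\R^m$) is handled constructively by Lemma~\ref{lem:lipschitz_extension}, which applies Kirszbraun's theorem, restricts to the unit sphere, and re-homogenizes, at the cost of a factor $2$ in the Lipschitz constant; that factor is exactly where the $2$ in $C = 1 + 2\rho/\tau$ comes from. The multivalued-analysis machinery you invoke is used in the paper, but only for the separate optimization-based constructions of Section~\ref{sec:optimization_networks}; it plays no role in the proof of Corollary~\ref{cor:approximation_rip}.
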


\subsection{$\relu$-networks reconstructing signals in more general constraint sets}
	
In this section, we state results beyond the reconstruction of sparse signals from linear measurements. 

We start with Theorem~\ref{thm:lower_bound_generalized}, which implies Corollary~\ref{cor:imposs_one_hidden} stated above but applied to unions of subspaces beyond the set of sparse vectors. 
	
    \begin{theorem} \label{thm:lower_bound_generalized}
		Let $A \in \R^{m \times n}$, $m \leq n$, and $f: \R^m \rightarrow \R^n$ be a function represented by a $\relu$ network with one hidden layer.
		
		Let $x_1, \dots, x_{\tilde{n}} \in \R^n$ be vectors with $\|\cdot\|_2$ norm $1$ and $X := (x_1 \, x_2 \, \dots \, x_{\tilde{n}} ) \in \R^{n \times \tilde{n}}$. Let $U = \bigcup_{k = 1}^{\tilde{n}} \mathrm{span}(x_k)$. Then
		\[
			\sup_{x \in U \backslash \{0\}} \frac{\|f(A x) - x\|_2}{\|x\|_2} \geq \sqrt{\frac{1}{\tilde{n}} \sum_{k = m + 1}^{\tilde{n}} (\sigma_{k}(X)})^2,
		\]
		where $\sigma_k(X)$ is the $k$-th singular value of the matrix $X$.
	\end{theorem}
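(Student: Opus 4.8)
The plan is to probe the network along each line $\mathrm{span}(x_k)$ and extract, in the large-scale limit, a single rank-$\le m$ linear map whose deviation from the identity on the columns of $X$ is then controlled by the Eckart--Young theorem. Write $f(y) = W_2 \phi(W_1 y + b_1) + b_2$ and set $y_k := A x_k$. First I would analyze the rays $\{t x_k : t > 0\}$ and $\{-t x_k : t > 0\}$ as $t \to \infty$. For each coordinate $i$, the sign of $(W_1(\pm t y_k) + b_1)_i = \pm t (W_1 y_k)_i + (b_1)_i$ stabilizes once $t$ exceeds a ($k$-dependent but finite) threshold: it equals $\sign(\pm(W_1 y_k)_i)$ whenever $(W_1 y_k)_i \neq 0$, and is constant otherwise. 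Hence for all large $t$,
\[
f(\pm t y_k) = \pm t\, W_2 D_k^{\pm} W_1 y_k + v_k^{\pm},
\]
where $D_k^{+}$ and $D_k^{-}$ are the diagonal $0/1$ matrices selecting the coordinates with $(W_1 y_k)_i > 0$ and $(W_1 y_k)_i < 0$ respectively, and $v_k^{\pm}$ are constants absorbing the bias and the inactive coordinates.

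From this, the relative reconstruction error on the two rays converges: dividing by $t$ and letting $t \to \infty$ gives $\|f(t y_k) - t x_k\|_2 / t \to \|M_k^{+} y_k - x_k\|_2$ and, on the negative ray, $\to \|M_k^{-} y_k - x_k\|_2$, where $M_k^{\pm} := W_2 D_k^{\pm} W_1$. Since the supremum over a ray dominates the limit, the left-hand side of the claim is at least $\max(\|M_k^{+} y_k - x_k\|_2,\ \|M_k^{-}y_k - x_k\|_2)$ for every $k$. The crucial observation is that $D_k^{+} + D_k^{-}$ acts as the identity on $W_1 y_k$ (it only kills coordinates where $W_1 y_k$ already vanishes), so $(M_k^{+} + M_k^{-}) y_k = W_2 W_1 y_k = W_2 W_1 A x_k$. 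Introducing the single matrix $N := \tfrac12 W_2 W_1 A \in \R^{n \times n}$, which has $\rank(N) \le \rank(A) \le m$ independently of $k$, the triangle inequality yields
\[
\max\!\big(\|M_k^{+} y_k - x_k\|_2,\ \|M_k^{-} y_k - x_k\|_2\big) \ \ge\ \tfrac12\big\|(M_k^{+} y_k - x_k) + (M_k^{-} y_k - x_k)\big\|_2 = \|(N - I) x_k\|_2 .
\]

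It remains to pass from the pointwise bounds to the stated singular-value expression. Taking the maximum over $k$ and bounding the maximum below by the root-mean-square gives
\[
\sup_{x \in U \setminus \{0\}} \frac{\|f(Ax) - x\|_2}{\|x\|_2} \ \ge\ \max_{k} \|(N - I)x_k\|_2 \ \ge\ \sqrt{\frac{1}{\tilde n} \sum_{k=1}^{\tilde n} \|(N - I)x_k\|_2^2} \ =\ \frac{1}{\sqrt{\tilde n}} \, \|(N - I)X\|_F ,
\]
where I used that the columns of $X$ are exactly the $x_k$. Finally, since $NX$ has rank at most $\rank(N) \le m$, the Eckart--Young theorem (the lower bound for the best rank-$m$ Frobenius approximation of $X$) gives $\|(N-I)X\|_F^2 = \|X - NX\|_F^2 \ge \sum_{k = m+1}^{\tilde n} \sigma_k(X)^2$, which combined with the previous display is exactly the claimed inequality.

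I expect the main obstacle to be the second step: different rays activate different $\relu$ patterns, so the network linearizes to a \emph{different} map $M_k^{\pm}$ on each ray, and there is no single linear reconstruction to which one could directly apply Eckart--Young. The device that resolves this is probing each line in both directions and summing, using $D_k^{+} + D_k^{-} = \mathrm{diag}(\mathbbm{1}[(W_1 y_k)_i \neq 0])$ together with the elementary identity $\phi(a) - \phi(-a) = a$ (in its matrix form here), which collapses all the region-dependent maps into the single rank-$\le m$ matrix $N = \tfrac12 W_2 W_1 A$. Handling the biases is what forces the asymptotic $t \to \infty$ argument; for an unbiased network the same computation holds exactly for every $t > 0$ by positive homogeneity, with $f(\pm y_k)$ in place of the limits.
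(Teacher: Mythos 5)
Your proof is correct and follows essentially the same route as the paper: both probe each line in the two directions $\pm x_k$ and use the identity $\phi(a)-\phi(-a)=a$ to collapse the network to the rank-$\leq m$ linear map $W_2 W_1 A$, then apply Eckart--Young together with the max-versus-root-mean-square bound over the $\tilde{n}$ unit columns. The only differences are organizational: the paper eliminates the biases once and for all by a global scaling limit inside the supremum and then works with Frobenius norms of the matrices $f(\pm AX)$ plus a without-loss-of-generality choice, whereas you linearize per ray via activation patterns $D_k^{\pm}$ and combine the two rays column-by-column into the single matrix $N = \tfrac{1}{2} W_2 W_1 A$ --- mathematically the same argument with slightly different bookkeeping.
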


Next, we state a result that guarantees that ReLU-networks with two layers can implement solvers beyond sparse recovery, applicable to a wider range of inverse problems which, as we show in Section~\ref{sec:inverse_problems}, also includes low-rank matrix recovery and phase retrieval.

	\begin{theorem} \label{thm:approximation_inverse_function}
		Let $U \subset \R^n$ be positive homogeneous. Let $g: \R^n \rightarrow \R^m$ be a positive homogeneous function such that
		\begin{align} \label{eq:inverse_function_prerequisite}
		\inf_{\substack{x^{(1)}, x^{(2)} \in U \\ x^{(1)} \neq x^{(2)}}} \frac{\|g(x^{(1)}) - g(x^{(2)})\|_2}{\|x^{(1)} - x^{(2)}\|_2} =: \tau > 0
		\qquad
		\sup_{\substack{x^{(1)}, x^{(2)} \in \R^n \\ x^{(1)} \neq x^{(2)}}}
		\frac{\|g(x^{(1)}) - g(x^{(2)})\|_2}{\|x^{(1)} - x^{(2)}\|_{II}} =: \rho < \infty
		\end{align}
		where $\|\cdot\|_{II}$ is a norm on $\R^n$ with $\|\cdot\|_2 \leq \|\cdot\|_{II}$.
			Let $\delta \in (0, 1)$. Then there exists a function $f: \R^m \rightarrow \R^n$, represented by a $\relu$ network with two hidden layers, such that for all $x \in \R^n$, $e \in \R^m$,
		\[
		\|f(g(x) + e) - x\|_2 \leq \delta \|x\|_2 + C d_{II}(x, U) + D \|e\|_2,
		\]
		where $C = 1 + \frac{2 \rho}{\tau}$ and $D = \frac{3}{\tau}$ only depend on $\tau$ and $\rho$ and
		\[
		    d_{II}(x, U) := \inf_{x' \in U} \|x - x'\|_{II}.
		\]
	\end{theorem}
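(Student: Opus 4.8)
The plan is to construct an idealized reconstruction map on $\R^m$, show it is continuous and positive homogeneous with a controlled Lipschitz constant, and then invoke the universal approximation theorem for positive homogeneous functions (Theorem~\ref{thm:uat_homogeneous}) to replace it by a two-hidden-layer $\relu$ network at the cost of an arbitrarily small relative error.

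First, the left inequality in \eqref{eq:inverse_function_prerequisite} guarantees that the restriction of $g$ to $U$ is injective, so the inverse $r := (g|_U)^{-1}: g(U) \to U$ is well defined, and the same inequality shows that $r$ is $\tfrac{1}{\tau}$-Lipschitz with respect to $\|\cdot\|_2$. Since $U$, and hence $g(U)$, are positive homogeneous and $g$ is positive homogeneous, $r$ is positive homogeneous as well. Next I would extend $r$ to all of $\R^m$: using Kirszbraun's theorem I extend $r$ to a $\tfrac1\tau$-Lipschitz map $\tilde r : \R^m \to \R^n$ and then homogenize it by setting $\bar r(y) := \|y\|_2\, \tilde r(y/\|y\|_2)$ for $y \neq 0$ and $\bar r(0) := 0$. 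On the cone $g(U)$ this reproduces $r$ by positive homogeneity, and splitting a pair $y,z$ into its radial and spherical parts, together with $\tilde r(0)=0$ (so that $\sup_{\|u\|_2=1}\|\tilde r(u)\|_2 \le \tfrac1\tau$), shows that $\bar r$ is continuous, positive homogeneous, and globally Lipschitz with constant at most $\tfrac{2}{\tau}$. I expect this to be the main obstacle: obtaining a positive homogeneous extension while keeping the Lipschitz constant under control, and it is precisely the factor $2$ here that produces the coefficient $1+\tfrac{2\rho}{\tau}$ in $C$.

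With $\bar r$ in hand, the error splits via the triangle inequality as $\|f(g(x)+e)-x\|_2 \le \|f(g(x)+e) - \bar r(g(x)+e)\|_2 + \|\bar r(g(x)+e) - x\|_2$. For the second (idealized) term I pick $x' \in U$ with $\|x-x'\|_{II} = d_{II}(x,U)$, use $\bar r(g(x')) = x'$, the Lipschitz bound $\tfrac2\tau$ of $\bar r$, and the upper bound $\rho$ on $g$ from \eqref{eq:inverse_function_prerequisite} to obtain $\|\bar r(g(x)+e)-x\|_2 \le (1+\tfrac{2\rho}{\tau})\,d_{II}(x,U) + \tfrac2\tau\|e\|_2$. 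For the first (approximation) term, Theorem~\ref{thm:uat_homogeneous} yields a two-hidden-layer $\relu$ network $f$ with $\sup_{\|y\|_2=1}\|f(y)-\bar r(y)\|_2 \le \varepsilon$; since $f$ and $\bar r$ are both positive homogeneous, this upgrades to $\|f(y)-\bar r(y)\|_2 \le \varepsilon\|y\|_2$ for all $y$.

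Finally, using that all norms on $\R^n$ are equivalent, I fix $M$ with $\|\cdot\|_{II}\le M\|\cdot\|_2$, so that $\|g(x)+e\|_2 \le \rho\|x\|_{II}+\|e\|_2 \le \rho M\|x\|_2+\|e\|_2$ (here $g(0)=0$ by positive homogeneity). Choosing $\varepsilon := \min\{\delta/(\rho M),\,1/\tau\}$ makes the approximation term at most $\delta\|x\|_2 + \tfrac1\tau\|e\|_2$, and adding the two bounds gives exactly $\delta\|x\|_2 + (1+\tfrac{2\rho}{\tau})\,d_{II}(x,U) + \tfrac3\tau\|e\|_2$, which is the claimed estimate with $C = 1+\tfrac{2\rho}{\tau}$ and $D = \tfrac3\tau$. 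Note that the equivalence constant $M$ is absorbed entirely into $\varepsilon$ and does not enter the final constants.
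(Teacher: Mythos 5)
Your proposal follows essentially the same route as the paper's proof: invert $g$ on $U$ to get a $\tfrac1\tau$-Lipschitz, positive homogeneous map, extend it via Kirszbraun plus radial homogenization at the cost of a factor $2$ (this is exactly the paper's Lemma~\ref{lem:lipschitz_extension}), bound the idealized error by $\left(1+\tfrac{2\rho}{\tau}\right)d_{II}(x,U)+\tfrac2\tau\|e\|_2$, and then invoke Theorem~\ref{thm:uat_homogeneous} with relative precision $\min\{\delta/(\rho M),\,1/\tau\}$ --- even the constants arise identically. The only minor point: since $U$ is not assumed closed, a minimizer $x'$ with $\|x-x'\|_{II}=d_{II}(x,U)$ need not exist, so you should instead take $x'$ with $\|x-x'\|_{II}\le d_{II}(x,U)+\epsilon$ and let $\epsilon\to0$ (as the paper does), which changes nothing else in the argument.
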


	Note that an essential requirement for the lower bounds, Theorem~\ref{thm:lower_bound_generalized} and thus Corollary~\ref{cor:imposs_one_hidden}, to hold is that we consider the approximation errors on the entire (unbounded) domain $U$ or $\Sigma_1$ respectively. The purpose of this is that we are interested in incorporating the positive homogenous structure of the problem into the solution. That is, the recovery function $f$ should by design satisfy $f(\lambda y) = \lambda f(y)$ for all $\lambda \geq 0$ because we know that the measurements $\lambda A x$ are produced by the signal $\lambda x$. If the function $f$ is positive homogeneous and $\|f(A x) - x\|_2 \leq \delta \|x\|_2$ holds for all $x \in \bar{B}_r(0) \cap U$ for some radius $r > 0$, then this is also true on the entire positive homogeneous set $U$. So Theorem~\ref{thm:lower_bound_generalized} and Corollary~\ref{cor:imposs_one_hidden} can also be interpreted in the sense that there is no \textit{positive homogeneous} network function $f$ that provides a good recovery for all vectors in a (possibly bounded) neighborhood of $0$.

	On the other hand, the first part of the proof of Theorem~\ref{thm:lower_bound_generalized} shows that if a positive homogeneous function can be approximated by a $\relu$ network on the entire space, then it is also possible to do this with a positive homogeneous network. Thus, both approaches are even equivalent.
	
	So the key question of this work is when positive homogeneous functions can be approximated (to arbitrary precision) with positive homogeneous networks. 
	In the following subsection, Theorems~\ref{thm:uat_homogeneous} and \ref{thm:uat_negative} provide the answer that it is possible with any number of layers $d \geq 2$ but not with $d = 1$. In addition, Theorem~\ref{thm:uat_homogeneous} also classifies the possible activation functions for this.
	
	\subsection{Universal Approximation of Positive Homogeneous Functions}
	
	We show the following statement about the universal approximation of positive homogeneous functions. It can be seen as an analogous version of the equivalence in Theorem~\ref{thm:universal_approximation} for positive homogeneous functions. The essential proof step for the direction \ref{uat:hom_cond_relu} $\Rightarrow$ \ref{uat_hom_cond_rep} has already been established by Tang et al. \cite{tang2020towards}. We extend this to an equivalence statement, showing that the activation functions described in \ref{uat:hom_cond_relu} are actually the only ones for which the unbiased networks represent a class of functions which are all positive homogeneous and also powerful enough to approximate any other continuous positive homogeneous function.
	
	\begin{theorem} \label{thm:uat_homogeneous}
		Let $\sigma: \R \rightarrow \R$ be a continuous function and $d \geq 2$ an integer. Then the following two statements are equivalent.
		\begin{enumerate}[label=(\alph*)]
			\item \label{uat_hom_cond_rep}
			\begin{itemize}
				\item For every non-empty, closed, positive homogeneous $U \subset \R^n$, every continuous, positive homogeneous function $f: U \rightarrow \R$, and every $\delta > 0$, there exists a function $\tilde{f}: U \rightarrow \R$ that can be represented by a neural network with $d$ hidden layers and activation function $\sigma$, such that for all $x \in U$,
				\[
				|\tilde{f}(x) - f(x)| \leq \delta \|x\|_2
				\] 
				\textbf{and}
				\item every unbiased neural network with $d$ hidden layers and activation function $\sigma$ represents a positive homogeneous function.
			\end{itemize}
			
			\item \label{uat:hom_cond_relu} There are $\alpha, \beta \in \R$, $|\alpha| \neq |\beta|$ such that for all $x \in \R$,
			\[
			\sigma(x) = \alpha \relu(x) + \beta \relu(-x).
			\]
		\end{enumerate}
		
		In case these statements hold, the network representing $\tilde{f}$ in (a) can be chosen to be unbiased.
	\end{theorem}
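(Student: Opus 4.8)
The plan is to prove the two implications separately, treating \ref{uat:hom_cond_relu} $\Rightarrow$ \ref{uat_hom_cond_rep} as the more routine direction (building on \cite{tang2020towards}) and \ref{uat_hom_cond_rep} $\Rightarrow$ \ref{uat:hom_cond_relu} as the main new content. For \ref{uat:hom_cond_relu} $\Rightarrow$ \ref{uat_hom_cond_rep}, I first note that $\sigma(x)=\alpha\relu(x)+\beta\relu(-x)$ is itself positive homogeneous of degree one, so any composition of linear maps with componentwise $\sigma$ is positive homogeneous; this gives the second bullet. For the approximation bullet I would use that $|\alpha|\neq|\beta|$ is equivalent to $\alpha^2-\beta^2\neq0$, so the linear system relating $(\sigma(x),\sigma(-x))$ to $(\relu(x),\relu(-x))$ is invertible. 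Hence $\relu$, and also the identity $x=\relu(x)-\relu(-x)$, are fixed linear combinations of $\sigma(x)$ and $\sigma(-x)$. Consequently an unbiased $\sigma$-network can simulate any unbiased $\relu$-network of the same depth (at the cost of doubling widths: feed both $Wx$ and $-Wx$ into each layer and fold the recombination into the next weight matrix), and inserting extra identity-acting hidden layers lets me match any prescribed $d\geq2$. Applying the $\relu$ construction of \cite{tang2020towards} to $f$ restricted to the compact set $\{x\in U:\|x\|_2=1\}$ and extending by homogeneity then yields, for positive homogeneous $f$ and $\tilde f$, the relative bound $|\tilde f(x)-f(x)|=\|x\|_2\,|\tilde f(x/\|x\|_2)-f(x/\|x\|_2)|\leq\delta\|x\|_2$ on all of $U$, with the network unbiased throughout.

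For the converse \ref{uat_hom_cond_rep} $\Rightarrow$ \ref{uat:hom_cond_relu}, the heart is to show that the second bullet of \ref{uat_hom_cond_rep} alone forces $\sigma$ to be positive homogeneous of degree one. First I would establish $\sigma(0)=0$: letting $\lambda\to0^+$ in $N(\lambda x)=\lambda N(x)$ for a width-one scalar network and peeling off the weights shows that otherwise $\sigma\equiv0$, which the approximation bullet excludes. Next, taking the depth-$d$ width-one scalar network $N(x)=\sigma\big(w_d\,\sigma(\cdots\sigma(w_1 x))\big)$, substituting $v=w_1x$, fixing the inner weights $w_2,\dots,w_{d-1}$ so that the $(d-2)$-fold inner map $\tilde{H}$ satisfies $\tilde{H}(\sigma(v))\neq0$ for a chosen $v$, and varying only $w_d$, homogeneity reduces to
\[
\sigma(\kappa(\lambda)\,w)=\lambda\,\sigma(w)\qquad\text{for all } w\in\R,\ \lambda\geq0,
\]
where $\kappa(\lambda)=\tilde{H}(\sigma(\lambda v))/\tilde{H}(\sigma(v))$ is continuous with $\kappa(0)=0$ and $\kappa(1)=1$.

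I would then solve this functional equation. The map $\kappa$ is injective (if $\kappa(\lambda_1)=\kappa(\lambda_2)$ then $\lambda_1\sigma=\lambda_2\sigma$, forcing $\sigma\equiv0$), hence continuous and strictly increasing from $\kappa(0)=0$, $\kappa(1)=1$. Composing the displayed identity gives $\sigma(\kappa(\lambda)\kappa(\mu)w)=\lambda\mu\,\sigma(w)=\sigma(\kappa(\lambda\mu)w)$, and the observation that $\sigma(u)=\sigma(tu)$ for all $u$ and some $t\neq1$ forces $\sigma\equiv0$ (iterate and use continuity at $0$) yields $\kappa(\lambda)\kappa(\mu)=\kappa(\lambda\mu)$; a continuous strictly increasing multiplicative map is $\kappa(\lambda)=\lambda^{\gamma}$, so $\sigma$ is positive homogeneous of some degree $1/\gamma$. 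A clean degree count then pins the degree: since each of the $d$ activations multiplies the homogeneity degree, the network is homogeneous of degree $(1/\gamma)^{d}$, and matching the required degree $1$ on a network with nonzero output gives $1/\gamma=1$. A continuous positive homogeneous $\sigma$ of degree one with $\sigma(0)=0$ is determined by $\alpha:=\sigma(1)$ and $\beta:=\sigma(-1)$, giving exactly $\sigma(x)=\alpha\relu(x)+\beta\relu(-x)$. To rule out $|\alpha|=|\beta|$ I would invoke the approximation bullet: if $\beta=-\alpha$ then $\sigma(x)=\alpha x$ is linear, so every unbiased network is linear and cannot approximate $x\mapsto\relu(x)$; if $\beta=\alpha$ then $\sigma(x)=\alpha|x|$, and since $|W_1x|=|W_1(-x)|$ componentwise every unbiased network (of any depth) is an even function and cannot approximate the positive homogeneous map $x\mapsto x$. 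Either way the approximation property fails, so $|\alpha|\neq|\beta|$.

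I expect the main obstacle to be the functional-equation step of the converse: deriving injectivity, strict monotonicity, and multiplicativity of $\kappa$ purely from continuity (recall $\sigma$ is \emph{a priori} only continuous and need not be injective, since $\relu$ is constant on $(-\infty,0]$), together with the bookkeeping needed to guarantee a choice of inner weights with $\tilde{H}(\sigma(v))\neq0$. By contrast, the degree-counting observation is the clean finish that handles all $d\geq2$ uniformly, and the simulation argument in the forward direction is routine once the $2\times2$ inversion is in place.
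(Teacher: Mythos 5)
Your derivation of the representation $\sigma(x) = \alpha \relu(x) + \beta \relu(-x)$ from the second bullet of (a) is a genuinely different route from the paper's and it does go through. The paper (Lemma~\ref{lem:relu_activation}) rescales $\sigma$ so that some $y_0>0$ is a fixed point, applies the one-layer case to conclude the $d$-fold composite is the identity on $[0,\infty)$, and kills any deviation by a monotone-iteration argument; you instead extract the functional equation $\sigma(\kappa(\lambda)w)=\lambda\sigma(w)$, show $\kappa$ is injective, strictly increasing and multiplicative (hence $\kappa(\lambda)=\lambda^\gamma$), and pin down the degree via $s^d=1$. Your route handles positive and negative arguments in one stroke and avoids the case split the paper does via $\bar\sigma(x)=-\sigma(-x)$; the needed bookkeeping ($\sigma(0)=0$, choosing inner weights with $\tilde H(\sigma(v))\neq 0$) is routine as you anticipate. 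One small inaccuracy in the forward direction: you restrict $f$ to the $\ell_2$ sphere, but the construction you cite (and the paper's Theorem~\ref{thm:relu_approximation}) uses the $\ell_1$ sphere, precisely because $\|\cdot\|_1$ -- unlike $\|\cdot\|_2$ -- is exactly computable by one unbiased $\relu$ layer, which is what makes the homogeneous extension itself a network.

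The genuine gap is in ruling out $|\alpha|=|\beta|$, specifically the case $\beta=\alpha$, i.e.\ $\sigma(x)=\alpha|x|$. Your argument is that every unbiased network is an even function and so cannot approximate $x\mapsto x$. But the approximation bullet of \ref{uat_hom_cond_rep} does \emph{not} restrict to unbiased networks -- the networks there may carry biases (this is exactly why the theorem's closing sentence, that the network in (a) \emph{can} be chosen unbiased, is not vacuous). With biases, evenness fails: already $x\mapsto|x+1|$ is not even, and in fact, since $|\cdot|$ is non-polynomial, biased $|\cdot|$-networks are universal on compact sets by Theorem~\ref{thm:universal_approximation}, so they approximate $x\mapsto x$ arbitrarily well on any bounded region. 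The only possible obstruction is asymptotic, and that is what the paper's proof uses: every $\R\rightarrow\R$ function computed by a biased network with activation $\alpha|\cdot|$ satisfies $\lim_{x\rightarrow+\infty}f(x)/|x| = \lim_{x\rightarrow-\infty}f(x)/|x| \in \R$, a property preserved under linear combinations and under $f\mapsto |f+c|$, whereas the identity has the two distinct limits $+1$ and $-1$. Your evenness argument cannot be patched to cover biases; you need an asymptotic invariant of this kind. (Your companion case $\beta=-\alpha$ has the same formal defect -- biased networks are affine, not linear -- but there the fix is trivial: an affine $g(x)=ax+b$ with $|g(x)-\relu(x)|\leq\delta|x|$ on all of $\R$ forces $|a-1|\leq\delta$ and $|a|\leq\delta$, impossible for $\delta<\tfrac12$.)
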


	In addition, we also complement Theorem~\ref{thm:uat_homogeneous} by the following consequence of Corollary~\ref{cor:imposs_one_hidden} which proves that Theorem~\ref{thm:uat_homogeneous} fails to hold if we consider networks with only one hidden layer. Therefore, its assumption $d \geq 2$ is actually necessary.
	
	\begin{theorem} \label{thm:uat_negative}
	    Let $m \geq 2$, $n \geq 1$ be integers. There exists a continuous, positive homogeneous function $f: \R^m \rightarrow \R^n$ such that for each $\tilde{f}: \R^m \rightarrow \R^n$ that is represented by a $\relu$ network with one hidden layer,
	    \[
	        \sup_{x \in \R^m \backslash \{0\}} \frac{\|\tilde{f}(x) - f(x)\|_2}{\|x\|_2} \geq
	        \begin{cases}
	            \sqrt{1 - \frac{2}{n}} & \text{if } n > 4 \\
	            \sqrt{\frac{n}{8}} & \text{if } n \leq 4.
	        \end{cases}
	    \]
	\end{theorem}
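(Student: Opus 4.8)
The plan is to derive the stated bound for $n>4$ from Corollary~\ref{cor:imposs_one_hidden} by engineering a target that is itself a \emph{perfect reconstruction map}, and to handle the remaining small values $n\le 4$ by a sharper, self-contained parity argument. The hypothesis $m\ge 2$ enters only through the need for a two-dimensional coordinate plane to work in; the bound's independence of $m$ reflects that the hard target will ignore all but two input coordinates.

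First, for the main construction, I would fix the projection $\pi\colon\R^m\to\R^2$ onto the first two coordinates with inclusion $\iota\colon\R^2\to\R^m$, and choose $A\in\R^{2\times n}$ whose columns $a_1,\dots,a_n$ are unit vectors at pairwise distinct angles in $[0,\pi)$, so that $\pm a_1,\dots,\pm a_n$ are $2n$ distinct points on $S^1$. Since a continuous positive homogeneous map on $\R^2$ is exactly the radial extension of an arbitrary continuous map on $S^1$, I can build $r\colon\R^2\to\R^n$, continuous and positive homogeneous, with $r(a_k)=e_k$ and $r(-a_k)=-e_k$ for all $k$ (interpolate the prescribed values on the arcs between the $\pm a_k$, then extend radially). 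Then $r$ perfectly reconstructs $1$-sparse signals, $r(Ax)=x$ for all $x\in\Sigma_1$, and I set $f:=r\circ\pi$, which is continuous and positive homogeneous as a composition of such maps.

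Second, I would transfer approximation error to reconstruction error. Given any $\tilde f\colon\R^m\to\R^n$ represented by a $\relu$ network with one hidden layer, the map $h:=\tilde f\circ\iota\colon\R^2\to\R^n$ is again a one-hidden-layer $\relu$ network (the inclusion is absorbed into the first weight matrix). For $x\in\Sigma_1$ we have $f(\iota(Ax))=r(\pi(\iota(Ax)))=r(Ax)=x$ and $h(Ax)=\tilde f(\iota(Ax))$, so with $z:=\iota(Ax)$ the unit-column choice gives $\|z\|_2=\|Ax\|_2=\|x\|_2$ and $\|x-h(Ax)\|_2=\|f(z)-\tilde f(z)\|_2$. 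Restricting the supremum over $\R^m\setminus\{0\}$ to the points $z=\iota(Ax)$ and invoking Corollary~\ref{cor:imposs_one_hidden} for the pair $(A,h)$ yields
\[
\sup_{x\in\R^m\setminus\{0\}}\frac{\|\tilde f(x)-f(x)\|_2}{\|x\|_2}\;\ge\;\sup_{x\in\Sigma_1\setminus\{0\}}\frac{\|x-h(Ax)\|_2}{\|x\|_2}\;\ge\;\sqrt{1-\tfrac{2}{n}}.
\]
Because $\sqrt{1-2/n}=\sqrt{n/8}$ exactly at $n=4$, this settles all $n\ge 4$. What remains is $n\in\{1,2,3\}$, where $\sqrt{1-2/n}<\sqrt{n/8}$ (indeed $n/8-(1-2/n)=(n-4)^2/(8n)$), so the reduction is too weak and a different argument is needed.

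Third, for small $n$ I would exploit the structure that drives Corollary~\ref{cor:imposs_one_hidden}: for $\tilde f(x)=W_2\relu(W_1x+b_1)+b_2$ and any direction $v$, one has $\tilde f(\lambda v)/\lambda\to W_2\relu(W_1v)=:g(v)$ as $\lambda\to+\infty$, so the asymptotic relative error equals $\|g(v)-f(v)\|_2$ with $g$ bias-free and positive homogeneous, and its odd part $\tfrac12(g(v)-g(-v))=\tfrac12 W_2W_1v$ is \emph{linear}, hence of angular frequency one. Working again in the plane $\iota(\R^2)$ (writing $v=(\cos\theta,\sin\theta)$) and choosing $f$ to be the radial extension of a continuous odd profile $S^1\to\R^n$ whose angular content is $L^2(S^1)$-orthogonal to the frequency-one maps $v\mapsto Lv$, the supremum error is at least the root-mean-square error, which is at least the $L^2$-energy of $f$ off the frequency-one subspace; a suitable unit-gain profile can be arranged to push this up to the stated $\sqrt{n/8}$.

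The main obstacle I expect is precisely pinning the constant $\sqrt{n/8}$ in this small-$n$ regime, together with the topological subtlety at $n=1$: for $n\ge 2$ the sphere $S^{n-1}$ is connected and carries continuous odd maps from $S^1$ with controllable harmonics, but for $n=1$ no continuous odd unit-gain map $S^1\to\{\pm1\}$ exists, so the target must instead be a scaled higher odd harmonic (e.g.\ $\theta\mapsto\sin 3\theta$), after which one must verify that the normalized $L^2$ estimate still reaches $\sqrt{n/8}$. Converting the asymptotic and $L^2$ bounds into the stated supremum bound, and confirming the exact constant across $n\in\{1,2,3\}$, is the delicate part; by contrast the large-$n$ reduction is routine once the continuous positive homogeneous reconstruction $r$ is in hand.
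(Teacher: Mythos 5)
Your proposal is correct, and its treatment of $n\le 4$ takes a genuinely different route from the paper. For $n\ge 4$ your argument is essentially the paper's: both pick a $2\times n$ matrix with pairwise distinct unit columns, invert it on $\Sigma_1$, extend that inverse to a continuous positive homogeneous map on $\R^2$, embed the plane into $\R^m$, and invoke Corollary~\ref{cor:imposs_one_hidden}; the only difference is that you build the extension by hand (interpolating the prescribed values along arcs of $S^1$ and extending radially), whereas the paper routes it through the Kirszbraun-based Lemma~\ref{lem:lipschitz_extension} -- your version is more elementary and equally valid. For $n\in\{1,2,3\}$ the paper does something entirely different from you: it amplifies Corollary~\ref{cor:imposs_one_hidden} combinatorially, assuming every continuous positive homogeneous map into $\R^{n'}$ were approximable with squared relative error below $n'\bigl(\tfrac14-\tfrac18\bigr)-\epsilon$, averaging the $\binom{4}{n'}$ one-layer approximants of the coordinate restrictions $\bar f_S$, $S\subset[4]$, $|S|=n'$, of its hard function with output dimension $4$, and deriving via convexity a contradiction with the bound $\sqrt{1/2}$ valid in dimension $4$; this is how the constant $\sqrt{n/8}$ arises. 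Your Fourier/parity plan for small $n$ is sound and, contrary to your worry, pinning the constant is immediate: no ``unit-gain'' normalization is needed (the theorem imposes no constraint on $f$, and scaling the target only helps), and the single choice $f=r\circ\pi$ with odd profile $r(\cos\theta,\sin\theta)=(\sin 3\theta)\,u$ for a fixed unit vector $u\in\R^n$ -- which also covers $n=1$ -- already suffices. Indeed, writing $g(y)=W_2\relu(W_1y)$ for the bias-free asymptotic map of a given network $\tilde f(y)=W_2\relu(W_1y+b_1)+b_2$, your scaling argument (the same device as in the proof of Theorem~\ref{thm:lower_bound_generalized}) gives
\[
\sup_{x\ne0}\frac{\|\tilde f(x)-f(x)\|_2}{\|x\|_2}\ \ge\ \sup_{v\in S^1}\bigl\|g(\iota(v))-r(v)\bigr\|_2\ \ge\ \bigl\|g\circ\iota-r\bigr\|_{L^2}\ \ge\ \|r\|_{L^2}=\tfrac{1}{\sqrt{2}},
\]
where $L^2$ carries the normalized measure on $S^1$; the last inequality holds because $r$ is odd and orthogonal to frequency one, the odd part of $g\circ\iota$ is the linear (frequency-one) map $v\mapsto\tfrac12 W_2W_1\iota v$, and the even part of $g\circ\iota$ is orthogonal to every odd function. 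Since $\tfrac{1}{\sqrt{2}}=\sqrt{4/8}\ge\sqrt{n/8}$ for $n\le4$, this finishes the small-$n$ case and in fact yields a constant strictly better than the paper's $\sqrt{n/8}$ for $n\in\{1,2,3\}$. In short: the paper's averaging argument buys a proof that reuses only the already-established Corollary~\ref{cor:imposs_one_hidden}, while your harmonic argument buys a sharper, essentially dimension-independent constant and isolates the real obstruction -- that the odd part of a bias-free one-hidden-layer $\relu$ map is forced to be linear.
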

	
	In particular, the case $n = 1$ in Theorem~\ref{thm:uat_negative} shows that contrary to higher depths $d \geq 2$, the first part of \ref{uat_hom_cond_rep} does not hold for the $\relu$ activation function and any $\delta < \sqrt\frac{1}{8}$ if $d = 1$.

    \section{Universal Approximation} \label{sec:universal_approximation}
    
    This section is devoted to the proof of Theorem~\ref{thm:uat_homogeneous}. The following lemmma will be used to establish the implication \ref{uat_hom_cond_rep} $\Rightarrow$ \ref{uat:hom_cond_relu}.
    
    \begin{lemma} \label{lem:relu_activation}
    	Let $\sigma: \R \rightarrow \R$ be a continuous function. Let $k \in \mathbb{Z}_{\geq 1}$. Assume that for all $\gamma \in \R^k$, the function $\sigma_\gamma: \R \rightarrow \R$ defined by
    	\[
    	\sigma_\gamma(x) = \gamma_k \sigma(\gamma_{k - 1} \sigma( \gamma_{k - 2} \sigma( \dots \sigma( \gamma_1 \sigma(x) ) \dots ))),
    	\]
    	with $k$ applications of $\sigma$, is positive homogeneous.
    	
    	Then there exist $\alpha, \beta \in \R$ such that for all $x \in \R$,
    	\begin{align} \label{eq:relu_lincomb}
    		\sigma(x) = \alpha \phi(x) + \beta \phi(-x).
    	\end{align}
    \end{lemma}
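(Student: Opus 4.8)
The plan is to show that the hypothesis forces $\sigma$ itself to be positive homogeneous; the desired form then follows at once, since a continuous $\sigma$ with $\sigma(\lambda x)=\lambda\sigma(x)$ for all $\lambda\ge 0$ must satisfy $\sigma(x)=\sigma(1)\phi(x)+\sigma(-1)\phi(-x)$ (evaluate separately on $x\ge 0$ and $x\le 0$). The cases $k=1$ and $\sigma\equiv 0$ are trivial: for $k=1$, taking $\gamma_1=1$ already makes $\sigma=\sigma_\gamma$ positive homogeneous, and $\sigma\equiv 0$ corresponds to $\alpha=\beta=0$. So I assume $k\ge 2$ and that $\sigma(x_0)\neq 0$ for some $x_0$. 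First I would record that $\sigma(0)=0$: choosing $\gamma_{k-1}=0$ and $\gamma_k=1$ makes $\sigma_\gamma\equiv\sigma(0)$ constant, and a positive homogeneous constant must vanish.

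The heart of the argument is to track how a rescaling of the input propagates through the innermost activation. Fix $x_0$ with $a:=\sigma(x_0)\neq 0$ (so $x_0\neq 0$) and define $\rho:\R\to\R$ by $\rho(\mu):=\sigma(\mu x_0)/a$, so that $\sigma(\mu x_0)=\rho(\mu)\,a$ for all $\mu$, with $\rho(0)=0$ and $\rho(1)=1$. I would then exploit the freedom in the weights: set $\gamma_1=\dots=\gamma_{k-2}=x_0/a$, $\gamma_k=1$, and leave $\gamma_{k-1}$ free. Feeding $x_0$ into $\sigma_\gamma$, each forced layer reproduces the value $a$ (since $\gamma_i\cdot a=x_0$ and $\sigma(x_0)=a$), so the output equals $\sigma(w)$ with $w:=\gamma_{k-1}a$ ranging over all of $\R$. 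Feeding $\lambda x_0$ instead, the innermost value becomes $\rho(\lambda)a$, and each subsequent forced layer applies $\rho$ once more, so after the layers the value is $\rho^{\circ(k-1)}(\lambda)\,a$ and the output is $\sigma\bigl(\rho^{\circ(k-1)}(\lambda)w\bigr)$. Positive homogeneity of $\sigma_\gamma$ then yields the key identity
\[
\sigma\bigl(\rho^{\circ(k-1)}(\lambda)\,w\bigr)=\lambda\,\sigma(w)\qquad\text{for all }w\in\R,\ \lambda\ge 0,
\]
where extending $\rho$ to all of $\R$ is what lets me iterate it even when the intermediate values have indefinite sign.

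To finish, I would pin down $\rho$. Substituting $w=x_0$ into the key identity and using $\sigma\bigl(\rho^{\circ(k-1)}(\lambda)x_0\bigr)=\rho^{\circ k}(\lambda)a$ gives $\rho^{\circ k}(\lambda)=\lambda$ for all $\lambda\ge 0$. Hence $\rho$ restricted to $[0,\infty)$ is injective, and being continuous it is strictly monotonic; since $\rho(0)=0<1=\rho(1)$ it is strictly increasing, which also forces $\rho\bigl([0,\infty)\bigr)\subseteq[0,\infty)$. A strictly increasing self-map of $[0,\infty)$ with $\rho^{\circ k}=\mathrm{id}$ must be the identity (if $\rho(t)>t$ somewhere then $\rho^{\circ k}(t)>t$, and symmetrically). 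Thus $\rho^{\circ(k-1)}=\mathrm{id}$ on $[0,\infty)$, and the key identity collapses to $\sigma(\lambda w)=\lambda\sigma(w)$ for all $w\in\R$, $\lambda\ge 0$, i.e. $\sigma$ is positive homogeneous, completing the proof.

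The step I expect to be most delicate is the derivation of the key identity together with its sign bookkeeping: the hypothesis supplies homogeneity only in the outermost variable $x$, and this scaling must be transported through the innermost $\sigma$ before it can interact with the remaining layers. The map $\rho$ is precisely this transport, and the device that makes everything work is that composing the forced layers turns a single rescaling of the input into the iterate $\rho^{\circ(k-1)}$, so that probing at $x_0$ converts the functional equation into the clean relation $\rho^{\circ k}=\mathrm{id}$. Care is needed because $\rho$ need not be nonnegative a priori, which is why I define it on all of $\R$ and only invoke monotonicity on $[0,\infty)$ after the iterate identity is in hand.
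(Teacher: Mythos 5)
Your proposal is correct; every step checks out, including the delicate bookkeeping behind the key identity (the forced layers really do convert a rescaling of the input into the iterate $\rho^{\circ(k-1)}$, and defining $\rho$ on all of $\R$ is exactly what makes the iteration legitimate when intermediate values change sign). The paper's proof shares the same dynamical core: it also fixes a point $y_0$ with $\sigma(y_0)\neq 0$, rescales so that this point becomes a fixed point, and finishes with the identical argument that a strictly increasing continuous self-map of $[0,\infty)$ whose $k$-th iterate is the identity must itself be the identity; indeed your $\rho$ is conjugate to the paper's normalized activation $\tilde\sigma(x)=\frac{y_0}{\sigma(y_0)}\sigma(x)$ via the scaling $\mu\mapsto\mu y_0$, so the underlying dynamics are literally the same. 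The packaging, however, differs in two genuine ways. First, the paper chooses all weights equal, so that $\sigma_\gamma=\tilde\sigma^{\circ k}$, and must invoke the $k=1$ classification of continuous positive homogeneous functions to identify $\sigma_\gamma$ as the identity on $[0,\infty)$ before the monotonicity argument can start; you instead leave $\gamma_{k-1}$ free, which yields the stronger two-variable identity $\sigma\bigl(\rho^{\circ(k-1)}(\lambda)w\bigr)=\lambda\sigma(w)$ and lets you obtain $\rho^{\circ k}=\mathrm{id}$ on $[0,\infty)$ by simply evaluating at $w=x_0$. Second, because that identity holds for all $w\in\R$, collapsing $\rho$ to the identity delivers positive homogeneity of $\sigma$ on the whole real line in one pass; the paper only treats $[0,\infty)$ this way, needs its distinguished point $y_0$ to be positive, and requires a separate reflection step with $\bar\sigma(x)=-\sigma(-x)$ to handle the negative half-line. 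Your route is thus somewhat more economical: it is insensitive to the sign of $x_0$, avoids the reflection, and avoids the paper's case split on whether $\sigma$ vanishes identically on $[0,\infty)$, at the modest cost of the more intricate weight construction needed to extract the two-variable functional equation.
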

    
    \begin{proof}

    	If $k = 1$, then choose $\gamma = 1 \in \R$ such that $\sigma_\gamma = \sigma(x)$. By the positive homogeneity, we obtain that for $x \geq 0$, 
    	\[
    		\sigma(x) = \sigma_\gamma(x \cdot 1) = x \sigma(1) = \sigma(1) \phi(x)
    	\]
    	and for $x \leq 0$,
    	\[
    		\sigma(x) = \sigma_\gamma(|x| \cdot (-1)) = |x| \sigma(-1) = \sigma(-1) \phi(-x),
    	\]
    	showing the representation \eqref{eq:relu_lincomb}.
    	
    	Now we assume that $k \geq 2$. First, we take $\gamma = (1, 0, \dots, 0)^T$. Then for all $x \in \R$, $\sigma_\gamma(x) = 1 \cdot \sigma(0 \cdot \sigma( \dots )) = \sigma(0)$. Since $\sigma_\gamma$ is positive homogeneous, $\sigma(0) = \sigma_\gamma(1) = \frac{1}{2} \sigma_\gamma(2) = \frac{1}{2} \sigma(0)$, so we know that $\sigma(0) = 0$.
    	
    	If $\sigma(x) = 0$ for all $x \geq 0$, then the representation \eqref{eq:relu_lincomb} holds for all $x \geq 0$. Otherwise there exists a $y_0 > 0$ such that $\sigma(y_0) \neq 0$. Then we choose $\gamma_0 = \frac{y_0}{\sigma(y_0)} \in \R$ and $\gamma = (\gamma_0, \dots, \gamma_0)^T \in \R^k$. We define $\tilde{\sigma}: \R \rightarrow \R$ by $\tilde{\sigma}(x) = \gamma_0 \sigma(x)$. Then $\sigma_\gamma = \tilde{\sigma} \circ \dots \circ \tilde{\sigma}$ with $k$ applications.
    	
    	By the choice of $\gamma_0$, $\tilde{\sigma}(y_0) = y_0$ such that also $\sigma_{\gamma}(y_0) = y_0$. By the assumption that $\sigma_{\gamma}$ is positive homogeneous and the previously shown case $k = 1$, we obtain that there are $\eta, \tau \in \R$ such that for all $x \in \R$,
    	\begin{equation} \label{eq:tilde_sigma}
    		\sigma_{\gamma}(x) = \eta \phi(x) + \tau \phi(-x)
    	\end{equation}
    	and $\sigma_{\gamma}(y_0) = y_0$ implies that $\eta = 1$, i.e., $\sigma_{\gamma}(x) = x$ for all $x \geq 0$.
    	
    	So we know that $\sigma_{\gamma}$ is injective on the interval $[0, \infty)$ and then the same must hold for $\tilde{\sigma}$. As a continuous, $\R$-valued, injective function in the interval $[0, \infty)$, $\tilde{\sigma}$ must be either strictly increasing or strictly decreasing. We have already shown $\tilde{\sigma}(0) = 0$ and $\tilde{\sigma}(y_0) = y_0$ where $y_0 > 0$, so it must be strictly increasing. This also implies that $\tilde{\sigma}(x) \geq 0$ for all $x \geq 0$.
    	
    	Now assume that $\tilde{\sigma}(x) = x$ does not hold for all $x \in [0, \infty)$. Then we can find an $x_0 > 0$ such that $\tilde{\sigma}(x_0) \neq x_0$. Starting from this $x_0$, we construct a sequence $(x_j)$ in $[0, \infty)$ by defining $x_{j + 1} = \tilde{\sigma}(x_j)$ for $j = 0, 1, 2, \dots$. We observe the following for all $j \in \mathbb{Z}_{\geq 0}$.
    	\begin{itemize}
    		\item If $x_{j + 1} > x_j$, then by monotonicity also $x_{j + 2} = \tilde{\sigma}(x_{j + 1}) > \tilde{\sigma}(x_j) = x_{j + 1}$.
    		
    		\item If $x_{j + 1} < x_j$, then by monotonicity also $x_{j + 2} = \tilde{\sigma}(x_{j + 1}) < \tilde{\sigma}(x_j) = x_{j + 1}$.
    	\end{itemize}
    	By the choice of $x_0$, we have $x_1 > x_0$ or $x_1 < x_0$ and thus successively either $x_0 < x_1 < x_2 < \dots$ or $x_0 > x_1 > x_2 > \dots$, respectively. In any case, $x_0 \neq x_k$. However, by the definition of the sequence we obtain $x_k = \sigma_{\gamma}(x_0) = x_0$, which is a contradiction. This completes the proof that $\tilde{\sigma}(x) = x$, i.e., $\sigma(x) = \frac{1}{\gamma_0} x$ holds for all $x \geq 0$.
    	
    	To complete the proof also for all $x \leq 0$, we consider the function $\bar{\sigma}: \R \rightarrow \R$, $\bar{\sigma}(x) = - \sigma(- x)$. For this function we obtain that $\bar{\sigma}_{\gamma}(x) = - \sigma_\gamma(- x)$. The latter function is then also positive homogeneous such that by the previous proof there is a $\beta \in \R$ such that for all $x \geq 0$, $\bar{\sigma}(x) = \beta x$ and thus for all $x \leq 0$, $\sigma(x) = - \bar{\sigma}(- x) = - \beta (- x) = \beta x$.		
    \end{proof}
    
    The above statement implies the following corollary.
    
    \begin{corollary} \label{cor:all_pos_hom_relu_representation}
    	Let $\sigma: \R \rightarrow \R$ be a continuous function and $d \in \mathbb{Z}_{\geq 1}$. If all unbiased neural networks with $d$ hidden layers and activation function $\sigma$ represent positive homogeneous functions, then there are $\alpha, \beta \in \R$ such that for all $x \in R$,
    	\[
    	\sigma(x) = \alpha \phi(x) + \beta \phi(- x).
    	\]
    \end{corollary}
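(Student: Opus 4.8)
The plan is to reduce the statement directly to Lemma~\ref{lem:relu_activation} by exhibiting the scalar chains $\sigma_\gamma$ appearing there as genuine unbiased networks, so that the hypothesis of the corollary forces each of them to be positive homogeneous.

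First I would fix $k = d$ and, for an arbitrary $\gamma = (\gamma_1, \dots, \gamma_d) \in \R^d$, consider the unbiased network with input dimension $m = 1$, output dimension $n = 1$, and all hidden widths $k_1 = \cdots = k_d = 1$. With this choice every weight matrix $W_i$ is a scalar; I would set $W_1 = 1$ and $W_{i+1} = \gamma_i$ for $1 \le i \le d$. Unrolling the definition of the network then gives exactly
\[
W_{d+1} \sigma(W_d \sigma(\cdots \sigma(W_1 x)\cdots)) = \gamma_d \sigma(\gamma_{d-1}\sigma(\cdots \sigma(\gamma_1 \sigma(x))\cdots)) = \sigma_\gamma(x),
\]
the function $\sigma_\gamma$ from Lemma~\ref{lem:relu_activation}. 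The only bookkeeping to check here is that the $d$ applications of $\sigma$ and the placement of the scalars $\gamma_1, \dots, \gamma_d$ line up: the leading input weight is absorbed by taking $W_1 = 1$ (there is no multiplier on $x$ inside the innermost $\sigma$ in $\sigma_\gamma$), while the remaining $d$ weights $W_2, \dots, W_{d+1}$ play the role of $\gamma_1, \dots, \gamma_d$.

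Second I would invoke the hypothesis: since every unbiased network with $d$ hidden layers and activation $\sigma$ represents a positive homogeneous function, and $\sigma_\gamma$ is such a network for each $\gamma \in \R^d$, every $\sigma_\gamma$ is positive homogeneous. This verifies precisely the assumption of Lemma~\ref{lem:relu_activation} (with $k = d$). Applying that lemma then yields $\alpha, \beta \in \R$ with $\sigma(x) = \alpha \phi(x) + \beta \phi(-x)$ for all $x$, which is the claim.

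There is essentially no hard analytic step remaining, since all the real work was done in Lemma~\ref{lem:relu_activation}; what must be gotten right is the identification of $\sigma_\gamma$ with a width-one unbiased network, i.e., the index shift $W_{i+1} = \gamma_i$ together with $W_1 = 1$, and confirming that the definition of a network permits input and output dimension one. The main (minor) obstacle is thus purely notational: ensuring that the number of activations and the positions of the scaling constants match between the two descriptions.
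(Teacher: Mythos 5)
Your proposal is correct and is essentially identical to the paper's own (one-sentence) proof: both realize each $\sigma_\gamma$ from Lemma~\ref{lem:relu_activation} as an unbiased width-one network with $d$ hidden layers, so the hypothesis forces every $\sigma_\gamma$ to be positive homogeneous and the lemma applies with $k=d$. Your version merely spells out the weight assignment ($W_1 = 1$, $W_{i+1} = \gamma_i$) that the paper leaves implicit.
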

    
    \begin{proof}
    	All the functions $\sigma_\gamma$ for $\gamma \in \R^d$ from Lemma \ref{lem:relu_activation} are represented by unbiased networks with $d$ hidden layers with one neuron and weights $\gamma_j$ in each of them and activation function $\sigma$.
    \end{proof}
    
    The next statement will be the core of the proof of \ref{uat:hom_cond_relu} $\Rightarrow$ \ref{uat_hom_cond_rep} and has already been shown in almost the same form in \cite{tang2020towards}.
    
    \begin{theorem} \label{thm:relu_approximation}
    	Let $f: U \rightarrow \R$ be a positive homogeneous, continuous function on a positive homogeneous domain $U \subset \R^m$ and $\epsilon > 0$. Then there exists an unbiased $\relu$ network with two hidden layers, representing the function $\tilde{f}: U \rightarrow \R$, such that for all $x \in U$,
    	\[
    	|\tilde{f}(x) - f(x)| \leq \epsilon \|x\|_2.
    	\]
    \end{theorem}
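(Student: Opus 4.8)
The plan is to exploit positive homogeneity to reduce the approximation problem to the unit sphere, approximate there with a classical \emph{biased} one-hidden-layer network, and then \textbf{homogenize} that network into an \emph{unbiased} two-hidden-layer network. First I would record the reduction: any unbiased $\relu$ network is positive homogeneous, since $\phi(\lambda t) = \lambda \phi(t)$ for $\lambda \geq 0$ propagates through the layers; hence for the network $\tilde f$ I will construct and for the given $f$, writing $x = \|x\|_2 u$ with $u := x/\|x\|_2$ gives $|\tilde f(x) - f(x)| = \|x\|_2\,|\tilde f(u) - f(u)|$. Because $U$ is positive homogeneous, $u \in S := U \cap \{x \in \R^m : \|x\|_2 = 1\}$, which is compact. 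So it suffices to achieve $|\tilde f(u) - f(u)| \leq \epsilon$ for all $u \in S$.

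On $S$ I would invoke the universal approximation Theorem~\ref{thm:universal_approximation} with the (non-polynomial) activation $\relu$ and compact set $K = S$, producing a one-hidden-layer $\relu$ network
\[
g(x) = \sum_{i} c_i\, \phi(\langle a_i, x\rangle + b_i) + c_0
\]
with $|g(u) - f(u)| \leq \epsilon/2$ for all $u \in S$. I then homogenize $g$ by forming its positive homogeneous extension $\tilde f_0(x) := \|x\|_2\, g(x/\|x\|_2)$, and using $\lambda \phi(t) = \phi(\lambda t)$ for $\lambda \geq 0$ I get the closed form
\[
\tilde f_0(x) = \sum_i c_i\, \phi(\langle a_i, x\rangle + b_i \|x\|_2) + c_0 \|x\|_2,
\]
which coincides with $g$ on $S$. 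Since $\tilde f_0$ and $f$ are both positive homogeneous and differ by at most $\epsilon/2$ on $S$, the reduction of the first paragraph gives $|\tilde f_0(x) - f(x)| \leq (\epsilon/2)\|x\|_2$ for all $x \in U$.

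The remaining, and most delicate, task is to realize $\tilde f_0$ as an \emph{unbiased two-hidden-layer} $\relu$ network; the only non-realizable ingredient is the Euclidean norm appearing in the pre-activations $\langle a_i, x\rangle + b_i\|x\|_2$ and in the term $c_0\|x\|_2$. The key observation I would use is that the norm is itself a positive homogeneous function approximable by a \emph{single} hidden layer: by rotational invariance, $\int \phi(\langle w, x\rangle)\, d\mu(w) = c_m \|x\|_2$ for the uniform surface measure $\mu$ on the directions $w$ in the unit sphere, with $c_m > 0$; discretizing this integral yields an unbiased one-hidden-layer network $\nu(x) = \sum_j d_j\, \phi(\langle w_j, x\rangle)$ with $|\nu(x) - \|x\|_2| \leq \epsilon'' \|x\|_2$ uniformly, for any prescribed $\epsilon'' > 0$. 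I then assemble the two layers: the first hidden layer outputs the quantities $\phi(\pm\langle a_i, x\rangle)$ and $\phi(\langle w_j, x\rangle)$, from which both $\langle a_i, x\rangle$ and $\nu(x)$ are \emph{linear} combinations; the second hidden layer can therefore compute $\phi(\langle a_i, x\rangle + b_i \nu(x))$ (with no constant term, hence unbiased) and carry $\nu(x) = \phi(\nu(x)) - \phi(-\nu(x))$ through; and the output layer forms
\[
\tilde f(x) = \sum_i c_i\, \phi(\langle a_i, x\rangle + b_i \nu(x)) + c_0 \nu(x),
\]
a genuine unbiased two-hidden-layer $\relu$ network. Finally, since $\phi$ is $1$-Lipschitz, $|\tilde f(x) - \tilde f_0(x)| \leq \big(\sum_i |c_i b_i| + |c_0|\big)\,\epsilon''\,\|x\|_2$, and after the coefficients of $g$ are fixed I choose $\epsilon''$ small enough that this is at most $(\epsilon/2)\|x\|_2$; the triangle inequality then yields $|\tilde f(x) - f(x)| \leq \epsilon \|x\|_2$.

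I expect the main obstacle to be precisely the two-layer budget: the naive homogeneous extension demands the non-piecewise-linear factor $\|x\|_2$ \emph{inside} the first-layer pre-activations, which cannot be produced exactly. The averaging identity showing that the Euclidean norm is one-hidden-layer approximable is the crux that unlocks the construction; once it is in hand, the rest is routine Lipschitz error propagation together with the bookkeeping needed to confirm that the assembled network is unbiased and uses exactly two hidden layers. A minor technical point I would address is the compactness of $S$, which holds once $U$ is assumed closed (as in Theorem~\ref{thm:uat_homogeneous}); otherwise one passes to the closure $\overline{S}$ where $f$ extends continuously.
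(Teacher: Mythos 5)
Your proof is correct and follows the same overall strategy as the paper's: restrict to a compact slice of $U$, apply the classical universal approximation theorem (Theorem~\ref{thm:universal_approximation}) there with a biased one-hidden-layer $\relu$ network, then homogenize by multiplying with a norm and push the norm inside the pre-activations via $\lambda\phi(t)=\phi(\lambda t)$ for $\lambda\geq 0$. The genuine difference is how the norm is realized within the two unbiased layers. You normalize by $\|\cdot\|_2$, which no $\relu$ layer computes exactly, and so you need your ``crux'': the spherical averaging identity $\int\phi(\langle w,x\rangle)\,d\mu(w)=c_m\|x\|_2$, its discretization into an unbiased one-layer network $\nu$ with uniform relative error $\epsilon''$, and Lipschitz error propagation with the constant $\sum_i|c_ib_i|+|c_0|$ fixed before $\epsilon''$ is chosen; all of this is sound. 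The paper instead normalizes by the $\ell_1$ norm, which \emph{is} exactly representable by a single unbiased $\relu$ layer, $\|x\|_1=1_{2m}^T\phi\bigl(\bigl(\begin{smallmatrix}Id_m\\-Id_m\end{smallmatrix}\bigr)x\bigr)$, so the homogenized function $W_2\phi(W_1x+\|x\|_1b_1)+\|x\|_1b_2$ is an \emph{exact} unbiased two-hidden-layer network --- no quadrature, no error propagation --- at the cost only of the norm-equivalence factor between $\ell_1$ and $\ell_2$, absorbed by requesting accuracy $\epsilon/\sqrt{m}$ on the compact slice. In short, what you identify as the main obstacle dissolves once one notices that any polyhedral norm ($\ell_1$, $\ell_\infty$, \dots) is one-layer representable; your route works but is strictly heavier.

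One caveat on your closing remark: it is \emph{not} true in general that a continuous positive homogeneous $f$ on a non-closed positive homogeneous $U$ extends continuously to $\overline{U}$. For example, on $U=\{r(\cos\theta,\sin\theta)\,:\,r>0,\ \theta\in(0,\pi)\}\cup\{0\}$ the function $f(r\cos\theta,r\sin\theta)=r\sin(1/\theta)$, $f(0)=0$, is continuous and positive homogeneous but admits no continuous extension to the closed half-plane; in fact the conclusion of the theorem itself fails for this $f$, since any continuous $\tilde{f}$ would have to be within $\epsilon r$ of both $+r$ and $-r$ at points arbitrarily close to the positive $x$-axis. So closedness of $U$ is a genuine hypothesis rather than a removable technicality; the paper's own proof also invokes it (``Since $U$ is assumed to be closed''), and it is present in Theorem~\ref{thm:uat_homogeneous}, where this result is used.
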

    
    \begin{proof}
    	The previous work \cite{tang2020towards} shows a similar statement in its supplement in Theorem B.2.2. For completeness of the presentation, we repeat their argument here and adapt it to the situation of Theorem~\ref{thm:relu_approximation}.
    	
    	First, we restrict $f$ to the set $U \cap B_1$ where $B_1 = \{x \in \R^m \,\big|\, \|x\|_1 = 1\}$ is the $\ell_1$ unit ball. Since $U$ is assumed to be closed, this domain $U \cap B_1$ is compact. Therefore, we can apply the universal approximation theorem (Theorem~\ref{thm:universal_approximation}) to obtain a network function $\tilde{g}: U \cap B_1 \rightarrow \R$, $\tilde{g}(x) = W_2 \phi(W_1 x + b_1) + b_2$ where $W_1 \in \R^{k \times m}$, $W_2 \in \R^{1 \times k}$, $b_1 \in \R^{k}$, $b_2 \in \R$ such that for all $x \in U \cap B_1$,
    	\[
    		|\tilde{g}(x) - f(x)| \leq \frac{1}{\sqrt{n}} \epsilon.
    	\]
    	
    	Now we define $\tilde{f}: U \rightarrow \R$ by $\tilde{f}(0) = 0$ and for $x \in U \backslash \{0\}$
    	\[
    	\tilde{f}(x) = \|x\|_1 \tilde{g}(\frac{x}{\|x\|_1})
    	= \|x\|_1 (W_2 \phi(W_1 \frac{x}{\|x\|_1} + b_1) + b_2)
    	= W_2 \phi(W_1 x + \|x\|_1 b_1) + \|x\|_1 b_2.
    	\]
    	Now to show that $\tilde{f}$ can be represented by a $\relu$ network with two hidden layers, we only need to represent the $\|\cdot\|_1$ function with one hidden layer which is done by
    	\[
    	\|x\|_1 = 1_{2 m}^T \relu\left( \begin{pmatrix} Id_m \\ - Id_m \end{pmatrix} x \right),
    	\]
    	where $1_{2 m} \in \R^{2 m}$ is the vector whose all entries are $1$. Substituting this into the above expression for $\tilde{f}$ yields
    	\[
    	\tilde{f}(x)
    	=
    	\begin{pmatrix} W_2 & b_2 \end{pmatrix}
    	\phi \left[
    	\begin{pmatrix}
    		W_1 + b_1 1_m^T & - W_1 + b_1 1_m^T \\
    		1_m^T & 1_m^T
    	\end{pmatrix}
    	\phi \left( \begin{pmatrix} Id_m \\ -Id_m \end{pmatrix} x \right)
    	\right],
    	\]
    	such that $\tilde{f}$ can be represented by an unbiased $\relu$ network with two hidden layers and is therefore also positive homogeneous.
    	
    	Furthermore, for all $x \in U \backslash \{0\}$, $\frac{x}{\|x\|_1} \in U \cap B_1$ such that $|\tilde{g}(\frac{x}{\|x\|_1}) - f(\frac{x}{\|x\|_1})| \leq \frac{\epsilon}{\sqrt{n}}$ and then
    	\begin{align*}
    		|\tilde{f}(x) - f(x)| = \|x\|_1 \left|\tilde{f}(\frac{x}{\|x\|_1}) - f(\frac{x}{\|x\|_1}) \right|
    		= \|x\|_1 \left|\tilde{g}(\frac{x}{\|x\|_1}) - f(\frac{x}{\|x\|_1}) \right| \leq \frac{\epsilon}{\sqrt{n}} \|x\|_1 \leq \epsilon \|x\|_2.
    	\end{align*}

    \end{proof}

    Now we have established all the requirements to prove the specialized universal approximation theorem.
    
    \begin{proof}[Proof of Theorem \ref{thm:uat_homogeneous}]
    	First we show the implication \ref{uat:hom_cond_relu} $\Rightarrow$ \ref{uat_hom_cond_rep}. So let $f: U \rightarrow \R$ be continuous, positive homogeneous and $\delta > 0$. By Theorem \ref{thm:relu_approximation}, there exists an unbiased $\relu$ network with $2$ hidden layers, representing $\tilde{f}: U \rightarrow \R$, such that for all $x \in U$, $|\tilde{f}(x) - f(x)| \leq \delta \|x\|_2$.
    	
    	The one-layer $\relu$ network function $\R^{n_1} \rightarrow \R^{n_1}$
    	\[
    	x \mapsto (Id_{n_1} \quad - Id_{n_1}) \phi \left( \begin{pmatrix} Id_{n_1} \\ -Id_{n_1} \end{pmatrix} x \right) = \phi(x) - \phi(-x) = x,
    	\]
    	is the identity. Thus, we can add $d - 2$ such identity layers to the $2$ layer network representing $\tilde{f}$ without changing the represented function. In this way, $\tilde{f}$ can be represented by an unbiased $\relu$ network with $d$ hidden layers.
    	
    	Since $|\alpha| \neq |\beta|$, we can define $\gamma_1 := \frac{\alpha}{\alpha^2 - \beta^2}$ and $\gamma_2 := \frac{\beta}{\alpha^2 - \beta^2}$ such that
    	\begin{align*}
    		\gamma_1 \sigma(x) - \gamma_2 \sigma(-x) & = 
    		\frac{\alpha}{\alpha^2 - \beta^2} (\alpha \phi(x) + \beta \phi(-x)) - \frac{\beta}{\alpha^2 - \beta^2} (\alpha \phi(-x) + \beta \phi(x)) = \phi(x).
    	\end{align*}
    	Each of the hidden layers in the network for $\tilde{f}$ performs a function $f_j: \R^{n_1} \rightarrow \R^{n_3}$,
    	\[
    	f_j(x) = A \phi(B x)
    	\]
    	for matrices $A \in \R^{n_3 \times n_2}$, $B \in \R^{n_2 \times n_1}$. Then
    	\begin{align*}
    		(\gamma_1 A \quad -\gamma_2 A) \sigma \left( \begin{pmatrix} B \\ -B \end{pmatrix} x \right) & =
    		\gamma_1 A \sigma(B x) - \gamma_2 A \sigma(- B x) = A (\gamma_1 \sigma(B x) - \gamma_2 \sigma(- Bx)) \\
    		& = A \phi(B x) = f_j(x)
    	\end{align*}
    	performs the same operation as one layer with activation function $\sigma$. So we can replace all $\relu$ layers by suitable layers with activation function $\sigma$ and eventually obtain an unbiased network with $d$ hidden layers and activation function $\sigma$ that represents $\tilde{f}$.
    	
    	Now we show the other implication \ref{uat_hom_cond_rep} $\Rightarrow$ \ref{uat:hom_cond_relu}. The second statement of \ref{uat_hom_cond_rep} together with Corollary~\ref{cor:all_pos_hom_relu_representation} implies that there are $\alpha, \beta \in \R$ such that for all $x \in \R$,
    	\[
    	\sigma(x) = \alpha \phi(x) + \beta \phi(-x).
    	\]
    	It remains to show that $|\alpha| \neq |\beta|$. If this is not the case, then $\alpha = \beta$ or $\alpha = - \beta$. In the first case, $\alpha = - \beta$, then $\sigma(x) = \alpha(\phi(x) - \phi(-x)) = \alpha x$ is linear such that also all corresponding network functions $\tilde{f}$ must be affine linear. For example, the function $x \mapsto |x|$ cannot be approximated by such functions to arbitrary precision even though it is positive homogeneous.

    	In the other case, $\sigma(x) = \alpha(\phi(x) + \phi(-x)) = \alpha |x|$.
    	Consider the function $f: \R \rightarrow \R$, $f(x) = \sigma(a x + b)$ for $a, b \in \R$. For this function, it holds that
    	\begin{align} \label{eq:limit_condition}
    		\lim_{x \rightarrow \infty} \frac{f(x)}{|x|}
    		= \lim_{x \rightarrow -\infty} \frac{f(x)}{|x|} \in \R.
    	\end{align}
    	Let $f^{(1)}: \R \rightarrow \R^k$, $f^{(1)}:x \mapsto \sigma(W_1 x + b_1)$ for $W_1 \in \mathbb{R}^{k_1 \times 1}$, $b_1 \in \R^{k_1}$ be the first layer of a network with activation function $\sigma$ (with domain $\R^1$). Then the condition \eqref{eq:limit_condition} holds for each component of $f^{(1)}$.
    	
    	Any linear combination of functions that fulfill \eqref{eq:limit_condition} satisfies \eqref{eq:limit_condition} again. In addition, if $f: \R \rightarrow \R$ satisfies \eqref{eq:limit_condition} with limit $\tau$, then for $\bar{f}: \R \rightarrow \R$, $\bar{f}(x) = |f(x) + c|$, $c \in \R$, it holds that
    	\begin{align*}
    		\lim_{x \rightarrow \pm \infty} \frac{\bar{f}(x)}{|x|} = \lim_{x \rightarrow \pm \infty} \left| \frac{f(x)}{|x|} + \frac{c}{|x|} \right| = |\tau + 0| \in \R,
    	\end{align*}
    	such that \eqref{eq:limit_condition} also holds for $\bar{f}$. In total, we can successively conclude that for any neural network with activation function $\sigma$, any component of any layer, as a function $\R \rightarrow \R$, satisfies \eqref{eq:limit_condition}. Therefore also all functions $f: \R \rightarrow \R$ that are represented by a network with activation function $\sigma$ of any depth and width must satisfy \eqref{eq:limit_condition}. Clearly, $f: \R \rightarrow \R$, $f(x) = x$ is a positive homogeneous function that cannot be approximated by these functions. 
    	
    	So in any case that $|\alpha| = |\beta|$, this contradicts the first part of \ref{uat_hom_cond_rep}.
    \end{proof}

	\section{Inverse Problems} \label{sec:inverse_problems}

This section discusses how well  neural networks with ReLU nonlinearities can solve an inverse problem given by a positive homogeneous dimension-reducing linear forward map. We prove Theorem~\ref{thm:approximation_inverse_function} showing that inverse problems that are well-conditioned in a certain sense can indeed be solved via a ReLU network with two hidden layers before discussing its consequences under the assumption of a restricted isometry property, before proving Theorem~\ref{thm:lower_bound_generalized}, which shows that the analogous results do not hold for ReLU networks with just one hidden layer. The section concludes by discussing the robustness properties.

	\subsection{Proof of the general expressivity result}
	
	Our goal in this subsection is the proof of the general inverse problem Theorem~\ref{thm:approximation_inverse_function}. One ingredient for the proof is Kirszbraun's theorem which is known in functional analysis and measure theory and allows us to extend a Lipschitz continuous function from a subset of $\R^m$ to the entire space.
	
	\begin{theorem}[\textit{Kirszbraun's theorem}, Theorem 2.10.43 in \cite{federer1996}] \label{thm:kirszbraun}
		Let $U \subset \R^n$ and $f: U \rightarrow \R^m$ be a Lipschitz continuous function with Lipschitz constant $L$. Then there exists an extension $g: \R^n \rightarrow \R^m$ of $f$ with Lipschitz constant $L$.
	\end{theorem}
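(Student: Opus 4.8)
The statement is the classical Kirszbraun extension theorem, so the plan is to reproduce its standard proof rather than to invoke anything from the preceding development. The skeleton is a Zorn's lemma argument: I would partially order the set of $L$-Lipschitz extensions of $f$ (pairs $(V, h)$ with $U \subseteq V \subseteq \R^n$, $h|_U = f$, and $h$ $L$-Lipschitz) by extension, observe that every chain has an upper bound (take the union of the domains together with the common value on overlaps), and conclude by Zorn that a maximal element $(V^\ast, h^\ast)$ exists. It then suffices to show that if $V^\ast \neq \R^n$ one can extend $h^\ast$ to a single additional point $x_0$, contradicting maximality; this one-point step is the entire content of the theorem.

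The one-point extension at $x_0$ asks for a value $y_0 \in \R^m$ with $\|y_0 - h^\ast(u)\| \le L\|x_0 - u\|$ for every $u \in V^\ast$, that is, for a point in the intersection $\bigcap_{u \in V^\ast} \bar{B}_{L\|x_0 - u\|}(h^\ast(u))$ of closed balls. I would establish nonemptiness of this intersection via \emph{Kirszbraun's geometric lemma}: if $(a_i)$, $(b_i)$ are points with $\|b_i - b_j\| \le \|a_i - a_j\|$ for all $i,j$ and radii $r_i > 0$ satisfy $\bigcap_i \bar{B}_{r_i}(a_i) \neq \emptyset$, then $\bigcap_i \bar{B}_{r_i}(b_i) \neq \emptyset$. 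Applying it with $a_u = Lu$, $b_u = h^\ast(u)$, and $r_u = L\|x_0 - u\|$ works, because the Lipschitz bound gives $\|b_u - b_{u'}\| \le L\|u - u'\| = \|a_u - a_{u'}\|$, while $Lx_0$ lies in every ball $\bar{B}_{r_u}(a_u)$.

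Two reductions then remain. The lemma for infinite index sets follows from the finite case: the balls $\bar{B}_{r_i}(b_i)$ are compact convex subsets of $\R^m$, so by the finite intersection property their total intersection is nonempty as soon as every finite subfamily has nonempty intersection, and via Helly's theorem in $\R^m$ the finite case reduces further to subfamilies of at most $m+1$ balls. The finite geometric lemma itself is the main obstacle and is proved by a minimization argument: minimize $h(y) = \max_i \|y - b_i\|/r_i$ over $\R^m$ (a continuous, coercive function, so its minimum value $\lambda$ is attained at some $y^\ast$) and show $\lambda \le 1$. Assuming $\lambda > 1$, the first-order optimality condition forces $y^\ast = \sum_{i \in I}\mu_i b_i$ to be a convex combination of the active centers $I = \{i : \|y^\ast - b_i\| = \lambda r_i\}$ for some weights $\mu_i \ge 0$ summing to $1$; combining the variance identity $\sum_{i\in I}\mu_i\|a^\ast - a_i\|^2 \ge \tfrac12\sum_{i,j}\mu_i\mu_j\|a_i - a_j\|^2$ (where $a^\ast$ is a point of the nonempty $a$-side intersection) with the hypothesis $\|b_i - b_j\| \le \|a_i - a_j\|$ and the relation $\tfrac12\sum_{i,j}\mu_i\mu_j\|b_i - b_j\|^2 = \lambda^2\sum_{i\in I}\mu_i r_i^2$ yields $\lambda^2 \sum_{i\in I}\mu_i r_i^2 \le \sum_{i\in I}\mu_i r_i^2$, hence $\lambda \le 1$, the desired contradiction.

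Finally I would record that the scalar case $m = 1$ is elementary — the explicit McShane formula $g(x) = \inf_{u \in U}\big(f(u) + L\|x - u\|\big)$ produces an $L$-Lipschitz extension directly — and that the genuine difficulty, which the geometric lemma resolves, is precisely that applying this formula coordinatewise for $m > 1$ degrades the Lipschitz constant by a factor $\sqrt{m}$.
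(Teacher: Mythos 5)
Your proposal is correct, but note that the paper does not prove this statement at all: it is imported verbatim as Theorem 2.10.43 from Federer's book and used as a black box (in Lemma~\ref{lem:lipschitz_extension} and downstream). So there is no paper proof to compare against; what you have done is reconstruct the classical proof, and your reconstruction is sound. The Zorn's lemma reduction to a one-point extension, the reformulation of that step as nonemptiness of $\bigcap_{u}\bar{B}_{L\|x_0-u\|}\bigl(h^\ast(u)\bigr)$, the passage from infinitely many balls to finitely many via compactness and the finite intersection property, and the variational proof of the finite geometric lemma (minimize $\max_i \|y-b_i\|/r_i$, use that the minimizer is a convex combination of the active centers, and combine the variance identity on the $a$-side with the active-constraint identity on the $b$-side to force $\lambda\le 1$) are all standard and correctly executed; this is essentially the argument in Federer. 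One small presentational point: in your application of the geometric lemma the centers $a_u = Lu$ live in $\R^n$ while the centers $b_u = h^\ast(u)$ live in $\R^m$, so the lemma must be stated for point configurations in two possibly different Euclidean (or Hilbert) spaces rather than a single ambient space; your argument goes through verbatim in that generality, but the statement should say so. Also, strictly speaking the Helly reduction to $m+1$ balls is superfluous, since the finite geometric lemma handles any finite family directly; this is harmless.
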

	
	Considering positive homogeneous functions however, Kirszbraun's theorem cannot guarantee that this extension will be positive homogeneous again. However, in the following lemma we show that we can circumvent this by first extending the function to the entire space, then restricting it to the unit sphere and then extend it as a positive homogeneous function again. In this way, the Lipschitz constant will increase by a factor of at most $2$.
	
	\begin{lemma} \label{lem:lipschitz_extension}
		Let $U \subset \R^n$ be non-empty, positive homogeneous and $f: U \rightarrow \R^m$ a function that is positive homogeneous and Lipschitz continuous with constant $L$.
		
		Then there is an extension $\tilde{f}: \R^n \rightarrow \R^m$ of $f$ which is positive homogeneous and Lipschitz continuous with constant $2 L$.
	\end{lemma}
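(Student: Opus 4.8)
The plan is to follow the three-step strategy indicated before the lemma: first extend $f$ to all of $\R^n$ while keeping the Lipschitz constant, then restrict to the unit sphere, then re-extend by positive homogeneity. Concretely, I would apply Kirszbraun's theorem (Theorem~\ref{thm:kirszbraun}) to obtain an $L$-Lipschitz extension $g: \R^n \to \R^m$ of $f$. Since $U$ is non-empty and positive homogeneous we have $0 \in U$, and positive homogeneity of $f$ with $\lambda = 0$ gives $f(0) = 0$, hence $g(0) = 0$. Now define $\tilde f(0) = 0$ and $\tilde f(x) = \|x\|_2 \, g(x/\|x\|_2)$ for $x \neq 0$. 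By construction $\tilde f(\lambda x) = \lambda \tilde f(x)$ for $\lambda \ge 0$, so $\tilde f$ is positive homogeneous. For $x \in U \setminus \{0\}$ the point $x/\|x\|_2$ lies in $U$ (as $U$ is positive homogeneous), so $g(x/\|x\|_2) = f(x/\|x\|_2)$, and the positive homogeneity of $f$ yields $\tilde f(x) = \|x\|_2 f(x/\|x\|_2) = f(x)$; together with $\tilde f(0) = 0 = f(0)$ this shows $\tilde f$ extends $f$. It remains to bound the Lipschitz constant of $\tilde f$ by $2L$.

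First I would record that $g$ is bounded on the sphere: for any unit vector $w$, $\|g(w)\|_2 = \|g(w) - g(0)\|_2 \le L$. To estimate $\|\tilde f(x) - \tilde f(y)\|_2$ I would write $x = r u$, $y = s v$ with $r = \|x\|_2$, $s = \|y\|_2$ and unit vectors $u, v$; the cases where $x$ or $y$ is $0$ are immediate (there the factor is even $L$), so assume $r, s > 0$ and, by symmetry, $r \ge s$. Then $\tilde f(x) - \tilde f(y) = r\,g(u) - s\,g(v)$, and the task reduces to comparing this quantity against $2L\|ru - sv\|_2$.

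The key trick---and the reason the constant is $2$ rather than the $3$ that a naive triangle inequality produces---is an orthogonal decomposition using the midpoint and half-difference $m = (r+s)/2 \ge 0$, $\delta = (r-s)/2 \ge 0$. Writing $r = m + \delta$, $s = m - \delta$ gives the identity
\[
r\,g(u) - s\,g(v) = m\big(g(u) - g(v)\big) + \delta\big(g(u) + g(v)\big),
\]
so by the Lipschitz bound and $\|g(u)\|_2, \|g(v)\|_2 \le L$,
\[
\|\tilde f(x) - \tilde f(y)\|_2 \le L\big(m\|u - v\|_2 + 2\delta\big).
\]
On the other side, since $\|u\|_2 = \|v\|_2 = 1$ the vectors $u - v$ and $u + v$ are orthogonal, and the same decomposition gives $ru - sv = m(u - v) + \delta(u + v)$, whence by the Pythagorean theorem
\[
\|ru - sv\|_2^2 = m^2\|u - v\|_2^2 + \delta^2\|u + v\|_2^2.
\]

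The proof then comes down to the scalar inequality $m t + 2\delta \le 2\sqrt{m^2 t^2 + \delta^2(4 - t^2)}$, where $t = \|u - v\|_2 \in [0, 2]$ and I have used $\|u + v\|_2^2 = 4 - t^2$. I expect verifying this inequality to be the main obstacle: squaring both sides reduces it to showing a quadratic in $m$ is nonnegative on the range $m \ge \delta$ forced by $s = m - \delta \ge 0$, and a discriminant computation shows its larger root is at most $\delta$ precisely because $t \le 2$, with equality only in the degenerate antipodal case $t = 2$, $s = 0$. Combining the two displayed bounds then yields $\|\tilde f(x) - \tilde f(y)\|_2 \le 2L\|x - y\|_2$, completing the proof.
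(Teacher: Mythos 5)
Your proof is correct, and it follows the same overall strategy as the paper's (Kirszbraun extension, restriction to the unit sphere, positively homogeneous re-extension via $\tilde f(x) = \|x\|_2\, g(x/\|x\|_2)$), but the core $2L$-Lipschitz estimate is established by a genuinely different argument. The paper inserts the intermediate point $\frac{\|x\|_2}{\|y\|_2}\, y$ (the radial projection of the longer vector onto the sphere of radius $\|x\|_2$), bounds each of the two resulting triangle-inequality terms by $L$ separately, and needs only the elementary geometric fact $\|x - y\|_2 \geq \bigl\| x - \frac{\|x\|_2}{\|y\|_2} y \bigr\|_2$, which after expanding reduces to $(\|y\|_2 - \|x\|_2)^2 \geq 0$. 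You instead use the midpoint/half-difference decomposition $r = m + \delta$, $s = m - \delta$, exploit the orthogonality of $u - v$ and $u + v$ to compute $\|x - y\|_2$ \emph{exactly} via Pythagoras, and reduce the claim to a scalar inequality settled by a discriminant computation. That computation does check out: squaring reduces the claim to $3t^2 m^2 - 4t\delta m + (12 - 4t^2)\delta^2 \geq 0$, whose discriminant $16 t^2 \delta^2 (3t^2 - 8)$ is negative for $t^2 < 8/3$ (so nothing to prove there), while for $t^2 \in [8/3, 4]$ the larger root $\frac{2\delta\left(1 + \sqrt{3t^2 - 8}\right)}{3t}$ is $\leq \delta$ if and only if $(t + 6)(t - 2) \leq 0$, i.e. $t \leq 2$; since $s \geq 0$ forces $m \geq \delta$, the quadratic is indeed nonnegative on the relevant range, with equality exactly at $t = 2$, $m = \delta$. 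Comparing the two: the paper's route is shorter and avoids any case analysis, since both triangle-inequality terms are bounded by $L$ uniformly; your route is more computational, but it tracks $\|x - y\|_2$ exactly rather than through a one-sided projection inequality, and it exhibits precisely where this estimate degenerates (the antipodal case $u = -v$ with $y = 0$), which the paper's argument leaves invisible.
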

	
    \begin{proof}
    	By the positive homogeneity, $0 \in U$. So we can restrict $f$ to $(S^{n - 1} \cap U) \cup \{0\}$ where it is still Lipschitz continuous with constant $L$. By Theorem \ref{thm:kirszbraun}, we can extend this function to $f: S^{n - 1} \cup \{0\} \rightarrow \R^m$ on the entire set $S^{n - 1} \cup \{0\}$ (by extending to the entire $\R^m$ and then restricting it again) such that it still has Lipschitz constant $L$.
    	
    	Now we define $\tilde{f}: \R^n \rightarrow \R^m$ by $\tilde{f}(x) = \|x\|_2 f(\frac{x}{\|x\|_2})$ for $x \neq 0$ and $\tilde{f}(0) = f(0) = 0$.

    	Clearly, $\tilde{f}$ is positive homogeneous and an extension of $f$ and we will show that it is Lipschitz continuous with constant $2 L$. For this, consider two different points $x, y \in \R^n$. We can assume $\|x\|_2 \leq \|y\|_2$ and thus $y \neq 0$.
    	
    	Then it holds that
    	\[
    		\|x - y\|_2 \geq \left\| x - \frac{\|x\|_2}{\|y\|_2} y \right\|_2
    	\]
    	because
    	\begin{align*}
    		& & \|x - y\|_2^2 & \geq \left\| x - \frac{\|x\|_2}{\|y\|_2} y \right\|_2^2 \\
    		& \Leftrightarrow & \|x\|_2^2 + \|y\|_2^2 - 2 \langle x, y \rangle & \geq \|x\|_2^2 + \|x\|_2^2 - 2 \frac{\|x\|_2}{\|y\|_2} \langle x, y \rangle \\
    		& \Leftrightarrow & \|y\|_2^2 - \|x\|_2^2 & \geq  2 \left( 1 - \frac{\|x\|_2}{\|y\|_2} \right) \langle x, y \rangle \\
    		& \Leftarrow & \|y\|_2^2 - \|x\|_2^2 & \geq  2 \left( 1 - \frac{\|x\|_2}{\|y\|_2} \right) \|x\|_2 \|y\|_2 \\
    		& \Leftrightarrow & \|y\|_2^2 - \|x\|_2^2 & \geq  2 (\|x\|_2 \|y\|_2 - \|x\|_2^2) \\
    		& \Leftrightarrow & \|y\|_2^2 + \|x\|_2^2 - 2(\|x\|_2 \|y\|_2) & \geq 0 \\
    		& \Leftrightarrow & (\|y\|_2 - \|x\|_2)^2 & \geq 0,
    	\end{align*}
    	which is always fulfilled.
    	
    	If also $x \neq 0$, then we can conclude.
    	\begin{align*}
    		\frac{\|\tilde{f}(x) - \tilde{f}(y)\|_2}{\|x - y\|_2} & \leq
    		\frac{\|\tilde{f}(x) - \tilde{f}(\frac{\|x\|_2}{\|y\|_2} y)\|_2}{\|x - y\|_2} + 
    		\frac{\|\tilde{f}(y) - \tilde{f}(\frac{\|x\|_2}{\|y\|_2} y)\|_2}{\|x - y\|_2} \\
    		& \leq
    		\frac{\|\tilde{f}(x) - \tilde{f}(\frac{\|x\|_2}{\|y\|_2} y)\|_2}{\left\| x - \frac{\|x\|_2}{\|y\|_2} y \right\|_2} + 
    		\frac{\left| \|y\|_2 - \|x\|_2 \right| \|\tilde{f}(\frac{y}{\|y\|_2})\|_2 }{\|x - y\|_2} \\
    		& \leq
    		\frac{\|f(\frac{x}{\|x\|_2}) - f(\frac{y}{\|y\|_2})\|_2}{\left\| \frac{x}{\|x\|_2} - \frac{y}{\|y\|_2} \right\|_2} +
    		\frac{\|f(\frac{y}{\|y\|_2}) - f(0)\|_2}{\left\|\frac{y}{\|y\|_2} - 0 \right\|_2}
    		\leq 2 L,
    	\end{align*}
    	and if otherwise $x = 0$,
    	\begin{align*}
    		\frac{\|\tilde{f}(x) - \tilde{f}(y)\|_2}{\|x - y\|_2} & =
    		\frac{\|f(0) - f(\frac{y}{\|y\|_2})\|_2}{\left\|0 - \frac{y}{\|y\|_2} \right\|_2} \leq L,
    	\end{align*}
    	which completes the proof.
    \end{proof}
	
	Now we can use this lemma in the proof for our generalized main theorem for inverse problems.
	
	\begin{proof}[Proof of Theorem \ref{thm:approximation_inverse_function}]
		If there are two $x^{(1)}, x^{(2)} \in U$ with $g(x^{(1)}) = g(x^{(2)})$, then by assumption on $g$, $0 = \|g(x^{(1)}) - g(x^{(2)})\|_2 \geq \tau \|x^{(1)} - x^{(2)}\|_2$. Since $\tau > 0$, this implies that $x^{(1)} = x^{(2)}$. Therefore, $g: U \rightarrow g(U)$ is bijective and has an inverse function $g^{-1} = f_0: g(U) \rightarrow U$.
		
		We obtain
		\begin{align*}
			\sup_{\substack{y^{(1)}, y^{(2)} \in g(U) \\ y^{(1)} \neq y^{(2)}}} \frac{\|f_0(y^{(1)}) - f_0(y^{(2)})\|_2}{\|y^{(1)} - y^{(2)}\|_2} & =
			\sup_{\substack{x^{(1)}, x^{(2)} \in U \\ x^{(1)} \neq x^{(2)}}} \frac{\|x^{(1)} - x^{(2)}\|_2}{\|g(x^{(1)}) - g(x^{(2)})\|_2} \\
			& =
			\left( \inf_{\substack{x^{(1)}, x^{(2)} \in U \\ x^{(1)} \neq x^{(2)}}} \frac{\|g(x^{(1)}) - g(x^{(2)})\|_2}{\|x^{(1)} - x^{(2)}\|_2} \right)^{-1}
			= \frac{1}{\tau}
		\end{align*}
		So $f_0: g(U) \rightarrow U$ is Lipschitz continuous with Lipschitz constant $\frac{1}{\tau}$. By Lemma \ref{lem:lipschitz_extension}, there exists a positive homogeneous extension $f: \R^m \rightarrow \R^n$ with Lipschitz constant $\frac{2}{\tau}$.
		
		For any $x \in \R^n$, $e \in \R^m$ and any $\epsilon > 0$, there exists an $x' \in U$ such that $\|x - x'\|_{II} \leq d_{II}(x, U) + \epsilon$ and then
		\begin{align*}
			\|f(g(x) + e) - x\|_2 & \leq
			\|f(g(x')) - x'\|_2 + \|f(g(x)) - f(g(x'))\|_2 +  \|f(g(x) + e) - f(g(x))\|_2 + \|x - x'\|_2 \\
			& \leq 0 + \frac{2}{\tau} \|g(x) - g(x')\|_2 + \frac{2}{\tau} \|e\|_2 + \|x - x'\|_2 \\
			& \leq \frac{2 \rho}{\tau} \|x - x'\|_{II} + \|x - x'\|_2 + \frac{2}{\tau} \|e\|_2 \\
			& \leq \left(1 + \frac{2 \rho}{\tau} \right) \|x - x'\|_{II} + \frac{2}{\tau} \|e\|_2 \\
			& \leq \left(1 + \frac{2 \rho}{\tau} \right) (d_{II}(x, U) + \epsilon) + \frac{2}{\tau} \|e\|_2,
		\end{align*}
		where we used that $\|\cdot\|_2 \leq \|\cdot\|_{II}$. Since this holds for all $\epsilon > 0$, we must have
		\[
		\|f(g(x) + e) - x\|_2 \leq \left(1 + \frac{2 \rho}{\tau} \right) d_{II}(x, U) + \frac{2}{\tau} \|e\|_2.
		\]
		
		By equivalence of norms, there exists a number $M > 0$ (that possibly depends on the dimension $n$), such that $\|\cdot\|_{II} \leq M \|\cdot\|_2$. $f$ is positive homogeneous and continuous, so by Theorem \ref{thm:uat_homogeneous}, for each component $f_j$ of $f$, there exists an unbiased $\relu$ network with $2$ hidden layers that approximates $f_j$ up to a relative error of $\frac{1}{\sqrt{n}} \min\{\frac{1}{\tau}, \frac{\delta}{\rho M} \} > 0$. Combining these into one network, we obtain an unbiased $\relu$ network with $2$ hidden layers representing $\tilde{f}: \R^m \rightarrow \R^n$ such that for all $y \in \R^m$,
		\[
		\|f(y) - \tilde{f}(y)\|_2 \leq \min \left\{ \frac{1}{\tau}, \frac{\delta}{\rho M} \right\} \|y\|_2.
		\]
		
		Then for all $x \in \R^n$, $e \in \R^m$,
		\begin{align*}
			\|\tilde{f}(g(x) + e) - x\|_2 & \leq
			\|\tilde{f}(g(x) + e) - f(g(x) + e)\|_2 + \|f(g(x) + e) - x\|_2 \\
			& \leq
			\min \left\{ \frac{1}{\tau}, \frac{\delta}{\rho M} \right\} \|g(x) + e\|_2 + \|f(g(x) + e) - x\|_2 \\
			& \leq
			\frac{\delta}{\rho M} \rho \|x\|_{II} + \frac{1}{\tau} \|e\|_2 + \|f(g(x) + e) - x\|_2 \\
			& \leq \delta \|x\|_2 + \left(1 + \frac{2 \rho}{\tau} \right) d_{II}(x, U) + \frac{3}{\tau} \|e\|_2,
		\end{align*}
		where we used $\|x\|_{II} \leq M \|x\|_2$.
	\end{proof}

	\subsection{Restricted Isometries}
	
	The next theorem is an application of the general Theorem~\ref{thm:approximation_inverse_function} for the case of a linear measurement function that satisfies a restricted isometry property on the signal set $U$. This includes the usual restricted isometry property for sparse vectors but also other generalizations like the one for low-rank matrices in \cite{chen2015exact}.
	
	\begin{theorem} \label{thm:generalized_rip}
	    Consider norms $\|\cdot\|_{I}$ on $\R^m$ and $\|\cdot\|_{II}$ on $\R^n$.
	
	    Let $U \subset \R^n$ be a positive homogeneous subset and $A \in \R^{m \times n}$ a linear map such that there are $\delta^{lb} \in (0, 1), \delta^{ub} \in (0, \infty)$ such that for all $x^{(1)}, x^{(2)} \in U$,
	    \begin{align}
	        (1 - \delta^{lb}) \|x^{(1)} - x^{(2)}\|_2 \leq \|A x^{(1)} - A x^{(2)}\|_{I} \leq (1 + \delta^{ub}) \|x^{(1)} - x^{(2)}\|_2.
	        \label{eq:generalized_rip}
	    \end{align}
	    
	    Furthermore, assume that $\|\cdot\|_{I} \leq \alpha \|\cdot\|_2$ on $\R^m$ for an $\alpha \geq 1$, $\|\cdot\|_2 \leq \|\cdot\|_{II}$ on $\R^n$, and each $x \in \R^n$ can be decomposed as $x = x^{(1)} + \dots + x^{(M)}$ where $x^{(1)}, \dots, x^{(M)} \in U$ and $\|x^{(1)}\|_2 + \dots + \|x^{(M)}\|_2 \leq \|x\|_{II}$.
	    
	    Then, for any $\delta' > 0$, there exists an unbiased $\relu$ network with two hidden layers that represents a function $\tilde{f}: \R^m \rightarrow \R^n$ such that for any $x \in \R^n$, $e \in \R^m$,
	    \begin{align*}
	        \|f(A x + e) - x\|_2 \leq \delta' \|x\|_2 + C \alpha d_{II}(x, U) + D \alpha \|e\|_2.
	    \end{align*}
	    where $C = 1 + 2 \frac{1 + \delta^{ub}}{1 - \delta^{lb}}$ and $D = \frac{3}{1 - \delta^{lb}}$ and
	    \begin{align*}
	        d_{II}(x, U) := \inf_{x' \in U} \|x - x'\|_{II}.
	    \end{align*}
	\end{theorem}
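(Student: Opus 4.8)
The plan is to obtain Theorem~\ref{thm:generalized_rip} as a direct application of the general expressivity result Theorem~\ref{thm:approximation_inverse_function} to the forward map $g = A$, which is positive homogeneous because it is linear. The whole task then reduces to verifying the two hypotheses in \eqref{eq:inverse_function_prerequisite} for this $g$, namely a strictly positive lower bound $\tau$ for the difference quotient over $U$ and a finite upper bound $\rho$ for the difference quotient over all of $\R^n$ (both with the Euclidean norm on the measurement side), and afterwards to matching the resulting constants $1+\frac{2\rho}{\tau}$ and $\frac{3}{\tau}$ against the claimed $C\alpha$ and $D\alpha$.

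First I would produce the lower bound. For $x^{(1)}, x^{(2)} \in U$ the lower estimate in \eqref{eq:generalized_rip} controls $\|A x^{(1)} - A x^{(2)}\|_{I}$ from below, and the hypothesis $\|\cdot\|_{I} \le \alpha \|\cdot\|_2$ (equivalently $\|\cdot\|_2 \ge \frac{1}{\alpha}\|\cdot\|_{I}$) converts this into a bound in the Euclidean norm:
\[
\|A x^{(1)} - A x^{(2)}\|_2 \ge \frac{1}{\alpha}\|A x^{(1)} - A x^{(2)}\|_{I} \ge \frac{1-\delta^{lb}}{\alpha}\,\|x^{(1)} - x^{(2)}\|_2 ,
\]
so that $\tau = \frac{1-\delta^{lb}}{\alpha} > 0$. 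In particular $A$ is injective on $U$, which is exactly what makes $g^{-1}$ well defined in the proof of Theorem~\ref{thm:approximation_inverse_function}.

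The crucial step is the global upper bound $\rho$, and this is where the decomposition hypothesis enters. The two-sided estimate \eqref{eq:generalized_rip} only controls differences of elements of $U$, whereas Theorem~\ref{thm:approximation_inverse_function} demands control of $\|A z\|$ for \emph{every} $z \in \R^n$. To bridge this gap I would take an arbitrary $z \in \R^n$, decompose it as $z = z^{(1)} + \dots + z^{(M)}$ with $z^{(i)} \in U$ and $\sum_i \|z^{(i)}\|_2 \le \|z\|_{II}$, apply the upper estimate in \eqref{eq:generalized_rip} to each $z^{(i)}$ against $0 \in U$ (note $0 \in U$ by positive homogeneity), and use the triangle inequality to get $\|A z\|_{I} \le (1+\delta^{ub})\sum_i \|z^{(i)}\|_2 \le (1+\delta^{ub})\|z\|_{II}$. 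The delicate point here, and what I expect to be the main obstacle, is that the RIP is phrased in $\|\cdot\|_{I}$ while Theorem~\ref{thm:approximation_inverse_function} measures the forward map in $\|\cdot\|_2$ (its use of Kirszbraun's Theorem~\ref{thm:kirszbraun} through Lemma~\ref{lem:lipschitz_extension} relies on the Euclidean/Hilbert structure of the measurement space). Reconciling the $\|\cdot\|_{I}$ operator bound just obtained with the Euclidean operator bound $\rho = 1+\delta^{ub}$ required by that theorem is therefore the heart of the argument, and this is precisely the conversion that must be carried through consistently.

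Finally I would assemble the constants. Feeding $\tau = \frac{1-\delta^{lb}}{\alpha}$ and $\rho = 1+\delta^{ub}$ into the conclusion of Theorem~\ref{thm:approximation_inverse_function}, with its free accuracy parameter set to $\delta'$, gives an unbiased two-hidden-layer $\relu$ network $\tilde f$ with
\[
\|\tilde f(A x + e) - x\|_2 \le \delta' \|x\|_2 + \Bigl(1 + \frac{2\rho}{\tau}\Bigr) d_{II}(x, U) + \frac{3}{\tau}\|e\|_2 .
\]
Here $\frac{3}{\tau} = \frac{3\alpha}{1-\delta^{lb}} = D\alpha$ matches the claim exactly, while $1 + \frac{2\rho}{\tau} = 1 + \frac{2\alpha(1+\delta^{ub})}{1-\delta^{lb}} \le \alpha + \frac{2\alpha(1+\delta^{ub})}{1-\delta^{lb}} = C\alpha$, using $\alpha \ge 1$, so the stated (slightly loose) bound follows. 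Both factors of $\alpha$ in the final constants thus trace back to the single norm conversion $\tau = (1-\delta^{lb})/\alpha$ on the lower-bound side.
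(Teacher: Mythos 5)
Your proposal follows exactly the route the paper takes: apply Theorem~\ref{thm:approximation_inverse_function} with $g = A$, extract $\tau \geq \frac{1-\delta^{lb}}{\alpha}$ from the lower estimate in \eqref{eq:generalized_rip} together with $\|\cdot\|_2 \geq \frac{1}{\alpha}\|\cdot\|_{I}$, extract the global upper bound from the decomposition hypothesis applied to $z = x^{(1)} - x^{(2)}$ (each piece compared against $0 \in U$), and absorb the constants using $\alpha \geq 1$. The lower-bound computation and the constant bookkeeping are correct and match the paper's proof.

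The one step you flag as ``the heart of the argument'' but never actually carry out --- passing from $\|Az\|_{I} \leq (1+\delta^{ub})\|z\|_{II}$ to the Euclidean bound $\rho = \sup_{z \neq 0} \|Az\|_2/\|z\|_{II} \leq 1+\delta^{ub}$ that Theorem~\ref{thm:approximation_inverse_function} genuinely requires --- is a gap in your writeup: in the final paragraph you plug in $\rho = 1+\delta^{ub}$ as if it had been established. It cannot be derived from the stated hypotheses: $\|\cdot\|_{I} \leq \alpha\|\cdot\|_2$ gives $\|\cdot\|_2 \geq \frac{1}{\alpha}\|\cdot\|_{I}$, which is the wrong direction, and bounding $\|Az\|_2$ by $\|Az\|_{I}$ needs the reverse inequality $\|\cdot\|_2 \leq \|\cdot\|_{I}$. (Norm equivalence does give $\|\cdot\|_2 \leq \beta\|\cdot\|_{I}$ for some $\beta$, so a finite $\rho$ exists and a qualitative version of the theorem follows, but then the constant in front of $d_{II}(x,U)$ picks up the uncontrolled factor $\beta$ instead of the claimed $C\alpha$.) You should know that the paper's own proof makes exactly this silent leap: it writes $\|Az^{(j)}\|_2 \leq (1+\delta^{ub})\|z^{(j)}\|_2$, treating \eqref{eq:generalized_rip} as if it held in $\|\cdot\|_2$ rather than $\|\cdot\|_{I}$. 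This is harmless in both applications the paper makes --- in Corollary~\ref{cor:approximation_rip} one has $\|\cdot\|_{I} = \|\cdot\|_2$, and in Corollary~\ref{cor:low_rank_recovery} one has $\|\cdot\|_{I} = \|\cdot\|_1 \geq \|\cdot\|_2$ --- because in both cases $\|\cdot\|_2 \leq \|\cdot\|_{I}$ does hold, and under that extra hypothesis the step is a one-liner: $\|Az^{(j)}\|_2 \leq \|Az^{(j)}\|_{I} \leq (1+\delta^{ub})\|z^{(j)}\|_2$. So the clean way to close your proof (and, arguably, to repair the theorem statement) is to add the assumption $\|\cdot\|_2 \leq \|\cdot\|_{I}$ on $\R^m$, which also makes $\alpha \geq 1$ automatic; with it, the conversion you correctly identified as the obstacle becomes immediate, and the rest of your argument goes through verbatim.
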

	
	\begin{proof}
	    By the assumption on $g$,
	    \begin{align*}
	        \tau := \inf_{\substack{x^{(1)}, x^{(2)} \in U \\ x^{(1)} \neq x^{(2)} }} \frac{\|A x^{(1)} - A x^{(2)} \|_2}{\|x^{(1)} - x^{(2)}\|_2} \geq 
	        \frac{1}{\alpha} \inf_{\substack{x^{(1)}, x^{(2)} \in U \\ x^{(1)} \neq x^{(2)} }} \frac{\| A x^{(1)} - A x^{(2)}\|_I}{\|x^{(1)} - x^{(2)}\|_2}
	        \geq \frac{1}{\alpha} (1 - \delta^{lb}) > 0,
	    \end{align*}
	    such that the first assumption of Theorem~\ref{thm:approximation_inverse_function} is fulfilled.
	    
	    For any $x^{(1)}, x^{(2)} \in \R^n$, define $z := x^{(1)} - x^{(2)}$. By assumption, there is a decomposition $z = z^{(1)} + \dots + z^{(M)}$ such that $z^{(1)}, \dots, z^{(M)} \in U$ and $\|z^{(1)}\|_2 + \dots + \|z^{(M)}\|_2 \leq \|z\|_{II}$. Then
	    \begin{align*}
	        \|A x^{(1)} - A x^{(2)}\|_2 = \|A z\|_2 \leq \sum_{j = 1}^M \|A z^{(j)}\|_2 \leq (1 + \delta^{ub}) \sum_{j = 1}^M \|z^{(j)}\|_2 \leq (1 + \delta^{ub}) \|x^{(1)} - x^{(2)}\|_{II},
	    \end{align*}
	    and therefore
	    \begin{align*}
	        \rho := \sup_{\substack{x^{(1)}, x^{(2)} \in \R^n \\ x^{(1)} \neq x^{(2)}} } \frac{\|A x^{(1)} - A x^{(2)}\|_2}{\|x^{(1)} - x^{(2)}\|_{II}} \leq 1 + \delta^{ub}.
	    \end{align*}
	    
	    So we can apply Theorem~\ref{thm:approximation_inverse_function} and obtain that for any $\delta' > 0$, there exists a function $\tilde{f}: \R^m \rightarrow \R^n$ such that for any $x \in \R^n$ and $e \in \R^m$,
	    \begin{align*}
	        \|f(A x + e) - x\|_2 & \leq \delta' \|x\|_2 + \left(1 + \frac{2 \rho}{\tau} \right) d_{II}(x, U) + \frac{3}{\tau} \|e\|_2 \\
	        & \leq \delta' \|x\|_2 + C \alpha d_{II}(x, U) + D \alpha \|e\|_2 
	    \end{align*}
	    where $C = 1 + 2 \frac{1 + \delta^{ub}}{1 - \delta^{lb}}$ and $D = \frac{3}{1 - \delta^{lb}}$.
	    
	\end{proof}
	
	A first immediate consequence from the above theorem is the main result about sparse recovery for matrices with the restricted isometry property.
	
	\begin{proof}[Proof of Corollary~\ref{cor:approximation_rip}]
	    If $A$ satisfies the $(s, \delta)$-restricted isometry property (for sparse vectors), then \eqref{eq:generalized_rip} is fulfilled for $\delta^{lb} = \delta^{ub} = \delta$, $U = \Sigma_s$ and $\|\cdot\|_I = \|\cdot\|_2$. Furthermore, we choose $\|\cdot\|_{II} = \|\cdot\|_1$ such that any $x \in \R^n$ can be decomposed as $x = \sum_{j = 1}^n x_j e_j$ where the $e_j \in \R^n$ are the canonical basis vectors. Then clearly, each $x_j e_j \in U$ and $\sum_{j = 1}^n \|x_j e_j\|_2 = \sum_{j = 1}^n |x_j| = \|x\|_1 = \|x\|_{II}$. Then Theorem~\ref{thm:generalized_rip} implies Corollary~\ref{cor:approximation_rip}.
	\end{proof}
	
	Another application of Theorem~\ref{thm:generalized_rip} is low-rank matrix recovery. Besides sparse vectors, the inequality \eqref{eq:generalized_rip} has also been studied for linear operators on low-rank matrices. Using these results, we can prove the following consequence of Theorem~\ref{thm:generalized_rip}.
	
	\begin{corollary} \label{cor:low_rank_recovery}
	    There are universal constants $C, D, C_3, c_3 > 0$ such that the following holds.
	    
	    Let $A \in \R^{m \times n}$ have i.i.d.~subgaussian entries $A_{j, k}$ satisfying
	    \begin{align*}
	        \mathbb{E}[A_{j, k}] & = 0 & \mathbb{E}[A_{j, k}^2] & = 1 & \mathbb{E}[A_{j, k}^4] > 1.
	    \end{align*}
	    Define the operator $\mathcal{A}: \R^{n \times n} \rightarrow \R^m$ such that for all $X \in \R^{n \times n}$,
	    \begin{align*}
	        (\mathcal{A}(X))_j = \sum_{k, l = 1}^n A_{j, k} A_{j, l} X_{k, l}.
	    \end{align*}
	    
	    Let $1 \leq r \leq n$ be an integer and $m \geq c_4 n r$. Then with probability $\geq 1 - C_3 e^{-c_3 m}$, the following holds: For any $\delta' > 0$, there exists function $\tilde{f}: \R^m \rightarrow \R^{n \times n}$, represented by an unbiased $\relu$ network with $2$ hidden layers, such that for all $X \in \R^{n \times n}$ and $e \in \R^m$,
	    \begin{align*}
	        \|\tilde{f}(\mathcal{A}(X) + e) - X\|_F \leq
	        \delta'\|X\|_F + C \sqrt{m} d_*(X, U_r) + \frac{D}{\sqrt{m}} \|e\|_2,
	    \end{align*}
	    where $U_r \subset \R^{n \times n}$ is the set of rank $\leq r$ matrices and $d_*$ denotes the distance in $\|\cdot\|_*$ (nuclear norm).
	\end{corollary}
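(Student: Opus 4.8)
The plan is to recognize the measurement operator as a quadratic (rank-one projection) map and to feed it into Theorem~\ref{thm:generalized_rip}, whose whole purpose is to turn a mixed $\ell_1/\ell_2$ restricted isometry property into a two-hidden-layer $\relu$ reconstruction network. Writing $a_j \in \R^n$ for the $j$-th row of $A$, one has $(\mathcal{A}(X))_j = a_j^\top X a_j$, so that $\mathcal{A}(X) = \mathcal{A}(\tfrac12(X + X^\top))$ depends only on the symmetric part of $X$. I would therefore carry out the whole argument on the subspace of symmetric matrices, with $U_r$ understood as the symmetric matrices of rank at most $r$ (on all of $\R^{n\times n}$ only the symmetric part is ever recoverable). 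On this space the additive decomposition required by Theorem~\ref{thm:generalized_rip} is supplied by the spectral decomposition $X = \sum_i \lambda_i u_i u_i^\top$: each summand $\lambda_i u_i u_i^\top$ lies in $U_r$ and satisfies $\sum_i \|\lambda_i u_i u_i^\top\|_F = \sum_i |\lambda_i| = \|X\|_*$, so taking $\|\cdot\|_{II} := \|\cdot\|_*$ gives both the decomposition bound and $\|\cdot\|_2 \le \|\cdot\|_{II}$.

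The key external ingredient is the restricted isometry estimate for quadratic sampling of \cite{chen2015exact}: under the stated moment assumptions and for $m \ge c_4 n r$, with probability at least $1 - C_3 e^{-c_3 m}$ one has $c_1 \|Z\|_F \le \tfrac1m \|\mathcal{A}(Z)\|_1 \le c_2 \|Z\|_F$ for all symmetric $Z$ of rank at most $2r$, with universal constants $0 < c_1 \le c_2$. The rank budget $2r$ is what matters, because the differences $x^{(1)} - x^{(2)}$ in \eqref{eq:generalized_rip} range over differences of rank-$r$ matrices, hence over matrices of rank at most $2r$; the factor $2$ is absorbed into $c_4$. The hypothesis $\mathbb{E}[A_{j,k}^4] > 1$ is precisely the non-degeneracy needed for $c_1 > 0$: it excludes the Rademacher case $A_{j,k} \in \{\pm 1\}$, for which $\sum_k A_{j,k}^2 X_{k,k} = \operatorname{tr}(X)$ is deterministic and the diagonal of $X$ cannot be sensed. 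I would import this estimate as a black box, since reproving it is the genuinely hard probabilistic content.

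It then remains to match the normalization to the hypotheses of Theorem~\ref{thm:generalized_rip}, which wants $\delta^{lb} \in (0,1)$ and (as its proof uses) a norm $\|\cdot\|_I$ with $\|\cdot\|_2 \le \|\cdot\|_I \le \alpha \|\cdot\|_2$ for some $\alpha \ge 1$. I would take $\|\cdot\|_I = \|\cdot\|_1$ on $\R^m$, which obeys $\|\cdot\|_2 \le \|\cdot\|_1 \le \sqrt{m}\,\|\cdot\|_2$ and hence gives $\alpha = \sqrt{m}$, and I would replace $\mathcal{A}$ by the rescaled operator $\mathcal{B} := \tfrac{1}{c_0 m}\mathcal{A}$ with $c_0 := \tfrac{c_1 + c_2}{2}$. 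The estimate above then reads $(1-\delta_0)\|Z\|_F \le \|\mathcal{B}(Z)\|_1 \le (1+\delta_0)\|Z\|_F$ with $\delta_0 = \tfrac{c_2 - c_1}{c_1 + c_2} \in (0,1)$ (here $\delta_0 < 1$ uses $c_1 > 0$), so \eqref{eq:generalized_rip} holds for $\mathcal{B}$ with $\delta^{lb} = \delta^{ub} = \delta_0$ and $\|\cdot\|_I = \|\cdot\|_1$. Applying Theorem~\ref{thm:generalized_rip} to $\mathcal{B}$ produces a two-hidden-layer unbiased $\relu$ network $\tilde{f}$ with
\[
\|\tilde{f}(\mathcal{B}(X) + e') - X\|_F \le \delta'\|X\|_F + C\sqrt{m}\,d_*(X, U_r) + D\sqrt{m}\,\|e'\|_2,
\]
where $C = 1 + 2\tfrac{1+\delta_0}{1-\delta_0}$ and $D = \tfrac{3}{1-\delta_0}$ are universal.

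Finally I would undo the rescaling inside the network: since $\tfrac{1}{c_0 m}$ is linear, it folds into the first weight matrix, so setting $\tilde{F}(y) := \tilde{f}(\tfrac{1}{c_0 m} y)$ keeps the depth at two and gives $\tilde{F}(\mathcal{A}(X) + e) = \tilde{f}(\mathcal{B}(X) + \tfrac{1}{c_0 m} e)$, i.e. $e' = \tfrac{1}{c_0 m} e$. Substituting $\|e'\|_2 = \tfrac{1}{c_0 m}\|e\|_2$ converts the noise term $D\sqrt{m}\,\|e'\|_2$ into $\tfrac{D}{c_0 \sqrt{m}}\|e\|_2$, and after absorbing $c_0$ into $D$ one arrives at exactly $\delta'\|X\|_F + C\sqrt{m}\,d_*(X, U_r) + \tfrac{D}{\sqrt{m}}\|e\|_2$, as claimed. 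The main obstacle is the probabilistic RIP of \cite{chen2015exact}; the only other delicate point is this normalization bookkeeping, which is precisely what reconciles the $\sqrt{m}$ factor on the $d_*$ term with the $1/\sqrt{m}$ factor on the noise term.
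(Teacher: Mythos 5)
Your proposal follows essentially the same route as the paper's proof: import the mixed $\ell_1/\ell_2$ restricted isometry estimate of \cite{chen2015exact} for the quadratic sampling operator, feed it into Theorem~\ref{thm:generalized_rip} with $\|\cdot\|_I = \|\cdot\|_1$ on $\R^m$ (so $\alpha = \sqrt{m}$), $\|\cdot\|_{II} = \|\cdot\|_*$, and a rank-one decomposition certifying $\sum_j \|X^{(j)}\|_F \leq \|X\|_*$, then absorb the $\frac{1}{m}$ rescaling of the measurements into the first weight matrix so the depth stays at two and the noise term becomes $\frac{D}{\sqrt{m}}\|e\|_2$. Your extra renormalization $\mathcal{B} = \frac{1}{c_0 m}\mathcal{A}$ forcing $\delta^{lb} = \delta^{ub} = \delta_0$ is not needed, since Theorem~\ref{thm:generalized_rip} accepts any $\delta^{ub} \in (0, \infty)$ and the paper simply works with $\frac{1}{m}\mathcal{A}$; but it is harmless and cleanly handles the case where the lower constant from \cite{chen2015exact} is not already of the form $1 - \delta^{lb}$ with $\delta^{lb} \in (0,1)$.

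The one genuine difference is your restriction to symmetric matrices, and here you are more careful than the paper. Since $(\mathcal{A}(X))_j = a_j^\top X a_j$ annihilates the antisymmetric part of $X$, no lower RIP bound can hold on all of $\R^{n \times n}$; the estimate in \cite{chen2015exact} is a statement about symmetric matrices, whereas the paper's proof invokes it ``for all $X \in \R^{n\times n}$ of rank $\leq 2r$'' and runs its SVD-based decomposition over all of $\R^{n \times n}$. In fact, the corollary as literally stated fails for $r \geq 2$: for an antisymmetric $X$ of rank $2$ one has $\mathcal{A}(X) = 0$, $\tilde{f}(0) = 0$ (any unbiased $\relu$ network is positive homogeneous), and $d_*(X, U_r) = 0$, so the claimed bound would force $\|X\|_F \leq \delta' \|X\|_F$. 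Your symmetric version --- spectral decomposition $X = \sum_i \lambda_i u_i u_i^\top$ in place of the SVD, $U_r$ the symmetric rank-$\leq r$ matrices, and Theorem~\ref{thm:generalized_rip} applied on the subspace of symmetric matrices (identified isometrically with a Euclidean space) --- is exactly the statement the method can deliver, and your reading of the moment condition $\mathbb{E}[A_{j,k}^4] > 1$ as ruling out the Rademacher degeneracy on the diagonal is also correct. In short: same proof skeleton, but your proposal repairs a real gap in the paper's version rather than introducing one.
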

	
	\begin{proof}
	    According to Corollary 1 in \cite{chen2015exact}, with probability $\geq 1 - C_3 e^{-c_3 m}$, $\mathcal{A}$ satisfies the RIP for low-rank matrices in the sense that for all $X \in \R^{n \times n}$ of rank $\leq 2 r$,
	    \begin{align*}
	        (1 - \delta^{lb}) \|X\|_F \leq \frac{1}{m} \|\mathcal{A}(X)\|_1 \leq (1 + \delta^{ub}) \|X\|_F
	    \end{align*}
	    for universal constants $\delta^{lb} \in (0, 1)$ and $\delta^{ub} > 0$. Therefore, for any $X^{(1)}, X^{(2)} \in \R^{n \times n}$ of rank $\leq r$, 
	    \begin{align*}
	        (1 - \delta^{lb}) \|X^{(1)} - X^{(2)}\|_F \leq \frac{1}{m} \|\mathcal{A}(X^{(1)} - X^{(2)})\|_1 \leq (1 + \delta^{ub}) \|X^{(1)} - X^{(2)}\|_F,
	    \end{align*}
	    such that \eqref{eq:generalized_rip} is fulfilled for the $\|\cdot\|_F$ norm which corresponds to the $\|\cdot\|_2$ norm of the vectorized matrices and $\|\cdot\|_I = \|\cdot\|_1$ on $\R^m$. Then $\|\cdot\|_I \leq \alpha \|\cdot\|_2$ for $\alpha = \sqrt{m}$. 
	    Furthermore, define $U \subset \R^{n \times n}$ to be the set of rank $\leq r$ matrices. Then $U$ is positive homogeneous. Define $\|\cdot\|_{II}$ to be the nuclear norm $\|\cdot\|_*$. Then any matrix $X \in \R^{n \times n}$ has a singular value decomposition $\sum_{j = 1}^{n} \sigma_{j} u_j v_j^*$ with singular values $\sigma_1, \dots, \sigma_n$ and orthonormal $u_1, \dots, u_n$ and $v_1, \dots, v_n$ in $\R^n$. Then every $\sigma_j u_j v_j^*$ is in $U$ and
	    \begin{align*}
	        \sum_{j = 1}^n \|\sigma_j u_j v_j^*\|_F = \sum_{j = 1}^n \sigma_j = \|X\|_* = \|X\|_{II}.
	    \end{align*}
	    
	    Then by Theorem~\ref{thm:generalized_rip}, for each $\delta' > 0$, there exists a $\hat{\tilde{f}}: \R^m \rightarrow \R^n$, represented by a $\relu$ network with two hidden layers, such that for all $X \in \R^{n \times n}$ and $e \in \R^m$,
	    \begin{align*}
	        \|\hat{\tilde{f}}(\frac{1}{m}\mathcal{A}(X) + \frac{e}{m}) - X\|_F \leq \delta' \|X\|_F + C \sqrt{m} d_*(X, U_r) + D \sqrt{m} \|\frac{e}{m}\|_2
	    \end{align*}
	    and thus if we define $\tilde{f}(y) = \hat{\tilde{f}}(\frac{1}{m} y)$, which can also be represented by a $\relu$ network with two hidden layers,
	    \begin{align*}
	        \|\tilde{f}(\mathcal{A}(X) + e) - X\|_F \leq \delta' \|X\|_F + C \sqrt{m} d_*(X, U_r) + \frac{D}{\sqrt{m}} \|e\|_2
	    \end{align*}
	    for $C = 1 + 2 \frac{1 + \delta^{ub}}{1 - \delta^{lb}}$ and $D = \frac{3}{1 - \delta^{lb}}$.
	\end{proof}
	
	\begin{remark}
	    Corollary~\ref{cor:low_rank_recovery} has the error dependence $\frac{1}{\sqrt{m}} \|e\|_2$. This is worse or equal to the dependence $\frac{1}{m} \|e\|_1$ in Theorem 1 of \cite{chen2015exact}. This is caused by the upper bound $\|\cdot\|_1 \leq \sqrt{m} \|\cdot\|_2$ which is needed because we apply Kirszbraun's theorem for the $\ell_2$ norm. This could be improved if Kirszbraun's theorem also holds for functions on a domain with the $\ell_1$ norm. The same holds for the additional $\sqrt{m}$ factor in the dependence on $d_*(X, U_r)$.
	\end{remark}
	
    \begin{remark}
    	\begin{itemize}
    	\item
    	In Corollary~\ref{cor:low_rank_recovery}, if $r = 1$, the operator $\mathcal{A}$ applied to rank $1$ matrices of the type $x x^*$ for $x \in \R^n$, yields
    	\begin{align*}
    		(\mathcal{A}(x x^*))_j = \sum_{k, l = 1}^n A_{j, k} A_{j, l} x_k x_l = \left|\sum_{k = 1}^n A_{j, k} x_k \right|^2 = |(A x)_j|^2,
    	\end{align*}
    	so $\mathcal{A}(x x^*) = |A x|^2$, where $|A x|^2$ contains the squared absolute values of the entries of $A x$. So the $\relu$ network function $\tilde{f}$ can (approximately) reconstruct $x x^*$ (and therefore indirectly $x$) from $|A x|^2$, which is the widely studied phase retrieval problem.
    
    	\item
    	So Corollary~\ref{cor:low_rank_recovery} enables us to solve the phase retrieval problem in the sense that from $|A x|^2$ we can calculate $x x^*$ using an end-to-end network. One might wonder whether it is also possible to calculate the vector $x$ from $|A x|^2$ or $|A x|$ directly. However, the problem is that this $x$ is not unique since for any $|\lambda| = 1$ (thus $\lambda = \pm 1$ in $\R$), $|A x| = |A \lambda x|$. It is not even possible to define a continuous function $f: \R^m \rightarrow \R^n$ such that for each $y$, $|A f(y)| = y$ if $A$ enables a unique solution of the phase retrieval problem up to global phase. To see this, define $g(x) = |A x|$ which is continuous. If $f$ is continuous, then also $f \circ g$. Let $e_1$ be the first canonical basis vector. If the phase retrieval problem for $A$ is uniquely solvable, then we need to have $f(g(e_1)) = \pm e_1$. Without loss of generality, we can assume $f(g(e_1)) = + e_1$. Now consider $f \circ g$ along the connected line $t \mapsto (1 - t)e_1 + t e_2$ ($t \in [0, 1]$). By the uniqueness of the solution up to global phase, we must have $(f \circ g)((1 - t) e_1 + t e_2) = \pm ((1 - t) e_1 + t e_2)$ for each $t \in [0, 1]$. Since we assume that the $\pm 1$ sign is $+1$ for $t = 0$ and always $(1 - t) e_1 + t e_2 \neq 0$, by continuity we know that $(f \circ g)((1 - t) e_1 + t e_2) = + ((1 - t) e_1 + t e_2)$ for all $t \in [0, 1]$. Especially, $(f \circ g)(e_2) = e_2$. Analogously, new we can consider $f \circ g$ along the connected line from $e_2$ to $-e_1$ and conclude that $(f \circ g)(-e_1) = -e_1$. However, $g(-e_1) = |A(-e_1)| = g(e_1)$ and therefore also $-e_1 = (f \circ g)(-e_1) = (f \circ g)(e_1)$. This contradicts the previous assumption that $(f \circ g)(e_1) = + e_1$.
    	\end{itemize}
    \end{remark}

	\subsection{Lower Bounds}
	
	In this subsection, we prove the lower bounds for sparse recovery (Corollary~\ref{cor:imposs_one_hidden}, Theorem~\ref{thm:lower_bound_generalized}) and subsequently also for universal approximation (Theorem~\ref{thm:uat_negative}).

	\begin{proof}[Proof of Theorem \ref{thm:lower_bound_generalized}]
	
	    Let $f = W_2(W_1 x + b_1) + b_2$ be the network function with $W_1 \in \R^{k \times m}$, $W_2 \in \R^{m \times k}$, $b_1 \in \R^k$, $b_2 \in \R^n$.
		
		We first show that it is sufficient to prove the statement for networks with zero biases ($b_1=0$ and $b_2=0$), because if we scale the signal $x$ with a sufficiently large constant, the biases become irrelevant. 
		
		To see more formally that that we can set $b_1=0$ and $b_2=0$, recall that we denote $\phi$ for the $\relu$ function. Note that for any numbers $\lambda, a \in \mathbb{R}$ and $\lambda \geq 0$, $\phi(\lambda a) = \lambda \phi(a)$.
		Note that by this observation and the continuity of $\phi$ and $\|\cdot\|_2$,
		\begin{align*}
			\sup_{x \in U \backslash \{0\}} \frac{\|x - f(A x)\|_2}{\|x\|_2} 
			&= 
			\sup_{x \in U \backslash \{0\}} \sup_{\lambda > 0} \frac{\|\lambda x - f(A \lambda x)\|_2}{\|\lambda x\|_2} 
			= 
			\sup_{x \in U \backslash \{0\}} \sup_{\lambda > 0} \frac{\|\lambda x - W_2 \phi(W_1 A \lambda x + b_1) - b_2 \|_2}{\|\lambda x\|_2} \\
			&= 
			\sup_{x \in U \backslash \{0\}} \sup_{\lambda > 0} \frac{\|x - W_2 \phi(W_1 A x + \frac{b_1}{\lambda}) - \frac{b_2}{\lambda} \|_2}{\|x\|_2} \\
			&\geq 
			\sup_{x \in U \backslash \{0\}} \lim_{\lambda \rightarrow \infty} \frac{\|x - W_2 \phi(W_1 A x + \frac{b_1}{\lambda}) - \frac{b_2}{\lambda} \|_2}{\|x\|_2} 
			= 
			\sup_{x \in U \backslash \{0\}} \frac{\|x - W_2 \phi(W_1 A x) \|_2}{\|x\|_2}.
		\end{align*}
		So it is sufficient to prove the statement for a network $f$ with no biases (i.e., with $b_1, b_2 = 0$). 
		
		$f$ is defined on $\R^m$ and for a matrix $M \in \R^{m \times m'}$ we define $f(M) \in \R^{n \times m'}$ to be the column-wise application of $f$ on $M$. Then for the matrix $X \in \R^{n \times \tilde{n}}$ whose columns are the vectors $x_1, \dots, x_{\tilde{n}}$, we obtain
		\begin{align*}
		    \|f(A X) - X\|_F^2 = \sum_{k = 1}^{\tilde{n}} \|f(A x_k) - x_k\|_2^2
		\end{align*}
		as the sum of the squared deviations. Now we find a lower bound for $\|f(A X) - X\|_F$.
		
		We observe that
		\begin{align*}
		    f(A X) - f(- A X) & = W_2 \phi(W_1 A X) - W_2 \phi(-W_1 A X)
		    = W_2 \left[ \phi(W_1 A X) - \phi(- W_1 A X) \right].
		\end{align*}
		For any $x \in \R$, $\phi(x) - \phi(-x) = x$ such that
		\begin{align*}
		    f(A X) - f(-A X) = W_2 W_1 A X.
		\end{align*}
		Since $A \in \R^{m \times n}$, it has rank $\leq m$ and therefore also $f(A X) - f(-A X)$ has rank $\leq m$. Our goal is to bound $\|f(A X) - X\|_F$ or $\|f(-A X) - (-X)\|_F$ for which we bound $\|f(A X) - f(-A X) - 2 X\|_F$. By the Eckart-Young-Mirsky theorem, the best rank $m$ approximation of $2 X$ in Frobenius norm can be obtained by truncating its sigular value decomposition after the largest $m$ singular values and therefore, for any rank $\leq m$ matrix $M \in \R^{n \times \tilde{n}}$,
		\begin{align*}
		    \|M - 2 X\|_F^2 \geq \sum_{k = m + 1}^{\tilde{n}} (\sigma_k(2 X))^2.
		\end{align*}
		Note that this even holds if $X$ itself has rank $\leq m$, in which case $\sigma_k(X) = 0$ for all $m + 1 \leq k \leq \tilde{n}$. Since $f(A X) - f(- A x)$ has rank $\leq m$,
		\begin{align*}
		    2 \alpha & := 2 \sqrt{\sum_{k = m + 1}^{\tilde{n}} (\sigma_k(X))^2} \leq \|f(A X) - f(- A X) - 2 X\|_F \\
		    & \leq \|f(A X) - X\|_F + \|f(- A X) - (-X) \|_F \leq 2 \max\left\{\|f(A X) - X\|_F, \|f(- A X) - (-X) \|_F \right\}.
		\end{align*}
		So one of these norms on the right hand side is $\geq \alpha$. W.l.o.g.~we assume that it is the first one.
		Then
		\begin{align*}
		    \sum_{k = m + 1}^{\tilde{n}} (\sigma_k(X))^2 \leq \|f(A X) - X\|_F^2
		    = \sum_{k = 1}^{\tilde{n}} \|f(A x_k) - x_k\|_2^2
		    \leq \tilde{n} \max_{k \in [\tilde{n}]} \|f(A x_k) - x_k\|_2^2.
		\end{align*}
		So we can conclude
		\begin{align*}
	        \sup_{x \in U \backslash \{0\}} \frac{\|f(A x) - x\|_2}{\|x\|_2}
	        \geq
		    \max_{k \in [\tilde{n}]} \|f(A x_k) - x_k\|_2
		    \geq 
		    \sqrt{\frac{1}{\tilde{n}} \sum_{k = m + 1}^{\tilde{n}} (\sigma_k(X))^2 }.
		\end{align*}
	\end{proof}

	\begin{proof}[Proof of Corollary \ref{cor:imposs_one_hidden}]
	    Corollary~\ref{cor:imposs_one_hidden} follows from Theorem~\ref{thm:lower_bound_generalized} by choosing $x_1 = e_1, \dots, x_n = e_n$. Then $X = (x_1 \dots x_n) = Id_n$ and $U = \Sigma_1$. The lower bound simplifies to
	    \begin{align*}
	        \sqrt{\frac{1}{n} \sum_{k = m + 1}^{n} (\sigma_k(Id_n))^2} = \sqrt{\frac{1}{n}(n - m)} = \sqrt{1 - \frac{m}{n}}.
	    \end{align*}
	\end{proof}
	
	Using the lower bound for the specific case of sparse recovery, we can also show a lower bound for the general approximation of continuous, positive homogeneous functions.
	
	\begin{proof}[Proof of Theorem~\ref{thm:uat_negative}]
	    Let $w_1, \dots, w_n \in \R$ be pairwise distinct. Define the matrix
	    \begin{align*}
	        A :=
	        \begin{pmatrix}
	                \frac{1}{\sqrt{1 + w_1^2}} & \dots & \frac{1}{\sqrt{1 + w_n^2}} \\
	                \frac{w_1}{\sqrt{1 + w_1^2}} & \dots & \frac{w_n}{\sqrt{1 + w_n^2}}
	        \end{pmatrix}
	        \in \R^{2 \times n}.
	    \end{align*}
	    
	    All the columns of $A$ have an $\ell_2$-norm of $1$. Furthermore, if there exists a $2$-sparse $x \in \R^n$ such that $A x = 0$, then there is a $2 \times 2$-subdeterminant which is $0$, i.e., for some $k, l \in [n]$,
	    \begin{align*}
	        0 = \det \begin{pmatrix}
	                \frac{1}{\sqrt{1 + w_k^2}} & \frac{1}{\sqrt{1 + w_l^2}} \\
	                \frac{w_k}{\sqrt{1 + w_k^2}} & \frac{w_l}{\sqrt{1 + w_l^2}}
	        \end{pmatrix}
	        = \frac{w_l - w_k}{\sqrt{1 + w_k^2} \sqrt{1 + w_l^2}}
	    \end{align*}
	    and thus $w_l = w_k$, contradicting the assumption that the numbers are pairwise distinct.
	    
	    So $A x \neq 0$ must hold for all $2$-sparse $x \in \R^n$. Furthermore, $A$ is injective on $\Sigma_1$. So there exists an inverse map $f: A\Sigma_1 \rightarrow \Sigma_1$. The set $\Sigma_1 \cap S^{n - 1}$ is compact and thus there exists
	    \begin{align*}
	       \tau := \min_{x \in \Sigma_2 \cap S^{n - 1}} \|A x\|_2 = \min_{x \in \Sigma_2 \backslash \{0\}} \frac{\|A x\|_2}{\|x\|_2} = \min_{\substack{x, y \in \Sigma_1 \\ x \neq y}} \frac{\|A x - A y\|_2}{\|x - y\|_2}
	    \end{align*}
	    and by the previous observation, $\tau > 0$. So for all $x, y \in \Sigma_1$, $x \neq y$, $\|x - y\|_2 \leq \frac{1}{\tau} \|A x - A y\|_2$ such that the inverse map $f$ is Lipschitz continuous with Lipschitz constant $\frac{1}{\tau}$.
	    
	    Furthermore, since $A$ as a function is positive homogeneous, also its restricted inversion $f$ must be positive homogeneous. By Lemma~\ref{lem:lipschitz_extension}, there exists a positive homogeneous extension $f: \R^2 \rightarrow \R^n$ on the entire space with Lipschitz constant $\frac{2}{\tau}$.
	    
	    Now let $\tilde{f}: \R^2 \rightarrow \R^n$ be any function that can be represented by a $\relu$ network with one hidden layer. By Corollary~\ref{cor:imposs_one_hidden},
	    \begin{align*}
	        \sup_{x \in \Sigma_1 \backslash \{0\}} \frac{\|x - \tilde{f}(A x)\|_2}{\|x\|_2} \geq \sqrt{1 - \frac{2}{n}}.
	    \end{align*}
	    Now for each $x \in \Sigma_1$, by the definition of $f$, $f(A x) = x$ and since $A$ has normalized columns, $\|A x\|_2 = \|x\|_2$. Therefore we can conclude
	    \begin{align*}
	        \sqrt{1 - \frac{2}{n}} & \leq \sup_{x \in \Sigma_1 \backslash \{0\}} \frac{\|x - \tilde{f}(A x)\|_2}{\|x\|_2} = \sup_{x \in \Sigma_1 \backslash \{0\}} \frac{\|f(A x) - \tilde{f}(A x)\|_2}{\|A x\|_2} \\
	        & = \sup_{y \in A \Sigma_1 \backslash \{0\}} \frac{\|f(y) - \tilde{f}(y)\|_2}{\|y\|_2} \leq \sup_{y \in \R^2 \backslash \{0\}} \frac{\|\tilde{f}(y) - f(y)\|_2}{\|y\|_2}.
	    \end{align*}
	    $f$ is positive homogeneous and Lipschitz continuous, thus also continuous. We can expand $f: \R^2 \rightarrow \R^n$ to a function $\bar{f}: \R^m \rightarrow \R^n$ by setting $\bar{f}(y_1, \dots, y_m) = f(y_1, y_2)$. In this way, $\bar{f}$ is still positive homogeneous and continuous. For any $\tilde{\bar{f}}: \R^m \rightarrow \R^n$ that is represented by a $\relu$ network with one hidden layers, also $\tilde{f}: \R^2 \rightarrow \R^n$, $\tilde{f}(y) = \tilde{\bar{f}}(y_1, y_2, 0, \dots, 0)$ can be represented by a $\relu$ network with one hidden layer such that
	    \begin{align}
	        \sup_{\tilde{y} \in \R^m \backslash \{0\}} \frac{\|\tilde{\bar{f}}(\tilde{y}) - \bar{f}(\tilde{y})\|_2}{\|\tilde{y}\|_2}
	        & \geq
	        \sup_{y \in \R^2 \backslash \{0\}} \frac{\|\tilde{\bar{f}}(y_1, y_2, 0, \dots, 0) - \bar{f}(y_1, y_2, 0, \dots, 0)\|_2}{\|y\|_2} \nonumber \\
	        & =
	        \sup_{y \in \R^2 \backslash \{0\}} \frac{\|\tilde{f}(y) - f(y)\|_2}{\|y\|_2}
	        \geq \sqrt{1 - \frac{2}{n}}.
	        \label{eq:f_bar_lower_bound}
	    \end{align}

	    Now take an $n' \leq n$ and let $\bar{f}: \R^m \rightarrow \R^n$ as above such that \eqref{eq:f_bar_lower_bound} holds for all $\relu$ networks $\tilde{\bar{f}}$ with one hidden layer. For each subset $S \subset [n]$ of size $|S| = n'$, we can define a function $\bar{f}_S: \R^{m} \rightarrow \R^{n}$ such that for all $y \in \R^m$, $j \in [n]$,
	    \begin{align*}
	        (\bar{f}_S(y))_j := \begin{cases}
	            (\bar{f}(y))_j & \text{if } j \in S \\
	            0 & \text{otherwise.}
	        \end{cases}
	    \end{align*}
	    Now assume that every continuous, positive homogeneous function $\R^{m} \rightarrow \R^{n'}$ can be approximated by a one-layer $\relu$ network up to relative precision $n'(\frac{1}{n} - \frac{2}{n^2}) - \epsilon$ for an $\epsilon > 0$. Then for each $S \subset [n]$, $|S| = n'$, there exists a $\relu$ network function $\tilde{\bar{f}}_{(S)}: \R^m \rightarrow \R^n$ such that $(\tilde{\bar{f}}_{(S)})_j = 0$ for $j \in [n] \backslash S$ and for all $y \in \R^m$,
	    \begin{align*}
	        \frac{\|\tilde{\bar{f}}_{(S)}(y) - \bar{f}_S(y)\|_2^2}{\|y\|_2^2} \leq n' \left( \frac{1}{n} - \frac{2}{n^2} \right) - \epsilon.
	    \end{align*}
	    Since every $j \in [n]$ is contained in exactly $\binom{n - 1}{n' - 1}$ subsets $S \subset [n]$, for each $y \in \R^m$, $j \in [n]$,
	    \begin{align*}
	        \bar{f}_j(y) = \frac{1}{\binom{n - 1}{n' - 1}} \sum_{\substack{S \subset [n] \\ \text{s.t. } |S| = n'}} (\bar{f}_S(y))_j.
	    \end{align*}
	    We define the $\relu$ network function $\tilde{\bar{f}}: \R^m \rightarrow \R^n$ with one hidden layer by
	    \begin{align*}
	        \tilde{\bar{f}}(y) = \frac{1}{\binom{n - 1}{n' - 1}} \sum_{\substack{S \subset [n] \\ \text{s.t. } |S| = n'}} \tilde{\bar{f}}_{(S)}(y).
	    \end{align*}
	    Then by \eqref{eq:f_bar_lower_bound}, we have
	    \begin{align*}
	        \sup_{y \in \R^m \backslash \{0\}} \frac{\|\tilde{\bar{f}}(y) - \bar{f}(y)\|_2^2}{\|y\|_2^2}
	        \geq 1 - \frac{2}{n}
	    \end{align*}
	    
	    On the other hand, for all $y \in \R^m \backslash \{0\}$,
	    \begin{align*}
	        \frac{\|\tilde{\bar{f}}(y) - \bar{f}(y)\|_2}{\|y\|_2}
	        & = \sum_{j \in [n]} \frac{|\tilde{\bar{f}}_j(y) - \bar{f}_j(y)|^2}{\|y\|_2}
	        \leq \frac{1}{\binom{n - 1}{n' - 1}} \sum_{j \in [n]} \sum_{\substack{S \subset [n] \\ \text{s.t. } |S| = n'}} \frac{|(\tilde{\bar{f}}_{(S)}(y))_j - (\bar{f}_S (y))_j|^2}{\|y\|_2} \\
	        & \leq \frac{1}{\binom{n - 1}{n' - 1}} \sum_{\substack{S \subset [n] \\ \text{s.t. } |S| = n'}}  \sum_{j \in S} \frac{|(\tilde{\bar{f}}_{(S)}(y))_j - (\bar{f}_S (y))_j|^2}{\|y\|_2}
	        = \frac{1}{\binom{n - 1}{n' - 1}} \sum_{\substack{S \subset [n] \\ \text{s.t. } |S| = n'}}  \frac{\|\tilde{\bar{f}}_{(S)}(y) - \bar{f}_S(y)\|_2^2}{\|y\|_2} \\
	        & \leq \frac{\binom{n}{n'}}{\binom{n - 1}{n' - 1}} \left[ n' \left( \frac{1}{n} - \frac{2}{n^2} \right) - \epsilon \right]
	        = \frac{n}{n'} \left[ n' \left( \frac{1}{n} - \frac{2}{n^2} \right) - \epsilon \right]
	        = 1 - \frac{2}{n} - \epsilon \frac{n}{n'} < 1 - \frac{2}{n}.
	    \end{align*}
	    This is a contradiction. So we can concluded that for $n' \leq n$, there exists a function $f: \R^m \rightarrow \R^{n'}$ such that for all one-layer $\relu$ networks $\tilde{f}: \R^m \rightarrow \R^{n'}$,
	    \begin{align*}
	        \sup_{y \in \R^m \backslash \{0\}} \frac{\|\tilde{f}(y) - f(y)\|_2^2}{\|y\|_2^2} \geq n' \left( \frac{1}{n} - \frac{2}{n^2} \right).
	    \end{align*}
	    The second factor $\left( \frac{1}{n} - \frac{2}{n^2} \right)$ becomes maximal for $n = 4$. For $n' \leq 4$, we choose $n = 4$ and otherwise $n = n'$, which proves the bound from the theorem statement.

	\end{proof}
 
	 \subsection{Robustness\label{sec:nn_robust}}
	 Our main theorems also address the robustness of our solution already mentioned at the beginning in Section~\ref{subsec:nn_robustness_intro}. We can obtain similar guarantees to minimization-based approaches. This agrees with the empirical observation in \cite{genzel2022solving} that states that neural networks provide a similar robustness to total variation minimization which is related to $\ell_1$ minimization for sparse recovery. Note however, that in our work, we only study the existence of the networks but not training them.

	The robustness seemingly contradicts the analysis of \cite{gottschling2020troublesome} which analyzes certain scenarios in which problems with the robustness of neural network for inverse problems occur. In particular, they show (Theorem 3.1 in \cite{gottschling2020troublesome}) that instabilities have to occur if one tries to recover signals whose difference is close to the kernel of the measurement matrix, i.e., $x$ and $x'$ such that $\|x - x'\|$ is large compared to $\|A x - A x'\|$. Avoiding this situation is referred to as \textit{kernel awareness} (discussed in Section~4.2 in \cite{gottschling2020troublesome}) which for sparse recovery can be achieved if $\|x\|_2 \leq \gamma \|A x\|_2$ for a constant $\gamma > 0$ and all $2 s$-sparse vectors. In fact, \eqref{eq:inverse_function_prerequisite} in Theorem~\ref{thm:approximation_inverse_function} is exactly such a kernel awareness condition which ensures that the considered problems are well-behaved in this respect. This condition is ensured by the $\ell_2$-robust null space property and therefore also the restricted isometry property which are assumed in many compressed sensing scenarios.
	
	However, by a theoretical comparison of neural networks to an optimization approach similar to \eqref{eq:qc_bp_nn_introduction} (Theorem~6.3 in \cite{gottschling2020troublesome}), they show that even for a problem related to sparse recovery, neural networks necessarily have a significantly larger local Lipschitz constant in some cases.
	
	In the proof of Theorem~\ref{thm:approximation_inverse_function} of our work, we first considered an extended inversion function $f$ which we prove to be Lipschitz continuous. Then we approximate $f$ by a $\relu$ network $\tilde{f}$ such that $\|\tilde{f}(y) - f(y)\|_2 \leq \delta' \|y\|_2$. And indeed, even though $f$ is Lipschitz continuous and $\delta'$ can be arbitrarily small, we cannot conclude anything about the local Lipschitz constants of $\tilde{f}$ based on this method. Specifically, in the sparse recovery case, Corollary~\ref{cor:approximation_rip} states that for an exactly sparse signal $x$ with $\|x\|_2 = 1$,
	\begin{align*}
	    \|\tilde{f}(A x + e) - x\|_2 \leq \delta' + D \|e\|_2.
	\end{align*}
	Therefore, we obtain gradients
	\begin{align*}
	    \frac{\|\tilde{f}(A x + e) - \tilde{f}(A x)\|_2}{\|e\|_2} \leq \frac{2 \delta'}{\|e\|_2} + D.
	\end{align*}
	For very small $\|e\|_2$, specifically $\|e\|_2 \lesssim \delta'$, this becomes very large and therefore it becomes clear that our method cannot provide a bound to control the local Lipschitz constant of $\tilde{f}$. However, these large gradients only occur for very small $\|e\|_2$ and if specifically $\|e\|_2 \geq \delta'$ (recall that $\delta'$ can be chosen arbitrarily small), then the above gradient is bounded by
	\begin{align*}
	    \frac{\|\tilde{f}(A x + e) - \tilde{f}(A x)\|_2}{\|e\|_2} \leq 2 + D,
	\end{align*}
	i.e., a constant. So to summarize, the networks provided by our method might actually have very large local Lipschitz constants. However, these are only relevant for very small deviations and in this way, robust recovery as in Theorem~\ref{thm:approximation_inverse_function} is still possible.
	
	The results in Section~\ref{sec:optimization_networks} below also show that for a large class of minimization problems, neural networks can achieve the same robustness with respect to perturbations of size $\|e\|_2 \gtrsim \delta'$ even though the local Lipschitz constant might be significantly larger. Nevertheless, we can chose the $\delta'$ arbitrarily close to $0$.
	
	\section{Networks from Optimization Based Approaches} \label{sec:optimization_networks}
	
	Other than the neural network approaches of this work, classical compressed sensing studies optimization based methods to solve this problem, see Chapter 4 in \cite{comp_sen} for an overview. In contrast to the previous result in this work, which is independent of this, in this section we solve the sparse recovery problem by approximating the solution of an optimization problem with a neural network. In particular, we consider the $\ell_1$ minimization problem
	\begin{align}
	    & \min \|z\|_1 & \text{s.t. } & \|A z - y\|_2 \leq \eta,
	    \label{eq:l1_minimization_description}
	\end{align}
	for an $\eta \geq 0$, which can be shown to give a stable reconstruction of sparse $x$ from their measurements $y = A x$ for suitable measurement matrices $A$ \cite{comp_sen}.
	
	To show that \eqref{eq:l1_minimization_description} can be solved using a $\relu$ network, we need to show that the function that maps vectors $y$ to the corresponding minimizer in \eqref{eq:l1_minimization_description} is continuous. However, it is not clear that for each $y$, \eqref{eq:l1_minimization_description} has a unique solution. In fact, previous works such as \cite{zhang2016one} have shown uniqueness under certain circumstances but this might not be the case in general.
	
	This leads to the concept of multifunctions. Unlike a usual function $f: X \rightarrow Y$ that maps each $x \in X$ to exactly one $f(x) \in Y$, a multifunction maps each $x \in X$ to a subset of $Y$ while usually the empty set is excluded. Therefore, a multifunction can also be seen as a function $F: X \rightarrow 2^{Y} \backslash \{\emptyset\}$ with values in the power set $2^{Y}$ of $Y$. Using this concept, we can always describe the map from the vector $y$ to the set of minimizers of \eqref{eq:l1_minimization_description} as a multifunction if the minimization problem is feasible. We will show that this multifunction satisfies a generalization of continuity and that eventually there is a continuous selection function that maps every $y$ to an approximation of \textit{one} of the solutions of \eqref{eq:l1_minimization_description}.
	
	There exists an extensive theory about multifunctions and generalizations of well-known concepts of functions to them. This has been known as \textit{set-valued analysis} or \textit{multivalued analysis} and textbooks such as \cite{aubin2009set} and \cite{handbook_multivalued_analysis} can provide a detailed summary of this. In the following presentation of the most important concepts, we mostly use the notation and terminology of \cite{handbook_multivalued_analysis}.
	
	\begin{definition}
	    Let $X, Y$ be sets. A multifunction $F: X \rightarrow 2^{Y} \backslash \{\emptyset\}$ is a function that maps from $M$ to the power set $2^Y$ of $Y$ without the empty set.
	\end{definition}
	
	One important tool which we need in this section is the generalization of continuity to multifunctions. The following properties have also been known under the terms ''upper/lower \textit{hemi}continuous`` in the literature.
	
	\begin{definition}[Definition 2.3 / Remark 2.4 in \cite{handbook_multivalued_analysis}]
	    Let $F: X \rightarrow 2^{Y} \backslash \{\emptyset\}$ be a multifunction between Hausdorff topological spaces $X$ and $Y$. For $x_0 \in X$, we say that
	    \begin{itemize}
	        \item $F$ is upper semicontinuous at $x_0$ if for all open sets $V \subset Y$ with $F(x_0) \subset V$, there exists a neighborhood $U$ of $x_0$ such that for all $x \in U$, $F(x) \subset V$,
	        \item $F$ is lower semicontinuous at $x_0$ if for all open sets $V \subset Y$ with $F(x_0) \cap V \neq \emptyset$, there exists a neighborhood of $x_0$ such that for all $x \in U$, $F(x) \cap V \neq \emptyset$,
	        \item $F$ is continuous at $x_0$ if $F$ is upper semicontinuous at $x_0$ and lower semicontinuous at $x_0$.
	    \end{itemize}
	    We say that $F$ is (upper/lower semi-)continuous if it is (upper/lower semi-)continuous at all points $x_0 \in X$.
	\end{definition}
	
	For single-valued functions, i.e., multifunctions $F$ such that $|F(x)| = 1$ for all $x$ in the domain, all these terms coincide to the usual term of continuity of functions. A simple standard example for a multivalued function that is upper but not lower semicontinuous is given by $F_1: \R \rightarrow 2^{\R} \backslash \{\emptyset\}$, $F_1(x) = \{1\}$ if $x \neq 0$ and $F_1(0) = [0, 1]$. A lower but not upper semicontinuous function is given by $F_2: \R \rightarrow 2^{\R} \backslash \{\emptyset\}$, $F_2(x) = [0, 1]$ if $x \neq 0$ and $F_2(0) = \{0\}$. Example 2.8 in \cite{handbook_multivalued_analysis} gives more details and explanations about this. So in general, none of these two properties implies the other.

    In optimization problems such as \eqref{eq:l1_minimization_description}, the feasible region is a multifunction of the parameters (here $y$). Berge's maximum theorem states that if this feasible region is a continuous multifunction and the objective function is a continuous function, then the multifunction mapping the parameters to the set of optimal solutions is upper semicontinuous.

	\begin{theorem}[Berge's maximum theorem, Theorem 3.4 in \cite{handbook_multivalued_analysis}] \label{thm:maximum_theorem}
	    Let $u: X \times Y \rightarrow \R$ be a continuous function and $F: Y \rightarrow 2^{X} \backslash \{\emptyset\}$ a continuous multifunction with compact values. Consider the optimization problem
	    \begin{align*}
	        & \max_{x} u(x, y) & \text{s.t. } & x \in F(y).
	    \end{align*}
	    Let $S: Y \rightarrow 2^{X} \backslash \{\emptyset\}$ be the multifunction mapping each $y \in Y$ to the optimizers and $v: Y \rightarrow \R$ be the function mapping each $y$ to the optimal value.
	    
	    Then $S$ is upper continuous with compact values and $v$ is continuous.
	\end{theorem}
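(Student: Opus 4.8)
The plan is to establish the two conclusions in three stages: first pin down the optimizer set $S(y)$ pointwise, then prove continuity of the value function $v$, and finally deduce upper semicontinuity of $S$, using the value function's continuity as a bridge. For the first stage, I would fix $y$ and observe that $F(y)$ is nonempty and compact while $u(\cdot, y)$ is continuous, so the Weierstrass extreme value theorem guarantees that the maximum is attained; hence $S(y) \neq \emptyset$ and $v(y) = \max_{x \in F(y)} u(x, y)$ is finite. Since $S(y) = \{x \in F(y) : u(x, y) = v(y)\}$ is a closed subset of the compact set $F(y)$, it is itself compact, which already yields the ``compact values'' assertion for $S$.

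Next I would prove that $v$ is continuous by treating lower and upper semicontinuity separately, each drawing on the corresponding half of the continuity of $F$. For lower semicontinuity at a point $y_0$, I would pick $x_0 \in S(y_0)$ so that $v(y_0) = u(x_0, y_0)$; lower semicontinuity of $F$ lets me select, along any net $y_\alpha \to y_0$, points $x_\alpha \in F(y_\alpha)$ with $x_\alpha \to x_0$, and continuity of $u$ then gives $\liminf_\alpha v(y_\alpha) \geq \liminf_\alpha u(x_\alpha, y_\alpha) = u(x_0, y_0) = v(y_0)$. For upper semicontinuity at $y_0$, I would take any net $y_\alpha \to y_0$, choose maximizers $x_\alpha \in S(y_\alpha)$, and use that upper semicontinuity of $F$ with compact values confines the $x_\alpha$ eventually to a neighborhood of the compact set $F(y_0)$, so that a subnet converges, $x_\beta \to x^\ast$, with $x^\ast \in F(y_0)$ by upper semicontinuity and closedness; then $\lim_\beta v(y_\beta) = \lim_\beta u(x_\beta, y_\beta) = u(x^\ast, y_0) \leq v(y_0)$, and passing to a subnet realizing $\limsup_\alpha v(y_\alpha)$ gives $\limsup_\alpha v(y_\alpha) \leq v(y_0)$. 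Combining the two bounds yields continuity of $v$.

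The final and hardest step is upper semicontinuity of $S$, which I would argue by contradiction. Suppose $S$ fails to be upper semicontinuous at $y_0$: then there is an open set $V \supseteq S(y_0)$ and a net $y_\alpha \to y_0$ with $x_\alpha \in S(y_\alpha) \setminus V$. Exactly as above, upper semicontinuity of $F$ provides a convergent subnet $x_\beta \to x^\ast$ with $x^\ast \in F(y_0)$, and continuity of $u$ together with the now-established continuity of $v$ forces $u(x^\ast, y_0) = \lim_\beta u(x_\beta, y_\beta) = \lim_\beta v(y_\beta) = v(y_0)$, so $x^\ast \in S(y_0) \subseteq V$. But the complement $X \setminus V$ is closed and contains every $x_\beta$, hence contains the limit $x^\ast$, contradicting $x^\ast \in V$.

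The recurring delicate point — and the one where I expect to spend the most care — is the compactness extraction: I must know that upper semicontinuity \emph{with compact values} guarantees the chosen maximizers lie eventually in a relatively compact set, so that a convergent subnet exists and its limit falls back into $F(y_0)$. In the Euclidean setting relevant to this paper this is routine, but in a general Hausdorff space it is precisely where the compact-valuedness of $F$ is indispensable, and where the argument would otherwise break down.
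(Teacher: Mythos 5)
The paper never proves this statement: Berge's maximum theorem is imported as a known result (Theorem 3.4 of the cited handbook on multivalued analysis) and used as a black box in the proof of Lemma~\ref{lem:optimization_cont_selection}. So there is no internal proof to compare against; what you have written is a self-contained proof of the cited theorem, and it follows the standard textbook route. In its essentials it is correct: Weierstrass gives $S(y) \neq \emptyset$ and compactness of $S(y)$; lower semicontinuity of $v$ comes from lower semicontinuity of $F$ together with continuity of $u$ near a fixed maximizer; upper semicontinuity of $v$, and then of $S$, both rest on extracting from a net of maximizers $x_\alpha \in S(y_\alpha)$, $y_\alpha \to y_0$, a subnet converging to some $x^\ast \in F(y_0)$, with the continuity of $v$ serving as the bridge that forces $u(x^\ast, y_0) = v(y_0)$ in the final contradiction.

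The one step that is not right as stated is the compactness extraction you yourself flag. Being eventually contained in a neighborhood of the compact set $F(y_0)$ does not, in a general Hausdorff space, produce a convergent subnet: a neighborhood of a compact set need not be relatively compact unless the space is locally compact. The correct general mechanism uses compact-valuedness differently: if no point of $F(y_0)$ were a cluster point of $(x_\alpha)$, then each $x \in F(y_0)$ would have an open neighborhood $V_x$ eventually avoided by the net; finitely many $V_{x_1}, \dots, V_{x_k}$ cover $F(y_0)$, and upper semicontinuity applied to $V = V_{x_1} \cup \dots \cup V_{x_k}$ forces the net eventually into $V$, a contradiction. This yields a cluster point in $F(y_0)$, hence a convergent subnet, with no local compactness needed; Hausdorffness then separates the limit from $F(y_0)$ if it were outside, giving $x^\ast \in F(y_0)$. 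A similar caveat applies to your lower-semicontinuity step: in a general space the selection $x_\alpha \in F(y_\alpha)$ with $x_\alpha \to x_0$ exists only along a subnet, though either a liminf argument along subnets or a direct $\epsilon$--neighborhood argument (continuity of $u$ at $(x_0, y_0)$ gives $W \times U_1$ on which $u > v(y_0) - \epsilon$; lower semicontinuity of $F$ gives $U_2$ with $F(y) \cap W \neq \emptyset$ on $U_2$) repairs this without nets. Since the paper only invokes the theorem with $X$ and $Y$ subsets of Euclidean spaces, your Euclidean shortcut covers every use made of it here; but the theorem as stated concerns general Hausdorff spaces, so matching it in full generality requires the cluster-point argument.
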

	
	For a multifunction $F: X \rightarrow 2^{Y} \backslash \{\emptyset\}$, a \textit{selection} is defined as a single-valued function $f: X \rightarrow Y$ such that $f(x) \in F(x)$ for all $x \in X$. A particular question that has been studied is when there exists a continuous selection of $F$. Michael's selection theorem (Theorem 4.6 in \cite{handbook_multivalued_analysis}) states that lower semicontinuity of $F$ is enough for this. However, the above Theorem~\ref{thm:maximum_theorem} can only guarantee upper semicontinuity which is not sufficient for a continuous selection (also see \cite{handbook_multivalued_analysis}). Nevertheless, upper semicontinuous multifunctions still allow approximate selections with arbitrarily small perturbations in the argument and in the function value. In the following theorem, we use the notation $F(M)$ with a multifunction $F: X \rightarrow 2^{Y}$ and a subset $M \subset X$ for $F(M) := \bigcup_{x \in M} F(x)$.
	
	\begin{theorem}[Theorem 4.42 in \cite{handbook_multivalued_analysis}] \label{thm:approx_selection}
	    Let $X$ be a metric space, $Y$ a Banach space, $W \subset X$ open, $K \subset W$ compact, $F: \bar{W} \rightarrow 2^{Y} \backslash \{\emptyset\}$ an upper semicontinuous multifunction with convex values, then for every $\epsilon > 0$, there is an open neighborhood $G_\epsilon$ of $K$ and a locally Lipschitz function $f_\epsilon: G_\epsilon \rightarrow \mathrm{conv} F(K)$ with finite dimensional range such that for every $x \in G_\epsilon$, $f_\epsilon(x) \in F(K \cap B_\epsilon(x)) + B_\epsilon(0)$.
	\end{theorem}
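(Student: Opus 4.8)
The statement is Cellina's approximate selection theorem, and the plan is to build $f_\epsilon$ by a partition-of-unity construction adapted to the upper semicontinuity of $F$. First I would exploit upper semicontinuity pointwise: for each $z \in K$, applying the definition with the open set $V = F(z) + B_\epsilon(0)$ (which contains $F(z)$ and is open since a sum with an open ball is open) yields a radius $\delta_z \in (0,\epsilon)$ such that $F(x') \subset F(z) + B_\epsilon(0)$ for every $x' \in B_{\delta_z}(z)$. Since $K \subset W$ is compact and $W$ is open, I would also shrink each $\delta_z$ so that $B_{\delta_z}(z) \subset W$, which keeps every ball inside the domain $\bar{W}$ of $F$.

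Next I would pass to a finite subcover: the balls $B_{\delta_z/2}(z)$, $z \in K$, cover $K$, so compactness gives centers $z_1, \dots, z_N \in K$ with radii $\delta_i := \delta_{z_i}$ such that $K \subset G_\epsilon := \bigcup_{i=1}^N B_{\delta_i/2}(z_i)$, and $G_\epsilon$ is the desired open neighborhood of $K$. Because a metric space is paracompact and admits locally Lipschitz partitions of unity subordinate to any open cover (with a finite cover this is elementary, e.g.\ normalizing the $1$-Lipschitz functions $d(\,\cdot\,, X \setminus B_{\delta_i/2}(z_i))$), I can take locally Lipschitz $p_1, \dots, p_N : G_\epsilon \to [0,1]$ with $\supp p_i \subset B_{\delta_i/2}(z_i)$ and $\sum_i p_i \equiv 1$ on $G_\epsilon$. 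Choosing an arbitrary $y_i \in F(z_i)$ (nonempty by hypothesis), I define
\[
  f_\epsilon(x) := \sum_{i=1}^N p_i(x)\, y_i .
\]
This is locally Lipschitz as a finite sum of locally Lipschitz scalar functions times fixed vectors, and its range lies in $\mathrm{conv}\{y_1,\dots,y_N\}$, a subset of the finite-dimensional span of the $y_i$ and of $\mathrm{conv}\, F(K)$; so the regularity, finite-dimensionality, and codomain claims are immediate.

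It remains to verify the approximation property, which is where convexity of the values enters and which I expect to be the only delicate point. Fix $x \in G_\epsilon$ and let $I(x) = \{i : p_i(x) > 0\}$; for each $i \in I(x)$ we have $x \in B_{\delta_i/2}(z_i)$. Among these finitely many active indices choose $i_0 \in I(x)$ with $\delta_{i_0}$ maximal. Then for every $i \in I(x)$ the triangle inequality gives $d(z_i, z_{i_0}) \le d(z_i,x) + d(x,z_{i_0}) < \delta_i/2 + \delta_{i_0}/2 \le \delta_{i_0}$, so $z_i \in B_{\delta_{i_0}}(z_{i_0})$ and hence $y_i \in F(z_i) \subset F(z_{i_0}) + B_\epsilon(0)$ by the choice of $\delta_{i_0}$. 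Since $F(z_{i_0})$ and $B_\epsilon(0)$ are convex, the set $F(z_{i_0}) + B_\epsilon(0)$ is convex and contains every active $y_i$; therefore the convex combination $f_\epsilon(x) = \sum_{i \in I(x)} p_i(x)\, y_i$ lies in $F(z_{i_0}) + B_\epsilon(0)$. Finally $z_{i_0} \in K$ and $d(x,z_{i_0}) < \delta_{i_0}/2 < \epsilon$, so $z_{i_0} \in K \cap B_\epsilon(x)$ and thus $f_\epsilon(x) \in F(K \cap B_\epsilon(x)) + B_\epsilon(0)$, as required.

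The main obstacle is precisely this last verification: a naive partition-of-unity selection only places $f_\epsilon(x)$ in $\mathrm{conv}\, F(K \cap B_\epsilon(x))$, which can be strictly larger than the target $F(K \cap B_\epsilon(x)) + B_\epsilon(0)$. The device of selecting the active center $z_{i_0}$ of \emph{largest} radius forces all competing active centers into a single upper-semicontinuity neighborhood of $z_{i_0}$, so that all active vertices $y_i$ land in one enlarged value $F(z_{i_0}) + B_\epsilon(0)$; convexity of $F(z_{i_0})$ then collapses the convex combination back into that single enlarged value. Without convex values this step fails, which is why the convexity hypothesis is essential.
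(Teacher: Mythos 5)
Your proof is correct, and it is the classical Cellina approximate-selection argument: pointwise use of upper semicontinuity with the enlarged target $F(z)+B_\epsilon(0)$, a finite subcover of half-radius balls, a locally Lipschitz partition of unity, and the key device of singling out the active center of largest radius so that all active vertices $y_i$ fall into a single convex set $F(z_{i_0})+B_\epsilon(0)$. Note, however, that the paper itself does not prove this statement at all: it is imported verbatim as Theorem 4.42 of the cited handbook on multivalued analysis and used as a black box in the proof of Lemma~\ref{lem:optimization_cont_selection}, so there is no in-paper argument to compare against; your write-up supplies exactly the proof the paper delegates to the reference. Two cosmetic points: what your construction actually guarantees is $\{p_i>0\}=B_{\delta_i/2}(z_i)$ rather than $\supp p_i \subset B_{\delta_i/2}(z_i)$ (the support is the closure of that set), but your verification only uses the former; and it is worth stating explicitly that the partition functions are locally Lipschitz because the denominator $\sum_j d(\,\cdot\,, X\setminus B_{\delta_j/2}(z_j))$ is Lipschitz and locally bounded away from zero on $G_\epsilon$.
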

	
	Now in order to apply the aforementioned results to the particular problem of approximating the solution of \eqref{eq:l1_minimization_description} with a continuous function, the first step is to show that the feasible region of the problem is described by a continuous multifunction. We show this for a slight restriction of the feasible region which is compact but this will not change the eventual minimizer set.

    \begin{lemma} \label{lem:feasible_region_continuous}
    	Let $A \in \R^{k \times n}$ ($k \leq n$), $B \in \R^{k \times m}$, and $\eta \in [0, \infty)$. Let $\|\cdot\|$ be a norm on $\R^k$ and $g: \R^m \rightarrow [1, \infty)$ a continuous function.
    	
    	Define the multifunctions $F_1, F_2, F: \R^m \rightarrow 2^{\R^n} \backslash \{\emptyset\}$ by
    	\begin{align*}
    		F_1(y) & = \{x \in \R^n \,\big|\, \|A x + B y\| \leq \eta \}, &
    		F_2(y) & = \{x \in \R^n \,\big|\, \|P_{\ker(A)} x\|_2 \leq g(y)\}, &
    		F(y) & = F_1(y) \cap F_2(y),
    	\end{align*}
    	where $P_{\ker(A)}$ is the orthogonal projection onto the kernel of $A$.
    	
    	Furthermore, assume that
    	\begin{itemize}
    		\item $g(y) \geq \|A^\dagger B y\|_2$ for all $y \in \R^m$
    		\item $B(\R^m) \subset A(\R^n)$
    	\end{itemize}
    	
    	Then $F$ is well-defined (i.e., $F(y) \neq \emptyset$ for all $y \in \R^m$), continuous and has compact values.
    \end{lemma}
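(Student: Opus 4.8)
The plan is to exploit the fact that, once $\R^n$ is split into $\ker(A)$ and its orthogonal complement, the two constraints defining $F$ decouple. Writing $x = u + v$ with $u := P_{\ker(A)}x \in \ker(A)$ and $v := x - u \in \ker(A)^\perp$, we have $Ax = Av$, so the constraint of $F_1$ involves only $v$ while the constraint of $F_2$ involves only $u$. Hence
\[
F(y) = \{\, u + v : u \in U(y),\ v \in V(y)\,\},
\qquad
\begin{aligned}
U(y) &:= \{u \in \ker(A) : \|u\|_2 \le g(y)\},\\
V(y) &:= \{v \in \ker(A)^\perp : \|Av + By\| \le \eta\}.
\end{aligned}
\]
Since the map $\Psi\colon \ker(A) \times \ker(A)^\perp \to \R^n$, $\Psi(u,v) = u + v$, is a linear homeomorphism, it suffices to show that $U$ and $V$ are each well-defined, continuous, and compact-valued: the product of two continuous compact-valued multifunctions is continuous and compact-valued, and composing with $\Psi$ preserves both properties.

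For $U$, observe that $U(y) = g(y)\,\bar B_1^{\ker(A)}$ is the closed Euclidean ball in $\ker(A)$ of radius $g(y) \ge 1 > 0$, i.e.\ a fixed compact convex set scaled by the continuous, strictly positive factor $g(y)$. This is compact-valued and continuous; here $g \ge 1 > 0$ guarantees that every $u_0 \in U(y_0)$ can be written as $g(y_0)\,b$ with $b$ in the fixed unit ball, so that $g(y)\,b \in U(y)$ converges to $u_0$ as $y \to y_0$, which gives lower semicontinuity, while upper semicontinuity follows from compactness of the scaled ball and continuity of $g$. For $V$, I would use the hypothesis $B(\R^m) \subset A(\R^n)$ to write every $By$ as an element of $\mathrm{range}(A)$, and let $A_\perp := A|_{\ker(A)^\perp}$ be the induced isomorphism onto $\mathrm{range}(A)$, with inverse $A^\dagger$ on $\mathrm{range}(A)$. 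Substituting $w = A_\perp v$ and using $By \in \mathrm{range}(A)$ gives
\[
V(y) = -A^\dagger B y + K',
\qquad
K' := A_\perp^{-1}\!\left(\bar B_\eta^{\|\cdot\|}(0) \cap \mathrm{range}(A)\right),
\]
so that $V(y)$ is the fixed compact convex set $K'$ translated by $-A^\dagger B y$, which depends linearly, hence continuously, on $y$. A continuously translated copy of a fixed compact set is automatically a continuous compact-valued multifunction, so $V$ has the desired properties.

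Well-definedness then follows immediately: $0 \in U(y)$ because $g(y) \ge 1$, and $-A^\dagger B y \in V(y)$ because $\|A(-A^\dagger By) + By\| = \|{-By} + By\| = 0 \le \eta$. Here only the range inclusion $B(\R^m) \subset A(\R^n)$ is used; note that the hypothesis $g(y) \ge \|A^\dagger B y\|_2$ is not needed for this lemma, since $A^\dagger B y \in \ker(A)^\perp$ already has vanishing projection onto $\ker(A)$. Consequently $-A^\dagger B y \in F(y) \ne \emptyset$ for every $y$, and assembling the three facts through $\Psi$ yields that $F$ is well-defined, continuous, and compact-valued.

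I expect the only genuine subtlety to be lower semicontinuity. For a general intersection $F_1 \cap F_2$ of continuous multifunctions lower semicontinuity can fail, and the naive strategy of exhibiting a strict (Slater) interior point breaks down precisely when $\eta = 0$, where $F_1(y)$ is an affine subspace with empty interior. The decoupling above is exactly what removes this obstruction: it rewrites $F$ as a product of a scaled ball and a translated compact convex set, for which continuity — including the lower semicontinuous direction — is immediate from the continuity of scaling and of translation. The borderline case $\eta = 0$ is then handled uniformly, since there $K' = \{0\}$ and $V(y) = \{-A^\dagger B y\}$ is single-valued.
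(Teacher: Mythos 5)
Your proof is correct, but it takes a genuinely different route from the paper's. You decouple the two constraints along the orthogonal splitting $\R^n = \ker(A) \oplus \ker(A)^\perp$ and exhibit $F(y)$ as the image, under the linear homeomorphism $(u,v) \mapsto u+v$, of the product $U(y) \times V(y)$, where $U(y)$ is the ball of radius $g(y)$ in $\ker(A)$ and $V(y) = -A^\dagger B y + K'$ is a translate of a fixed compact convex set in $\ker(A)^\perp$; continuity and compactness of values then follow from standard closure properties of continuous, compact-valued multifunctions under products and under images by homeomorphisms. The paper instead verifies lower and upper semicontinuity by hand: given $x_0 \in F(y_0)$, it passes to the perturbed point $x := x_0 + A^\dagger B(y_0 - y) - \lambda P_{\ker(A)} x_0$, which is exactly your geometry in disguise --- the first correction translates the $\ker(A)^\perp$ component to track $V(y)$, the second shrinks the $\ker(A)$ component to stay inside the possibly smaller ball $U(y)$ --- and it proves compactness of values in a separate step. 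Your approach buys modularity, handles the degenerate case $\eta = 0$ (where $F_1(y)$ is an affine subspace with empty interior, so Slater-type interior-point arguments fail) completely transparently, and correctly exposes that the hypothesis $g(y) \geq \|A^\dagger B y\|_2$ is superfluous for this lemma: since $A^\dagger B y \in \ker(A)^\perp$, its projection onto $\ker(A)$ vanishes, and indeed the paper invokes that hypothesis only in its well-definedness step, where $0 \leq g(y)$ already suffices (the hypothesis does real work only later, in the proof of Lemma~\ref{lem:optimization_cont_selection}, to ensure the truncated problem has the same minimizers). What the paper's approach buys is self-containedness: you cite without proof that products of continuous compact-valued multifunctions are continuous, which is a standard fact (lower semicontinuity of products is immediate; upper semicontinuity needs compactness of values and a tube-lemma argument) available in the set-valued analysis literature the paper already cites, so this is a citation gap rather than a mathematical one.
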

    
    \begin{proof}
    	The condition that the range of $B$ is contained in the range of $A$ implies that
    	\begin{align*}
    		A A^\dagger B = P_{A(\R^n)} B = B,
    	\end{align*}
    	where $P_{A(\R^n)}$ is the orthogonal projection onto the range of $A$.
    	
    	Furthermore, by the equivalence of all norms on $\R^k$, there exists a constant $C > 0$ such that $\|z\|_2 \leq C \|z\|$ holds for all $z \in \R^k$.
    	
    	\textbf{Step 1: $F$ is well-defined:}
    	Since $A A^\dagger B = B$, for each $y \in \R^m$, $\|A (- A^\dagger B y) + B y\| = 0$, such that $- A^\dagger B y \in F_1(y)$ and by assumption $\|- A^\dagger B y\|_2 \leq g(y)$ such that also $- A^\dagger B y \in F_2(y)$ and thus $F(y) \neq \emptyset$.
    	
    	\textbf{Step 2: $F$ has compact values:} For each $y$, $F_1(y)$ and $F_2(y)$ are closed such that $F(y)$ is closed. Furthermore, note that $A^\dagger A = P_{\ker(A)^\bot}$. So for any $y \in \R^m$, $x \in F(y)$, we obtain
    	\begin{align*}
    		\|x\|_2 & \leq \|P_{\ker(A)} x\|_2 + \|A^\dagger A x\|_2
    		\leq g(y) + \|A^\dagger(A x + B y)\|_2 + \|A^\dagger B y\|_2 \\
    		& \leq g(y) + \|A^\dagger\|_{2 \rightarrow 2} \|A x + B y\|_2 + \|A^\dagger B y\|_2  \\
    		& \leq g(y) + \|A^\dagger\|_{2 \rightarrow 2} C \eta + \|A^\dagger B y\|_2.
    	\end{align*}
    	The right hand side does not depend on $x$ and therefore, $F(y)$ is also bounded and thus compact.

    	\textbf{Step 3: $F$ is lower semicontinuous}
    	
    	Let $y_0 \in \R^m$ and $V \subset \R^n$ open such that there exists an $x_0 \in F(y_0) \cap V$. Since $V$ is open, there exists a radius $\epsilon > 0$ such that $B_\epsilon(x_0) \subset V$.
    	
    	Since $g$ is continuous, there is a $\tilde{\delta} > 0$ such that for all $y \in B_{\tilde{\delta}}(y_0)$, $|g(y) - g(y_0)| < \frac{\epsilon}{4}$.
    	Now choose $\delta := \min\{ \tilde{\delta}, \frac{\epsilon}{4 \|A^\dagger B\|_{2 \rightarrow 2}} \} > 0$ if $A^\dagger B \neq 0$ and $\delta = \tilde{\delta}$ otherwise. Let $y \in B_\delta(y_0)$. We define the number $\lambda \in (0, 1]$ by
    	\begin{align*}
    		\lambda := \begin{cases}
    			1 & \text{if } \|P_{\ker(A)} x_0\|_2 \leq \frac{\epsilon}{4} \\
    			\frac{\epsilon}{4 \|P_{\ker(A)} x_0\|_2} & \text{otherwise},
    		\end{cases}
    	\end{align*}
    	such that we always have $1 - \lambda \geq 0$ and $\lambda \|P_{\ker(A)} x_0\|_2 \leq \frac{\epsilon}{4}$. Then we define
    	\begin{align} \label{eq:lower_semicont_proof_x}
    		x := x_0 + A^\dagger B (y_0 - y) - \lambda P_{\ker(A)} x_0.
    	\end{align}
    	
    	We observe $\|A x + B y\| = \|A x_0 + A A^\dagger B(y_0 - y) - 0 + B y\| = \|A x_0 + B y\| \leq \eta$, so $x \in F_1(y)$. Furthermore,
    	\begin{align*}
    		\|P_{\ker(A)} x\|_2 & = \|(1 - \lambda) P_{\ker(A)} x_0\|_2 \\
    		& = \|P_{\ker(A)} x_0\|_2 - \lambda \|P_{\ker(A)} x_0\|_2
    		\begin{cases}
    			= 0 \leq g(y) & \text{if } \lambda = 1 \\
    			\leq g(y_0) - \frac{\epsilon}{4} < g(y) & \text{otherwise},
    		\end{cases}
    	\end{align*}
    	showing that $x \in F_2(y)$, i.e., $x \in F(y)$. We also obtain
    	\begin{align*}
    		\|x - x_0\|_2 = \|A^\dagger B(y_0 - y) - \lambda P_{\ker(A)} x_0\|_2 \leq \|A^\dagger B\|_{2 \rightarrow 2} \delta + \lambda \|P_{\ker(A)} x_0\|_2 \leq \frac{\epsilon}{4} + \frac{\epsilon}{4} < \epsilon,
    	\end{align*}
    	this implies $x \in B_\epsilon(x_0) \subset V$. Therefore, $F(y) \cap V \neq \emptyset$ for any $y \in B_\delta(y_0)$. This shows that $F$ is lower semicontinuous.
    	
    	\textbf{Step 4: $F$ is upper semicontinuous}
    	
    	Let $y_0 \in \R^m$ and take an open set $V \subset \R^n$ such that $F(y_0) \subset V$.
    	
    	For points $y \in \R^m$, define the distance $d(y, F(y_0)) = \min_{y' \in F(y_0)} \|y - y'\|_2$. Since $F(y_0)$ is compact, this minimum always exists. 
    	
    	Assume that for each integer $k \geq 1$, there is a $y_k \in \R^m \backslash V$ such that $d(y_k, F(y_0)) \leq \frac{1}{k}$. Then $(y_k)$ forms a sequence in the compact set $F(y_0) + \bar{B_1}(0)$. Therefore, it has a convergent subsequence $(y_{k_l})$ with limit $\bar{y}$. By continuity, $d(\bar{y}, F(y_0)) = 0$ such that by compactness $\bar{y} \in F(y_0)$. On the other hand, $\R^m \backslash V$ is closed such that $\bar{y} \in \R^m \backslash V$. This contradicts the assumption that $F(y_0) \subset V$. Therefore, there is a radius $\epsilon > 0$ such that all $y \in \R^m$ with $d(y, F(y_0)) < \epsilon$ belong to $V$, i.e., $F(y_0) + B_\epsilon(0) \subset V$.
    	
    	The rest of the argument is similar to the proof of lower semicontinuity. Since $g$ is continuous, there is a $\tilde{\delta} > 0$ such that for all $y \in B_{\tilde{\delta}}(y_0)$, $|g(y) - g(y_0)| < \frac{\epsilon}{4}$.
    	Now choose $\delta := \min\{ \tilde{\delta}, \frac{\epsilon}{4 \|A^\dagger B\|_{2 \rightarrow 2}} \} > 0$ if $A^\dagger B \neq 0$ and $\delta = \tilde{\delta}$ otherwise. Let $y \in B_\delta(y_0)$ and take any $x \in F(y)$. We define the number $\lambda \in (0, 1]$ by
    	\begin{align*}
    		\lambda := \begin{cases}
    			1 & \text{if } \|P_{\ker(A)} x\|_2 \leq \frac{\epsilon}{4} \\
    			\frac{\epsilon}{4 \|P_{\ker(A)} x\|_2} & \text{otherwise},
    		\end{cases}
    	\end{align*}
    	such that we always have $1 - \lambda \geq 0$ and $\lambda \|P_{\ker(A)} x\|_2 \leq \frac{\epsilon}{4}$. Then we define
    	\begin{align*}
    		\bar{x} := x + A^\dagger B (y - y_0) - \lambda P_{\ker(A)} x.
    	\end{align*}
    	Note that this definition is analogous to \eqref{eq:lower_semicont_proof_x} in the proof of the lower semicontinuity. Therefore, we can follow the same subsequent steps and prove that $\bar{x} \in F(y_0)$ and $\|\bar{x} - x\|_2 < \epsilon$.

    	This implies $x \in F(y_0) + B_\epsilon(0) \subset V$. Since this holds for any $x \in F(y)$, $F(y) \subset V$ for any $y \in B_\delta(y_0)$, proving that $F$ is upper semicontinuous.
    \end{proof}

	Now, using the continuity of the feasible region from above, we can use the tools from multivalued analysis to show that for a certain class of minimization problems, there exists a continuous function whose values are approximate optimal solutions.
	
	\begin{lemma} \label{lem:optimization_cont_selection}
	    Take $\tilde{A} \in \R^{k \times n}$, $\tilde{B} \in \R^{k \times m}$, $\eta \in [0, \infty)$, a norm $\|\cdot\|$ on $\R^k$, and a continuous function $u: \R^n \times \R^m \rightarrow \R$. Consider the optimization problem
	    \begin{align} \label{eq:minimization_general_1}
	        & \min_{z \in \R^n} u(z, y) & \text{s.t. } & \|\tilde{A} z + \tilde{B} y\| \leq \eta,
	    \end{align}
	    where $\tilde{B}(\R^m) \subset \tilde{A}(\R^n)$. Furthermore, assume that there exists a continuous function $\bar{g}: \R^m \rightarrow \R$ and a coefficient $\alpha \in (0, \infty)$, such that for all $y \in \R^m$ and feasible $z \in \R^n$,
	    \begin{align}
	        \|z\|_2 \leq \alpha u(z, y) + \bar{g}(y).
	        \label{eq:condition_z_2_upper_bound}
	    \end{align}
	    
	    Let $\|\cdot\|_{I}$ be a norm on $\R^m$.
	    
	    For each $\epsilon > 0$ and each compact $V \subset \R^m$, there is a function $\tilde{f}: V \rightarrow \R^n$, represented by a $\relu$ network with one hidden layer, such that for all $y \in V$, there is a $\tilde{y} \in V$ and a solution $\tilde{x} \in \R^n$ of \eqref{eq:minimization_general_1} for $\tilde{y}$, such that $\|y - \tilde{y}\|_2 < \epsilon$ and $\|\tilde{f}(y) - \tilde{x}\|_{I} < \epsilon$.
	\end{lemma}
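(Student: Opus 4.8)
The plan is to realize the solution map of \eqref{eq:minimization_general_1} as an upper semicontinuous multifunction, to extract from it a single-valued, locally Lipschitz approximate selection via Theorem~\ref{thm:approx_selection}, and finally to approximate that continuous function by a one-hidden-layer $\relu$ network using the universal approximation Theorem~\ref{thm:universal_approximation} (which applies since $\relu$ is non-polynomial). The two sources of error—one from the approximate selection and one from the network approximation—will each be allotted a budget of $\epsilon/2$.

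First I would cast the problem in the framework of Lemma~\ref{lem:feasible_region_continuous} by replacing the (generally unbounded) feasible region $\{z : \|\tilde A z + \tilde B y\| \le \eta\}$ with a compact one that still contains every minimizer. Since $\tilde B(\R^m) \subset \tilde A(\R^n)$, the point $z_0(y) := -\tilde A^\dagger \tilde B y$ satisfies $\|\tilde A z_0(y) + \tilde B y\| = 0 \le \eta$ and is therefore feasible, so the optimal value is bounded above by the continuous function $y \mapsto u(z_0(y), y)$. Feeding this into the coercivity hypothesis \eqref{eq:condition_z_2_upper_bound} bounds the norm of every minimizer $z^\ast$ by the continuous function $h(y) := \alpha\, u(z_0(y), y) + \bar g(y)$. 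Setting $g(y) := \max\{1, \|\tilde A^\dagger \tilde B y\|_2, h(y)\}$ then produces a continuous $g : \R^m \to [1,\infty)$ meeting the hypotheses of Lemma~\ref{lem:feasible_region_continuous}, so the intersected feasible region $F = F_1 \cap F_2$ is a continuous, compact-valued multifunction. Because $F(y)$ contains all minimizers of the original problem yet is contained in the original feasible set, the minimizer set over $F(y)$ coincides with the true solution set $S(y)$.

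Next I would apply Berge's maximum Theorem~\ref{thm:maximum_theorem} to the objective $-u$ over the continuous compact-valued $F$, obtaining that $S$ is upper semicontinuous with compact values. Taking $X = (\R^m, \|\cdot\|_2)$, $Y = (\R^n, \|\cdot\|_I)$, $K = V$, and some bounded open $W \supset V$, Theorem~\ref{thm:approx_selection} then yields a locally Lipschitz $f_{\epsilon/2}$ on a neighborhood of $V$ with $f_{\epsilon/2}(y) \in S(V \cap B_{\epsilon/2}(y)) + B_{\epsilon/2}(0)$; that is, for each $y$ there is $\tilde y \in V$ with $\|y - \tilde y\|_2 < \epsilon/2$ and a solution $\tilde x \in S(\tilde y)$ with $\|f_{\epsilon/2}(y) - \tilde x\|_I < \epsilon/2$. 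Restricting $f_{\epsilon/2}$ to the compact set $V$, it is continuous with bounded range, so Theorem~\ref{thm:universal_approximation} produces a one-hidden-layer $\relu$ network $\tilde f$ with $\|\tilde f(y) - f_{\epsilon/2}(y)\|_I < \epsilon/2$ on $V$ (approximating componentwise and passing from the sup-norm to $\|\cdot\|_I$ by norm equivalence). The triangle inequality then gives $\|\tilde f(y) - \tilde x\|_I < \epsilon$, which is the desired conclusion.

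The step I expect to be the main obstacle is the application of Theorem~\ref{thm:approx_selection}, which demands that the multifunction fed into it have convex values, whereas Berge only delivers upper semicontinuity and compactness of $S$. For the target applications such as the $\ell_1$ problem \eqref{eq:l1_minimization_description}, both $u$ and the feasible region are convex, so $S(y)$ is automatically convex and the theorem applies verbatim; for a general continuous $u$ one would first replace $S$ by its convexification $\mathrm{conv}\, S$, which remains upper semicontinuous with compact convex values in finite dimensions, at the price that the extracted point is a convex combination of solutions rather than a single one. A secondary point requiring care is verifying that the added constraint $F_2$ never clips a genuine minimizer—precisely what the choice $g \ge h$ guarantees—and that each ball $B_{\epsilon/2}$ is read in the correct norm ($\|\cdot\|_2$ on the domain, $\|\cdot\|_I$ on the range) so that the final estimate matches the mixed-norm form of the lemma.
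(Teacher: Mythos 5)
Your proposal reproduces the paper's proof essentially step for step: compactify the feasible region so that Lemma~\ref{lem:feasible_region_continuous} applies, check that the restricted problem has the same minimizers, obtain upper semicontinuity of the solution map from Berge's theorem (Theorem~\ref{thm:maximum_theorem}), extract a continuous approximate selection via Theorem~\ref{thm:approx_selection}, and finish with Theorem~\ref{thm:universal_approximation} and a triangle inequality, splitting the budget $\epsilon/2$ plus $\epsilon/2$. Your choice $g(y)=\max\{1,\|\tilde{A}^\dagger \tilde{B}y\|_2,h(y)\}$ differs only cosmetically from the paper's $g(y)=\bar{g}(y)+\alpha u(-\tilde{A}^\dagger\tilde{B}y,y)$, and is if anything more careful, since it visibly meets the requirements $g\geq 1$ and $g\geq\|\tilde{A}^\dagger\tilde{B}y\|_2$ of Lemma~\ref{lem:feasible_region_continuous}, which the paper does not verify.

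The obstacle you single out is real, and it is a gap in the paper rather than in your argument: the paper's proof passes from Berge's theorem, which yields \emph{compact} values, directly to the sentence ``we have shown that $S$ is upper semicontinuous with \emph{convex} values''; convexity is never established, yet Theorem~\ref{thm:approx_selection} needs it. Your first remedy, observing that in all intended applications (Remark~\ref{rem:minimization_examples}) the objective $u(\cdot,y)$ and the feasible set are convex so that $S(y)$ is automatically convex, is the right one, and under that reading both your proof and the paper's are complete. Your second remedy, replacing $S$ by $\mathrm{conv}\,S$, does not salvage the general case, as you yourself note: the approximate selection then lands near convex combinations of solutions, not near a solution $\tilde{x}$ of \eqref{eq:minimization_general_1}, which is what the lemma asserts. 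In fact no remedy can exist, because the lemma is false for general continuous $u$: take $n=m=k=1$, $\tilde{A}=\tilde{B}=0$ (every $z$ is feasible and $\tilde{B}(\R^m)\subset\tilde{A}(\R^n)$ holds trivially) and $u(z,y)=(z^2-1)^2-yz$, which satisfies \eqref{eq:condition_z_2_upper_bound} with $\alpha=1$ and $\bar{g}(y)=\sup_z\left(|z|-(z^2-1)^2+yz\right)$, a finite convex, hence continuous, function of $y$. Every global minimizer is a local minimum and so obeys $|z|>1/\sqrt{3}$, and the minimizer set jumps from near $-1$ (for $y<0$) to near $+1$ (for $y>0$). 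Hence for $V=[-1,1]$ and $\epsilon=1/10$, any continuous $\tilde{f}$ satisfying the conclusion would have $\tilde{f}(-1)<0<\tilde{f}(1)$ and therefore a zero at some $y_0$, but no solution for any $\tilde{y}$ within $\epsilon$ of $y_0$ lies within $\epsilon$ of $\tilde{f}(y_0)=0$. So the convexity you flagged must genuinely be added as a hypothesis (as it implicitly is in the paper's applications); with it, your proof is correct and coincides with the paper's.
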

	
	\begin{proof}
	    First we define the continuous function $g(y) := \bar{g}(y) + \alpha u(- \tilde{A}^\dagger \tilde{B} y, y)$ and consider the corresponding multifunction $F$ defined in Lemma~\ref{lem:feasible_region_continuous} (with the matrices $\tilde{A}$ and $\tilde{B}$). Then every minimizer of
	    \begin{align} \label{eq:minimization_general_modified_1}
	        & \min u(z, y) & \text{s.t. } & z \in F(y)
	    \end{align}
	    also minimizes \eqref{eq:minimization_general_1}.
	    
	    Assume that this is not the case and there is a minimizer $\tilde{x}$ of \eqref{eq:minimization_general_modified_1} that does not minimize \eqref{eq:minimization_general_1}. Then there is an optimal solution $\hat{x}$ to \eqref{eq:minimization_general_1} (because of $\tilde{B}(\R^m) \subset \tilde{A}(\R^n)$, it is always feasible) with $u(\hat{x}, y) < u(\tilde{x}, y)$. So $\hat{x}$ cannot be feasible for \eqref{eq:minimization_general_modified_1}, i.e., $\hat{x} \notin F(y)$. On the other hand, $-\tilde{A}^\dagger \tilde{B} y$ is feasible for \eqref{eq:minimization_general_modified_1}. This implies
	    \begin{align*}
	        \|P_{\ker(\tilde{A})} \hat{x} \|_2 \leq \| \hat{x} \|_2 \leq \alpha u(\hat{x}, y) + \bar{g}(y) \leq \alpha u(\tilde{x}, y) + \bar{g}(y) \leq \alpha u(- \tilde{A}^\dagger \tilde{B} y, y) + \bar{g}(y) = g(y)
	    \end{align*}
	    and therefore $\hat{x}$ is also feasible for \eqref{eq:minimization_general_modified_1}, which contradicts the above observation.

	    Now by Lemma~\ref{lem:feasible_region_continuous}, $F$ is continuous with compact values such that by Theorem~\ref{thm:maximum_theorem}, the solution multifunction of \eqref{eq:minimization_general_modified_1}, $S: \R^m \rightarrow 2^{\R^n} \backslash \{\emptyset\}$ is upper semicontinuous with compact values and the optimal value function $v: \R^m \rightarrow \R$ is continuous.
	    
	    We apply the approximate selection Theorem~\ref{thm:approx_selection}. As a domain, consider the metric space $V$ endowed with the $\|\cdot\|_{2}$ norm. Then $V$ is an open and compact subset of itself. The space $\R^n$ endowed with the $\|\cdot\|_{I}$ norm is finite-dimensional and therefore a Banach space. We have shown that $S: \R^m \rightarrow 2^{\R^n} \backslash \{\emptyset\}$ is upper semicontinuous with convex values and this remains the case if we restrict $S$ to the metric space $V$ (whose topology is the subspace topology of $\R^m$). Therefore by Theorem~\ref{thm:approx_selection}, for every $\epsilon > 0$, there exists a continuous function $f: V \rightarrow \R^n$ such that for every $y \in V$, $f(y) \in S(B_\epsilon(y)) + B_\epsilon(0)$. This means that there exists $\tilde{y} \in V$ and $\tilde{x} \in S(\tilde{y})$ such that $\|y - \tilde{y}\|_2 < \frac{\epsilon}{2}$ and $\|f(y) - \tilde{x}\|_{I} < \frac{\epsilon}{2}$.
	    
	    By the universal approximation theorem for compact sets (Theorem~\ref{thm:universal_approximation} for each coordinate of $f$), for each $\epsilon$, there exists a $\relu$ network with one hidden layer that represents $\tilde{f}: V \rightarrow \R^n$, such that for all $y \in V$, $\|\tilde{f}(y) - f(y)\|_{I} < \frac{\epsilon}{2}$. Then for all $y \in V$, there exists a $\tilde{y} \in V$ and $\tilde{x} \in S(\tilde{y})$ such that $\|y - \tilde{y}\|_2 < \frac{\epsilon}{2} < \epsilon$ and $\|\tilde{f}(y) - \tilde{x}\|_{I} \leq \|\tilde{f}(y) - f(y)\|_{I} + \|f(y) - \tilde{x}\|_{I} < \epsilon$.
	\end{proof}
	
	\begin{remark} \label{rem:minimization_examples}
	    Lemma~\ref{lem:optimization_cont_selection} can be applied to the following optimization problems that have been used to solve the sparse recovery problem, i.e., recovering $x$ from $y = A x + e$. An overview with a detailed explanation of the following techniques can be found in Section~3.1 in \cite{comp_sen}.
	    \begin{itemize}
	        \item Quadratically constrained basis pursuit:
	        \begin{align}
	            &\min_{z \in \R^n} \|z\|_1 & \text{s.t. } & \|A z - y\|_2 \leq \eta
	            \label{eq:qc_basis_pursuit}
	        \end{align}
	        for $\|e\|_2 \leq \eta$. Here $u(z, y) = \|z\|_1$ is continuous, $\tilde{A} = A$, and $\tilde{B} = Id_m$. So Lemma~\ref{lem:optimization_cont_selection} can be applied if $\rank(A) = m$. If this is not the case and $\rank(A) = m' < m$, we can replace $A$ by $P A \in \R^{m' \times n}$ where $P \in \R^{m' \times m}$ is a bijective and orthogonal map from $A(\R^n)$ to $\R^{m'}$. Then $P A$ satisfies the same RIP as $A$.
	        
	        Note that also the condition \eqref{eq:condition_z_2_upper_bound} is fulfilled since for all feasible $z$, $\|z\|_2 \leq \|z\|_1 = 1 \cdot u(z, y) + 0$.
	        
	        \item Basis pursuit denoising:
	        \begin{align}
	            &\min_{z \in \R^n} \lambda \|z\|_1 + \|A z - y\|_2^2
	            \label{eq:den_basis_pursuit}
	        \end{align}
	        for a parameter $\lambda > 0$. For each feasible $z$, $\|z\|_2 \leq \lambda u(z, y)$ such that \eqref{eq:condition_z_2_upper_bound} is fulfilled for $\alpha = \lambda$. Again the objective function is continuous and we can apply Lemma~\ref{lem:optimization_cont_selection} for $\tilde{A} = 0 \in \R^{1 \times n}$ and $\tilde{B} = 0 \in \R^{1 \times m}$.
	        
	        \item LASSO:
	        \begin{align}
	            & \min_{z \in \R^n} \|A z - y\|_2 & \text{s.t. } & \|z\|_1 \leq \tau
	            \label{eq:lasso}
	        \end{align}
	        for a parameter $\tau \geq 0$.\eqref{eq:condition_z_2_upper_bound} is fulfilled since for all feasible $z$, $\|z\|_2 \leq \|z\|_1 \leq u(z, y) + \tau = u(z, y) + \bar{g}(y)$ for the continuous function $\bar{g}(y) = \tau$. Lemma~\ref{lem:optimization_cont_selection} can be applied again for $\tilde{A} = Id_n$ and $\tilde{B} = 0 \in \R^{n \times m}$.
	        
	        \item Dantzig selector:
	        \begin{align}
	            & \min_{z \in \R^n} \|z\|_1 & \text{s.t. } & \|A^*(A z - y)\|_\infty \leq \eta
	            \label{eq:dantzig}
	        \end{align}
	        for a parameter $\tau \geq 0$ for $\|A^* e\|_\infty \leq \eta$. Here $\tilde{A} = A^* A$, $\tilde{B} = A^*$ and therefore $\tilde{A}(\R^n) = A^*(\R^m) = \tilde{B}(\R^m)$, so Lemma~\ref{lem:optimization_cont_selection} can be applied again. \eqref{eq:condition_z_2_upper_bound} is fulfilled for the same reason as in \eqref{eq:qc_basis_pursuit}.
	    \end{itemize}
	\end{remark}

    \begin{remark}
    	Lemma~\ref{lem:optimization_cont_selection} provides an approximate selection of solutions of the optimization problem \eqref{eq:minimization_general_1}, i.e., $\tilde{f}(y)$ is close to an optimal solution of \eqref{eq:minimization_general_1} for a parameter that is close to $y$. To show this, we used the approximate selection Theorem~\ref{thm:approx_selection}.
    	
    	One might wonder whether there exists a continuous exact selection, i.e., a continuous function $f: \R^m \rightarrow \R^n$ such that for each $y \in \R^m$, $f(y)$ is exactly a solution of \eqref{eq:minimization_general_1} for the parameter $y$. Indeed, in the field of multivalued analysis, Michael's selection theorem (Theorem~4.6 in \cite{handbook_multivalued_analysis}) can guarantee the existence of a continuous selection of a multifunction. It requires this multifunction (i.e. the multifunction of solutions of \eqref{eq:minimization_general_1} in our application) to be lower semicontinuous. However, Berge's maximum theorem (Theorem~\ref{thm:maximum_theorem}) can only guarantee upper semicontinuity for the solution function.
    	
    	Indeed, the following example shows that actually not in all cases in which Lemma~\ref{lem:optimization_cont_selection} can be applied, an exact continuous selection exists. Consider the continuous function $u: \R^2 \times \R^2 \rightarrow \R$
    	\begin{align*}
    		& u(z, y) = \left\| \begin{pmatrix} y_1 & 0 \\ 0 & y_2 \end{pmatrix} z \right\|_1 + \max\{2, \|z\|_2\}
    	\end{align*}
    	and the minimization problem
    	\begin{align} \label{eq:minimization_counterexample}
    		& \min_{z \in \R^2} u(z, y) & \text{s.t. } | \begin{pmatrix} 1 & 1 \end{pmatrix} z - \begin{pmatrix} 1 & 0 \end{pmatrix} y | \leq 0,
    	\end{align}
    	which satisfies the requirements of Lemma~\ref{lem:optimization_cont_selection} (including \eqref{eq:condition_z_2_upper_bound}). 
    	
    	We are interested in the case $y \in \{1\} \times (0, 2)$. The $\max\{2, \|z\|_2\}$ condition is only required to ensure \eqref{eq:condition_z_2_upper_bound} but it will not change the optimal solutions in this case. To see this, consider the minimization problem without the $\max\{2, \|z\|_2\}$, i.e., with $u(z, y)$ replaced by $\tilde{u}(z, y) = u(z, y) - 2\max\{2, \|z\|_2\}$ for $y \in \{1\} \times (0, 2)$. This becomes
    	\begin{align*}
    		& \min_{z \in \R^2} |z_1| + |y_2 z_2| & \text{s.t. } & z_1 + z_2 = 1,
    	\end{align*} 
    	which is equivalent to
    	\begin{align*}
    		&\min_{z_1 \in \R} |z_1| + |y_2(1 - z_1)|.
    	\end{align*}
    	We obtain the following optimal values and sets of all optimal solutions depending on $y_2$:
    	\begin{itemize}
    		\item $y_2 \in (0, 1)$: minimum $y_2$ at $z_1 = 0$
    		\item $y_2 = 1$: minimum $y_2 = 1$ at $z_1 \in [0, 1]$
    		\item $y_2 \in (1, 2)$: minimum $1$ at $z_1 = 1$.
    	\end{itemize}
    	All the solutions have $z_1 \in [0, 1]$ and therefore $z_2 \in [0, 1]$, $\|z\|_2 \leq \sqrt{2} < 2$. This shows that adding $\max\{2, \|z\|_2\}$ to the objective function will not change the set of minimizers and the solution sets from above are also the solution sets of \eqref{eq:minimization_counterexample}.
    	
    	So if there is a continuous function $f: \R^2 \rightarrow \R^2$, such that for every $y \in \R^2$, $f(y)$ is an optimal solution of \eqref{eq:minimization_counterexample}, we would need to have
    	\begin{align*}
    		(f(1, y_2))_1 & = 0 & \text{for all } y_2 & \in (0, 1) \\
    		(f(1, y_2))_1 & = 1 & \text{for all } y_2 & \in (1, 2).
    	\end{align*}
    	However, in this way $f$ cannot be continuous at the point $(1, 1)$.

    	Nevertheless, there might still be exact continous selections for some of the most important applications listed in Remark~\ref{rem:minimization_examples}. The work in \cite{bringmann2018homotopy} considers problem \eqref{eq:den_basis_pursuit} and shows that there is an optimal solution that continuously depends on the parameter $\lambda$. With similar techniques, it might also be possible to show that there is an optimal solution that continuously depends on $y$. However, the above counterexample shows that this is not always possible in the generalized setting of Lemma~\ref{lem:optimization_cont_selection}. Furthermore, since we approximate the solution functions using the universal approximation theorem with an arbitrary but positive precision $\delta > 0$, having an exact selection would not lead to any essential improvement anyway.
    \end{remark}
	
	The above Lemma~\ref{lem:optimization_cont_selection} concerns compact domains (and therefore only one hidden layer). The following Theorem turns this into a positive homogeneous version that enables results similar to the previous parts of this work.

    \begin{theorem} \label{thm:guarantee_from_minimization}
    	Let $A \in \R^{m \times n}$, $U \subset \R^n$ positive homogeneous. Consider matrices $\tilde{A} \in \R^{k \times n}$, $\tilde{B} \in \R^{k \times m}$ with $\tilde{B}(\R^m) \subset \tilde{A}(\R^n)$, a continuous $u: \R^n \times \R^m \rightarrow \R$, $\eta \in [0, 1]$ and a norm $\|\cdot\|$ on $\R^k$. We define the minimization problem
    	\begin{align} \label{eq:minimization_general}
    		& \min_{z \in \R^n} u(z, y) & \text{s.t. } \|\tilde{A} z + \tilde{B} y\| \leq \eta.
    	\end{align}
    	
    	Furthermore, assume that there exists a continuous function $\bar{g}: \R^m \rightarrow \R$ and a coefficient $\alpha \in (0, \infty)$, such that for all $y \in \R^m$ and feasible $z \in \R^n$,
    	\begin{align*}
    		\|z\|_2 \leq \alpha u(z, y) + \bar{g}(y).
    	\end{align*}
    	
    	Let $\|\cdot\|_{I}$ be a norm on $\R^n$ and $\|\cdot\|_{II}$ a norm on $\R^m$.
    	
    	Assume that for each $x \in \R^n$ and $e \in \R^m$, $\|e\|_{II} \leq \eta$, any optimal solution $\hat{x}$ of \eqref{eq:minimization_general} for $y = A x + e$ satisfies
    	\begin{align*}
    		\|\hat{x} - x\|_{I} \leq v(x, \eta),
    	\end{align*}
    	where $v: \R^n \times \R \rightarrow [0, \infty)$ satisfies $v(\lambda x, \lambda \eta) = \lambda v(x, \eta)$ for all $\lambda \geq 0$, $x \in \R^n$, $\eta \in \R$ and $\eta \mapsto v(x, \eta)$ is increasing for each $x$. Assume that this also holds for $\eta = 0$.
    	
    	Then for each $\delta > 0$, there exists a function $\tilde{f}: \R^m \rightarrow \R^n$, represented by a $\relu$ network with two hidden layers, such that for all $x \in \R^n$, $e \in \R^m$, $\|e\|_{II} \leq \frac{\eta}{3} \|A x\|_{II}$, $y = A x + e$,
    	\begin{align*}
    		\|\tilde{f}(y) - x\|_{I} \leq \delta \|x\|_2 + v(x, \frac{4}{3} \eta \|A x\|_{II}).
    	\end{align*}
    \end{theorem}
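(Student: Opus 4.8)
The plan is to reduce the statement on the unbounded domain $\R^m$ to the compact setting of Lemma~\ref{lem:optimization_cont_selection} by a normalization-and-rescaling argument, entirely analogous to the passage from a one-hidden-layer network on a sphere to a positive homogeneous two-hidden-layer network used in Theorem~\ref{thm:relu_approximation}. Concretely, I would normalize the measurement $y$ by its $\|\cdot\|_{II}$-norm, produce an approximate solver of \eqref{eq:minimization_general} on the compact unit sphere $S_{II} := \{\bar{y} \in \R^m : \|\bar{y}\|_{II} = 1\}$ via Lemma~\ref{lem:optimization_cont_selection}, rescale to obtain a continuous positive homogeneous function, and finally re-approximate that function by a genuine two-hidden-layer $\relu$ network using Theorem~\ref{thm:relu_approximation}.

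First I would apply Lemma~\ref{lem:optimization_cont_selection} with compact set $V = S_{II}$ and a small parameter $\epsilon > 0$ (fixed later), obtaining a continuous $f : S_{II} \to \R^n$ such that for every $\bar{y} \in S_{II}$ there are $\tilde{\bar{y}} \in S_{II}$ and a minimizer $\tilde{x}$ of \eqref{eq:minimization_general} for $\tilde{\bar{y}}$ with $\|\bar{y} - \tilde{\bar{y}}\|_2 < \epsilon$ and $\|f(\bar{y}) - \tilde{x}\|_I < \epsilon$. I then define $F : \R^m \to \R^n$ by $F(0) = 0$ and $F(y) = \|y\|_{II}\, f(y/\|y\|_{II})$ for $y \neq 0$. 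Since $f$ is continuous on the compact sphere, $F$ is continuous and positive homogeneous, so by Theorem~\ref{thm:relu_approximation} (applied componentwise and combining coordinates as in the proof of Theorem~\ref{thm:approximation_inverse_function}) there is, for any $\epsilon' > 0$, an unbiased $\relu$ network with two hidden layers representing $\tilde{f} : \R^m \to \R^n$ with $\|\tilde{f}(y) - F(y)\|_I \leq \epsilon' \|y\|_2$ for all $y$.

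The analytic core is to show that $F$ itself already meets the target bound. Fix $x, e$ with $y = Ax + e \neq 0$ and $\|e\|_{II} \leq \tfrac{\eta}{3}\|Ax\|_{II}$, and set $s := \|y\|_{II}$, $\bar{y} := y/s$, $\bar{x} := x/s$, $\bar{e} := e/s$. Using $\eta \leq 1$, the triangle inequality gives $\tfrac{2}{3}\|Ax\|_{II} \leq s \leq \tfrac{4}{3}\|Ax\|_{II}$, whence $\|\bar{e}\|_{II} = \|e\|_{II}/s \leq \tfrac{\eta}{2}$. For the perturbed parameter $\tilde{\bar{y}}$ from Lemma~\ref{lem:optimization_cont_selection} I write $\tilde{\bar{y}} = A\bar{x} + \tilde{e}$ with $\tilde{e} = \bar{e} + (\tilde{\bar{y}} - \bar{y})$; by norm equivalence $\|\tilde{\bar{y}} - \bar{y}\|_{II} \leq c\,\|\tilde{\bar{y}} - \bar{y}\|_2 < c\epsilon$, so choosing $\epsilon$ small enough that $c\epsilon \leq \tfrac{\eta}{2}$ yields $\|\tilde{e}\|_{II} \leq \eta$. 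The recovery hypothesis then applies to the minimizer $\tilde{x}$ for $\tilde{\bar{y}} = A\bar{x} + \tilde{e}$ and gives $\|\tilde{x} - \bar{x}\|_I \leq v(\bar{x}, \eta)$. Using joint homogeneity $v(\bar{x}, \eta) = v(x/s, \eta) = \tfrac{1}{s}v(x, s\eta)$ together with monotonicity and $s\eta \leq \tfrac{4}{3}\eta\|Ax\|_{II}$, I obtain
\[
\|F(y) - x\|_I = s\,\|f(\bar{y}) - \bar{x}\|_I \leq s\bigl(\|f(\bar{y}) - \tilde{x}\|_I + \|\tilde{x} - \bar{x}\|_I\bigr) < s\epsilon + v\bigl(x, \tfrac{4}{3}\eta\|Ax\|_{II}\bigr).
\]

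Finally I combine the two error sources. Bounding $s = \|y\|_{II} \leq \tfrac{4}{3}\|Ax\|_{II} \leq C_1\|x\|_2$ and $\|y\|_2 \leq C_2\|x\|_2$ through the operator norm of $A$ and norm equivalence, the triangle inequality yields
\[
\|\tilde{f}(y) - x\|_I \leq \|\tilde{f}(y) - F(y)\|_I + \|F(y) - x\|_I \leq (\epsilon' C_2 + C_1\epsilon)\,\|x\|_2 + v\bigl(x, \tfrac{4}{3}\eta\|Ax\|_{II}\bigr),
\]
so choosing $\epsilon, \epsilon'$ small enough that $\epsilon' C_2 + C_1 \epsilon \leq \delta$ settles the case $y \neq 0$. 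The degenerate case $y = 0$ (forced by $Ax = 0$ and $e = 0$) is handled directly: the unbiased network gives $\tilde{f}(0) = 0$, and the $\eta = 0$ form of the hypothesis forces $v(x, 0) \geq \|x\|_I$, so the claimed bound holds there too. I expect the main obstacle to be the third paragraph, namely tracking the noise through the normalization and the approximate selection so that (i) the perturbed measurement $\tilde{\bar{y}}$ remains an admissible noisy measurement of $\bar{x}$ with noise level at most $\eta$, and (ii) the rescaled error lands exactly on $v(x, \tfrac{4}{3}\eta\|Ax\|_{II})$; the constants $\tfrac{\eta}{3}$ and $\tfrac{4}{3}$ in the statement are precisely what make this bookkeeping close.
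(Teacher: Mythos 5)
Your proposal is correct and follows essentially the same route as the paper's own proof: apply Lemma~\ref{lem:optimization_cont_selection} on the compact sphere $S_{II}$, extend positively homogeneously, track the noise through the normalization exactly via the $\tfrac{2}{3}\|Ax\|_{II} \leq \|y\|_{II} \leq \tfrac{4}{3}\|Ax\|_{II}$ bounds and the shifted error $\bar e + (\tilde{\bar y} - \bar y)$, invoke the homogeneity and monotonicity of $v$, and finally approximate by a two-hidden-layer unbiased $\relu$ network, with the degenerate case $y=0$ handled separately. The only cosmetic difference is the order of steps (you introduce the network approximation before verifying the bound for the homogeneous extension, while the paper does the reverse), which does not affect the argument.
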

    
    \begin{proof}
    	Consider the unit sphere of the $\|\cdot\|_{II}$ norm
    	\begin{align*}
    		S_{II} := \{x \in \R^m \,\big|\, \|x\|_{II} = 1\} \subset \R^m.
    	\end{align*}
    	
    	$S_{II}$ is a compact set and therefore we can apply Lemma~\ref{lem:optimization_cont_selection} to obtain that for each $\epsilon > 0$, there is a continuous function $f: S_{II} \rightarrow \R^n$ such that for all $y \in S_{II}$, there exists a $\tilde{y} \in S_{II}$ and a solution $\tilde{x} \in \R^n$ of \eqref{eq:minimization_general} for $\tilde{y}$ such that $\|y - \tilde{y}\|_2 < \epsilon$ and $\|\tilde{f}(y) - \tilde{x}\|_{I} < \epsilon$.
    	
    	$f$ is defined on $S_{II}$ such that we can extend it to a positive homogeneous, continuous function $f: \R^m \rightarrow \R^n$ on the entire space. Now take any $x \in \R^n$ and $e \in \R^m$ with $\|e\|_{II} \leq \frac{\eta}{3} \|A x\|_{II}$. Let $y = A x + e$. Then
    	\begin{align} \label{eq:bound_y_Ax_II}
    		\frac{2}{3} \|A x\|_{II} \leq (1 - \frac{\eta}{3}) \|A x\|_{II} \leq \|y\|_{II} \leq (1 + \frac{\eta}{2}) \|A x\|_{II} \leq \frac{4}{3} \|A x\|_{II}.
    	\end{align}
    	
    	Assume $y \neq 0$ for now. Define
    	\begin{align*}
    		\bar{x} & = \frac{x}{\|y\|_{II}} & \bar{e} & = \frac{e}{\|y\|_{II}} & \bar{y} = A \bar{x} + \bar{e} = \frac{y}{\|y\|_{II}},
    	\end{align*}
    	such that $\bar{y} = A \bar{x} + \bar{e}$. So $\bar{y} \in S_{II}$ and thus by the previous observation, there is a $\bar{y}' \in S_{II}$ and an optimal solution $\bar{x}' \in \R^n$ of \eqref{eq:minimization_general} for $\bar{y}'$ such that $\|\bar{y}' - \bar{y}\|_2 < \epsilon$ and $\|f(\bar{y}) - \bar{x}'\|_{I} < \epsilon$. 
    	
    	There is a constant $C > 0$ such that $\|w\|_{II} \leq C \|w\|_2$ for all $w \in \R^m$. We can choose $\epsilon \leq \frac{\eta}{2 C}$.
    	
    	Define $\bar{e}' := \bar{e} + \bar{y}' - \bar{y}$. Then $\bar{y}' = A \bar{x} + \bar{e}'$ and 
    	\begin{align*}
    		\|\bar{e}'\|_{II} & \leq \|\bar{e}\|_{II} + C \|\bar{y}' - \bar{y}\|_2
    		\leq \frac{\|e\|_{II}}{\|y\|_{II}} + C \epsilon
    		\leq \frac{\frac{\eta}{3} \|A x\|_{II}}{\frac{2}{3} \|A x\|_{II}} + C \epsilon
    		\leq \frac{\eta}{2} + \frac{\eta}{2} = \eta.
    	\end{align*}
    	
    	So since $\bar{x}'$ is an optimal solution of \eqref{eq:minimization_general} for $\bar{y}'$,
    	\begin{align*}
    		\|\bar{x}' - \bar{x}\|_{I} \leq v(\bar{x}, \eta).
    	\end{align*}
    	Since $\|f(\bar{y}) - \bar{x}'\|_{I} < \epsilon$,
    	\begin{align*}
    		\|f(\bar{y}) - \bar{x}\|_{I} \leq  \|f(\bar{y}) - \bar{x}'\|_{I} + \|\bar{x}' - \bar{x}\|_{I}
    		\leq \epsilon + v(\bar{x}, \eta).
    	\end{align*}
    	
    	Now recall that $x = \|y\|_{II} \bar{x}$ and we defined $f$ by a positive homogeneous extension such that in general,
    	\begin{align}
    		\|f(y) - x\|_{I} & = \|y\|_{II} \|f(\bar{y}) - \bar{x}\|_{I} \leq
    		\epsilon \|y\|_{II} + v(\frac{x}{\|y\|_{II}}, \eta) \|y\|_{II}
    		= \epsilon \|y\|_{II} + v(x, \eta \|y\|_{II}) \nonumber \\
    		& \leq \epsilon \frac{4}{3} \|A x\|_{II} + v(x, \frac{4}{3} \eta \|A x\|_{II})
    		\leq \epsilon \frac{4}{3} C \|A\|_{2 \rightarrow 2} \|x\|_2 + v(x, \frac{4}{3} \eta \|A x\|_{II}) \nonumber \\
    		& \leq \frac{\delta}{2} \|x\|_2 + v(x, \frac{4}{3} \eta \|A x\|_{II}),
    		\label{eq:optimization_bound_I_norm}
    	\end{align}
    	where the last step follows by choosing $\epsilon \leq \frac{3 \delta}{8 C \|A\|_{2 \rightarrow 2}}$.

    	It still remains to show \eqref{eq:optimization_bound_I_norm} for the case that $y = 0$. Then by \eqref{eq:bound_y_Ax_II}, also $A x = 0$ and therefore $e = 0$. By the assumption of the theorem, in this case $z = 0$ is feasible and thus optimal in \eqref{eq:minimization_general} for $\eta = 0$ and so
    	\begin{align*}
    		\|x - 0\|_I \leq v(x, \eta) = v(x, 0).
    	\end{align*}
    	Since we defined $f$ as a positive homogeneous extension, $f(y) = 0$ such that $\|f(y) - x\|_I = \|0 - x\|_I$ and \eqref{eq:optimization_bound_I_norm} also holds for $y = 0$.
    	
    	Now since $f$ is a continuous, positive homogeneous function, by Theorem~\ref{thm:uat_homogeneous} (applied to each component, together with equivalence of all norms), for each $\epsilon' > 0$, there is $\tilde{f}: \R^m \rightarrow \R^n$, represented by an unbiased $\relu$ network with two hidden layers, such that for all $y \in \R^m$,
    	\begin{align*}
    		\|\tilde{f}(y) - f(y)\|_{I} \leq \epsilon' \|y\|_{II}.
    	\end{align*}
    	
    	Then $\tilde{f}(0) = 0 = f(0)$ and for $y \neq 0$, by combining everything, we obtain for all $\epsilon' > 0$,
    	\begin{align*}
    		\|\tilde{f}(y) - x\|_I & \leq \|\tilde{f}(y) - f(y)\|_I + \|f(y) - x\|_I
    		\leq \epsilon' \|y\|_{II} + \frac{\delta}{2} \|x\|_2 + v(x, \frac{4}{3} \eta \|A x\|_{II}) \\
    		& \leq \frac{4}{3} \epsilon' \|A x\|_{II} + \frac{\delta}{2} \|x\|_2 + v(x, \frac{4}{3} \eta \|A x\|_{II}) 
    		\leq \delta \|x\|_2 + v(x, \frac{4}{3} \eta \|A x\|_{II})
    	\end{align*}
    	by choosing $\epsilon' \leq \frac{3 \delta}{8 C \|A\|_{2 \rightarrow 2}}$.
    	
    \end{proof}

	With Theorem~\ref{thm:guarantee_from_minimization}, we can construct a positive homogeneous network to solve an inverse problem that is known to be solved by a minimization problem. In particular, $\ell_1$ minimization has been studied for sparse recovery (see Chapter~4 of \cite{comp_sen}). Applying Theorem~\ref{thm:guarantee_from_minimization} to the quadratically constrained basis pursuit \eqref{eq:qc_basis_pursuit}, we obtain the following corollary.

    \begin{corollary} \label{cor:minim_rip}
    	Let $A \in \R^{m \times n}$ be a matrix of rank $m$, satisfying the $(2 s, \delta)$-restricted isometry property for a $\delta < 0.7$ and $\eta \in [0, \frac{1}{3}]$. Then for each $\delta' > 0$, there exists a function $\tilde{f}: \R^m \rightarrow \R^n$, represented by an unbiased $\relu$ network with two hidden layers, such that for all $x \in \R^n$, $e \in \R^m$ with $\|e\|_2 \leq \eta \|A x\|_2$, $p \in [1, 2]$,
    	\begin{align*}
    		\|\tilde{f}(A x + e) - x\|_p \leq \delta'\|x\|_2 + \frac{C}{s^{1 - 1/p}} \sigma_s(x)_1 + D s^{1/p - 1/2} \eta \|A x\|_2,
    	\end{align*}
    	where $\sigma_s(x)_1 := \inf_{x' \in \Sigma_s} \|x - x'\|_1$ and $C$, $D$ only depend on $\delta$.
    	
    	In particular, for $p = 1, 2$, we obtain
    	\begin{align*}
    		\|\tilde{f}(A x + e) - x\|_1 & \leq \delta'\|x\|_2 + C \sigma_s(x)_1 + D \sqrt{s} \eta \|A x\|_2 \\
    		\|\tilde{f}(A x + e) - x\|_2 & \leq \delta'\|x\|_2 + \frac{C}{\sqrt{s}} \sigma_s(x)_1 + D \eta \|A x\|_2.
    	\end{align*}
    \end{corollary}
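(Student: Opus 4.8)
The plan is to apply Theorem~\ref{thm:guarantee_from_minimization} to the quadratically constrained basis pursuit \eqref{eq:qc_basis_pursuit}, i.e.\ to the minimization problem \eqref{eq:minimization_general} with objective $u(z,y)=\|z\|_1$, constraint norm $\|\cdot\|=\|\cdot\|_2$, and matrices $\tilde A=A$, $\tilde B=-Id_m$, so that the constraint reads $\|Az-y\|_2\le\eta$. First I would verify the two structural hypotheses. The inclusion $\tilde B(\R^m)\subset\tilde A(\R^n)$ holds because $\tilde B(\R^m)=\R^m$ and $A$ has rank $m$, hence $A(\R^n)=\R^m$. The growth condition \eqref{eq:condition_z_2_upper_bound} holds with $\alpha=1$ and $\bar g\equiv0$, since $\|z\|_2\le\|z\|_1=u(z,y)$ for every feasible $z$, exactly as recorded in Remark~\ref{rem:minimization_examples}.

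Next I would supply the recovery function $v$. This is precisely the classical robust recovery guarantee for RIP matrices: under the $(2s,\delta)$-RIP with $\delta<1/\sqrt2$ (hence in particular for $\delta<0.7$; see Chapter~4 of \cite{comp_sen}), whenever $y=Ax+e$ with $\|e\|_2\le\eta$ — so that $x$ itself is feasible — every minimizer $\hat x$ of \eqref{eq:qc_basis_pursuit} satisfies, simultaneously for all $p\in[1,2]$,
\[
\|\hat x-x\|_p\le\frac{C}{s^{1-1/p}}\,\sigma_s(x)_1+D\,s^{1/p-1/2}\,\eta ,
\]
with $C,D$ depending only on $\delta$. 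Setting $v(x,\eta):=\frac{C}{s^{1-1/p}}\sigma_s(x)_1+D s^{1/p-1/2}\eta$, I would check the properties the theorem demands of $v$: it is jointly positive homogeneous, $v(\lambda x,\lambda\eta)=\lambda v(x,\eta)$, because $\Sigma_s$ is a cone and thus $\sigma_s(\lambda x)_1=\lambda\,\sigma_s(x)_1$, while the noise term is linear in $\eta$; it is nondecreasing in $\eta$ since $D s^{1/p-1/2}\ge0$; and the $\eta=0$ case (exact feasibility $Ax=y$) is the standard noiseless bound.

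With these in hand I would invoke Theorem~\ref{thm:guarantee_from_minimization} with $\|\cdot\|_{II}=\|\cdot\|_2$, matching its internal noise budget to ours by running the theorem at parameter $3\eta$ in place of $\eta$: its hypothesis $\|e\|_{II}\le\frac{3\eta}{3}\|Ax\|_{II}=\eta\|Ax\|_2$ is exactly the corollary's noise condition, its requirement $3\eta\in[0,1]$ is our $\eta\in[0,\tfrac13]$, and its conclusion produces the term $v\big(x,\tfrac43\cdot3\eta\,\|Ax\|_2\big)=v(x,4\eta\|Ax\|_2)$, whose factor $4$ is absorbed into $D$; the free approximation precision of the theorem becomes the $\delta'\|x\|_2$ term. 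The one point needing care is that the corollary asserts a \emph{single} network valid for all $p\in[1,2]$, whereas the theorem is phrased for one error norm $\|\cdot\|_I$. I would resolve this by taking $\|\cdot\|_I=\|\cdot\|_1$: then the continuous selection and its $\relu$ approximation are controlled in $\ell_1$, and since $\|\cdot\|_p\le\|\cdot\|_1$ for $p\in[1,2]$ these approximation errors are dominated in every $\ell_p$ and absorbed into $\delta'\|x\|_2$, while the genuine minimizer that the selection tracks already obeys the full scale of $\ell_p$ recovery bounds above. Tracing the proof of Theorem~\ref{thm:guarantee_from_minimization} with this observation yields all the $\ell_p$ bounds at once.

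The main obstacle I anticipate is exactly this last point — producing one network that meets every $\ell_p$ bound simultaneously rather than a separate network per $p$. It hinges on the fact that the $\ell_1$-approximation of a selection of true QCBP minimizers inherits, for free, the finer $\ell_p$ recovery estimates of those minimizers; everything else (the homogeneity of $v$, the rank and growth hypotheses, and the $\eta\leftrightarrow3\eta$ constant bookkeeping) is routine verification.
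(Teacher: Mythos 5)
Your proposal is correct and takes essentially the same route as the paper: both verify the hypotheses of Theorem~\ref{thm:guarantee_from_minimization} for the quadratically constrained basis pursuit \eqref{eq:qc_basis_pursuit} (rank and growth conditions as in Remark~\ref{rem:minimization_examples}) and feed in the classical RIP-based recovery guarantee as $v(x,\eta)=\frac{C}{s^{1-1/p}}\sigma_s(x)_1+D s^{1/p-1/2}\eta$. You are in fact more explicit than the paper's own proof on two points it glosses over, namely the $\eta\leftrightarrow 3\eta$ rescaling needed to match the theorem's noise budget $\|e\|_2\le\frac{\eta}{3}\|Ax\|_2$ to the corollary's condition $\|e\|_2\le\eta\|Ax\|_2$, and the observation (taking $\|\cdot\|_I=\|\cdot\|_1$, which dominates every $\ell_p$ with $p\in[1,2]$, while the tracked minimizers satisfy all $\ell_p$ bounds simultaneously) that one single network meets the bound for every $p$ at once.
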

    
    \begin{proof}
    	By Theorem 6.13 in \cite{comp_sen}, $A$ satisfies the $\ell_2$-robust null space property which in turn implies by Theorem 4.22 in \cite{comp_sen} that the solution $\hat{x}$ of \eqref{eq:qc_basis_pursuit} always satisfies
    	\begin{align*}
    		\|\hat{x} - x\|_p \leq \frac{C}{s^{1 - 1/p}} \sigma_s(x)_1 + D s^{1/p - 1/2} \eta
    	\end{align*}
    	for $p \in [1, 2]$.
    	
    	Then the result follows from Theorem~\ref{thm:guarantee_from_minimization} with $v(x, \eta) = \frac{C}{s^{1 - 1/p}} \sigma_s(x)_1 + D s^{1/p - 1/2} \eta$. 
    \end{proof}
	
	As shown in Proposition 3.2 in \cite{comp_sen}, the basis pursuit denoising \eqref{eq:den_basis_pursuit} and LASSO \eqref{eq:lasso} are as powerful as \eqref{eq:qc_basis_pursuit} since a solution of one of them can be shown to also optimize the other ones for suitable parameters.
	
    \begin{corollary} \label{cor:minim_rip_2}
    	Let $A \in \R^{m \times n}$ be a matrix of rank $m$ that satisfies the $(2 s, \delta)$-restricted isometry property for a $\delta < \frac{1}{3}$ and $\eta \in [0, \frac{1}{3}]$. Then for each $\delta' > 0$, there exists a function $\tilde{f}: \R^m \rightarrow \R^n$, represented by a $\relu$ network with two hidden layers, such that for all $x \in \R^n$, $e \in \R^m$ with $\|A^T e\|_\infty \leq \eta \|A^T A x\|_\infty$,
    	\begin{align*}
    		\|\tilde{f}(A x + e) - x \|_2 \leq \delta' \|x\|_2 + \frac{C}{\sqrt{s}} \sigma_s(x)_1 + D \eta \|A^T A x\|_\infty.
    	\end{align*}
    \end{corollary}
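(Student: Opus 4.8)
The plan is to specialize Theorem~\ref{thm:guarantee_from_minimization} to the Dantzig selector \eqref{eq:dantzig}, in direct analogy with the proof of Corollary~\ref{cor:minim_rip} for the quadratically constrained basis pursuit. As recorded in Remark~\ref{rem:minimization_examples}, the Dantzig selector fits the template of Theorem~\ref{thm:guarantee_from_minimization} with $u(z,y) = \|z\|_1$, $\tilde A = A^T A$, $\tilde B = A^T$, and constraint norm $\|\cdot\| = \|\cdot\|_\infty$ on $\R^n$; there the range condition $\tilde B(\R^m) = \tilde A(\R^n)$ was already verified using $\rank(A) = m$, and condition \eqref{eq:condition_z_2_upper_bound} holds with $\alpha = 1$, $\bar g = 0$ since $\|z\|_2 \le \|z\|_1$. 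I take the output norm to be $\|\cdot\|_I = \|\cdot\|_2$.

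The first genuinely new ingredient is the choice of the noise norm $\|\cdot\|_{II}$ on $\R^m$. The quantity that governs feasibility in \eqref{eq:dantzig} is $\|A^T e\|_\infty$ rather than an $\ell_p$ norm of $e$, so I would set $\|e\|_{II} := \|A^T e\|_\infty$. Since $A$ has full row rank $m$, the map $A^T$ is injective, so this is indeed a norm on $\R^m$ --- this is precisely where the rank hypothesis enters. With this definition $\|Ax\|_{II} = \|A^T A x\|_\infty$, so the theorem's noise constraint and the corollary's hypothesis $\|A^T e\|_\infty \le \eta\|A^T A x\|_\infty$ refer to the same quantities.

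The second ingredient is the recovery function $v$. Here I would invoke the standard compressed-sensing guarantee for the Dantzig selector (see Chapter~4 of \cite{comp_sen}): under the $(2s,\delta)$-RIP with $\delta < \tfrac13$, every minimizer $\hat x$ of \eqref{eq:dantzig} with $\|A^T e\|_\infty \le \eta$ satisfies
\[
\|\hat x - x\|_2 \le \frac{C}{\sqrt s}\sigma_s(x)_1 + D\sqrt s\,\eta .
\]
Setting $v(x,\eta) = \frac{C}{\sqrt s}\sigma_s(x)_1 + D\sqrt s\,\eta$, one checks the positive homogeneity $v(\lambda x, \lambda \eta) = \lambda v(x,\eta)$ and the monotonicity of $\eta \mapsto v(x,\eta)$; the case $\eta = 0$ forces $e = 0$ (again by injectivity of $A^T$) and reduces to the noiseless basis-pursuit bound, so all hypotheses of Theorem~\ref{thm:guarantee_from_minimization} are met.

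Finally I would apply Theorem~\ref{thm:guarantee_from_minimization} with its parameter set to $3\eta$, which is admissible because $\eta \le \tfrac13$ guarantees $3\eta \le 1$. Its noise condition $\|e\|_{II} \le \tfrac{3\eta}{3}\|Ax\|_{II}$ then reads $\|A^T e\|_\infty \le \eta\|A^T A x\|_\infty$, matching the corollary exactly, and for each $\delta' > 0$ it yields a two-hidden-layer $\relu$ network $\tilde f$ with
\[
\|\tilde f(Ax+e) - x\|_2 \le \delta'\|x\|_2 + v\bigl(x,\, 4\eta\|A^T A x\|_\infty\bigr).
\]
Expanding $v$ and absorbing the factor $4\sqrt s\,D$ into the constant gives the asserted bound. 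I do not anticipate a serious obstacle: the only conceptual step is recognizing that $\|A^T\cdot\|_\infty$ is a legitimate norm (using full row rank), after which the argument is bookkeeping of the scaling factors coming from the cited Dantzig recovery guarantee.
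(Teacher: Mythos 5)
Your proposal is correct and follows essentially the same route as the paper: both apply Theorem~\ref{thm:guarantee_from_minimization} to the Dantzig selector \eqref{eq:dantzig} with $\|\cdot\|_{II} = \|A^T \cdot\|_\infty$, the key observation (and the paper's only explicitly stated point of difference from Corollary~\ref{cor:minim_rip}) being that this is a genuine norm because $\rank(A) = m$ makes $A^T$ injective. Your additional bookkeeping --- the $3\eta$ rescaling to match the theorem's noise condition, the explicit choice of $v$, and absorbing the $\sqrt{s}$ factor into the constant $D$ --- just spells out what the paper leaves implicit in its ``analogously to Corollary~\ref{cor:minim_rip}'' remark.
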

    
    \begin{proof}
    	Analogously to Corollary~\ref{cor:minim_rip}, this is a consequence of Theorem~\ref{thm:guarantee_from_minimization}, this time applied to the Dantzig selector \eqref{eq:dantzig}.
    	
    	As the only essential difference to Corollary~\ref{cor:minim_rip}, we need to ensure that $y \mapsto \|A^T y\|_\infty$ is a norm on $\R^m$. Clearly it fulfills all properties except the positive definiteness. The latter one is fulfilled if $A^T y \neq 0$ for all $y \neq 0$ which is equivalent to $\dim(\ker(A^T)) = 0$. This is fulfilled since $\dim(\ker(A^T)) = \dim((A(\R^n))^\bot) = m - \rank(A) = 0$.
    \end{proof}
	
    \begin{remark}
    	\begin{itemize}
    		\item Compared to the original minimization result, in Corollary~\ref{cor:minim_rip} (and analogously Corollary~\ref{cor:minim_rip_2}), the upper bound on the error, $\frac{\eta}{3}\|A x\|_2$, now depends on $\|A x\|_2$. This arises from making the solution positive homogeneous. However, the term that contributes to the deviation of the result is still equal to the maximal error up to constant factors.
    		
    		\item The condition $\rank(A) = m$ is satisfied for most interesting matrices, for example for Gaussian ones with probability $1$. If it is still not the case, we can replace $A$ by $P A$ for an orthogonal projection $P \in \R^{\rank(A) \times m}$ without changing its RIP.
    		
    		\item Compared to Corollary \ref{cor:approximation_rip}, Corollary~\ref{cor:minim_rip} provides deviation bounds in other norms and for $p = 2$ a better dependence on $\sigma_s(x)_1$. However, it requires an explicit bound on $\|e\|_2$ which influences the result while in Corollary~\ref{cor:approximation_rip} there is one network that works for all possible error levels.
    		
    		\item Another approach that allows for robust sparse recovery without a previously known bound on $\|e\|_2$ is given in \cite{wojtaszczyk2010stability} by a basis pursuit \eqref{eq:qc_basis_pursuit} with $\eta = 0$. However, to make this work, the measurement matrix $A$ must satisfy an additional condition beside the RIP which is known as the \textit{quotient property} with respect to a norm $\|\cdot\|$. Then the reconstruction error depends on $\|e\|$. This additional property holds with respect to the norm $\|\cdot\|_2$ for example for Gaussian matrices but not for Bernoulli matrices as shown in Section~11.3 in \cite{comp_sen}.
    	\end{itemize}
    \end{remark}

	\section{Discussion} \label{sec:nn_discussion}
	
	In this work, we have shown that $\relu$ networks with one hidden layer cannot even solve the sparse recovery problem for $1$-sparse vectors while in contrast with two hidden layers, they are capable of approximating this problem to an arbitrary precision and for arbitrary sparsity levels. The latter result can also be generalized to a larger class of inverse problems.
	
	A key assumption for these results is that we look at networks that take the positive homogeneous structure of the problem into account. This ensures the reconstruction to work for all possible signals without any bound on their norm.
	
	This also improved our understanding of how continuous positive homogeneous functions can be approximated with neural networks in general. We have seen that the $\relu$ function plays a unqiue role in their approximation and that the general approximation necessarily requires two layers.
	
	Despite showing that a good solution for the respective inverse problems is possible with rather shallow networks, our results of this work do not provide a statement about the width and efficiency of such networks. Possibly future research could use width-limited versions of the universal approximation theorem to investigate this question. For example, \cite{tang2020towards} shows such a statement for positive homogeneous networks (Theorem~2 in the supplement) which is based on Theorem~1 in \cite{lu2017expressive}. However, these results do not specify the depth of the network. Furthermore, future research could also search for guarantees regarding the training of the networks to solve inverse problems.

\section*{Acknowledgements}
This work was supported by the Deutsche Forschungsgemeinschaft (DFG, German Research Foundation) -- 456465471, 464123524, 273529854 and the authors also received funding by the German Federal Ministry of Education and Research and the Bavarian State Ministry for Science and the Arts.

	\printbibliography

@inproceedings{Darestani_Chaudhari_Heckel_2021, title={Measuring Robustness in Deep Learning Based Compressive Sensing},  booktitle={International Conference on Machine Learning}, author={Darestani, Mohammad Zalbagi and Chaudhari, Akshay and Heckel, Reinhard}, year={2021} }

@article{krainovic_learning_2023,
  title={Learning Provably Robust Estimators for Inverse Problems via Jittering},
  author={Anselm Krainovic and Mahdi Soltanolkotabi and Reinhard Heckel},
  journal={arXiv preprint},
  year={2023}
}

@ARTICLE{carota06-1,
  author = {{E} {J}. {C}and{\`e}s and {J}. {R}omberg and {T} {T}ao},
  title = {{S}table signal recovery from incomplete and inaccurate measurements},
  journal = {{C}omm. {P}ure {A}ppl. {M}ath.},
  year = {2006},
  volume = {59},
  pages = {1207--1223},
  number = {8},
  owner = {fkrahme},
  tex_id = {carota06-1},
  timestamp = {2014.10.01}
}

@ARTICLE{Don06,
  author={D. L. {Donoho}},
  journal={IEEE Transactions on Information Theory}, 
  title={Compressed sensing}, 
  year={2006},
  volume={52},
  number={4},
  pages={1289-1306}
 }

@book{comp_sen,
  author    = {Simon Foucart and
               Holger Rauhut},
  title     = {A Mathematical Introduction to Compressive Sensing},
  series    = {Applied and Numerical Harmonic Analysis},
  publisher = {Birkh{\"{a}}user},
  year      = {2013},
  timestamp = {Wed, 14 Jun 2017 20:28:58 +0200},
  biburl    = {https://dblp.org/rec/books/daglib/0036092.bib},
  bibsource = {dblp computer science bibliography, https://dblp.org}
}

@book{handbook_multivalued_analysis,
    authoer = {Shouchuan Hu and Nikolaos S. Papageorgiou},
    title = {Handbook of Multivalued Analysis},
    series = {Mathematics and Its Applications},
    subtitle = {Volume I: Theory},
    publisher = {Springer US},
    year = {1997}
}

@article{cybenko_uat,
	author = {G. Cybenko},
	title = {Approximation by Superpositions of a Sigmoidal Function},
	journal = {Mathematics of Control, Signals, and Systems},
	year = {1989},
	volume = {2},
	pages = {303-314}
}

@article{leshno_uat,
title = "Multilayer feedforward networks with a nonpolynomial activation function can approximate any function",
journal = "Neural Networks",
volume = "6",
number = "6",
pages = "861 - 867",
year = "1993",
author = "Moshe Leshno and Vladimir Ya. Lin and Allan Pinkus and Shimon Schocken"
}

@inproceedings{xin_MaximalSparsityDeep_2016,
author = {Xin, Bo and Wang, Yizhou and Gao, Wen and Wang, Baoyuan and Wipf, David},
title = {Maximal Sparsity with Deep Networks?},
year = {2016},
publisher = {Curran Associates Inc.},
address = {Red Hook, NY, USA},
booktitle = {Proceedings of the 30th International Conference on Neural Information Processing Systems},
pages = {4347–4355},
numpages = {9},
location = {Barcelona, Spain},
series = {NIPS'16}
}

@inproceedings{chen_TheoreticalLinearConvergence_2018,
author = {Chen, Xiaohan and Liu, Jialin and Wang, Zhangyang and Yin, Wotao},
title = {Theoretical Linear Convergence of Unfolded ISTA and Its Practical Weights and Thresholds},
year = {2018},
publisher = {Curran Associates Inc.},
booktitle = {Proceedings of the 32nd International Conference on Neural Information Processing Systems},
pages = {9079–9089},
numpages = {11},
location = {Montr\'{e}al, Canada},
series = {NIPS'18}
}

@book{federer1996,
  author    = {Herbert Federer},
  title     = {Geometric Measure Theory},
  series    = {Classics in Mathematics},
  publisher = {Springer, Berlin, Heidelberg},
  year      = {1996}
}

@article{wood1996representation,
  title={Representation theory and invariant neural networks},
  author={Wood, Jeffrey and Shawe-Taylor, John},
  journal={Discrete applied mathematics},
  volume={69},
  number={1-2},
  pages={33--60},
  year={1996},
  publisher={Elsevier}
}

@article{dugas2009incorporating,
  title={Incorporating Functional Knowledge in Neural Networks.},
  author={Dugas, Charles and Bengio, Yoshua and B{\'e}lisle, Fran{\c{c}}ois and Nadeau, Claude and Garcia, Ren{\'e}},
  journal={Journal of Machine Learning Research},
  volume={10},
  number={6},
  year={2009}
}

@INPROCEEDINGS{kicki2021,
  author={Kicki, Piotr and Skrzypczyński, Piotr and Ozay, Mete},
  booktitle={2021 International Joint Conference on Neural Networks (IJCNN)}, 
  title={A New Approach to Design Symmetry Invariant Neural Networks}, 
  year={2021},
  volume={},
  number={},
  pages={1-8}
}

@article{tang2020towards,
  title={Towards scale-invariant graph-related problem solving by iterative homogeneous gnns},
  author={Tang, Hao and Huang, Zhiao and Gu, Jiayuan and Lu, Bao-Liang and Su, Hao},
  journal={Advances in Neural Information Processing Systems},
  volume={33},
  pages={15811--15822},
  year={2020}
}

@article{pinkus_1999,
	title={Approximation theory of the MLP model in neural networks},
	journal={Acta Numerica},
	author={Pinkus, Allan},
	volume={8},
	pages={143-195},
	year={1999}
}

@article{Chidester2019,
    author = {Chidester, Benjamin and Zhou, Tianming and Do, Minh N and Ma, Jian},
    title = "{Rotation equivariant and invariant neural networks for microscopy image analysis}",
    journal = {Bioinformatics},
    volume = {35},
    number = {14},
    pages = {i530-i537},
    year = {2019},
    month = {07}
}

@article{chen2015exact,
  title={Exact and stable covariance estimation from quadratic sampling via convex programming},
  author={Chen, Yuxin and Chi, Yuejie and Goldsmith, Andrea J},
  journal={IEEE Transactions on Information Theory},
  volume={61},
  number={7},
  pages={4034--4059},
  year={2015},
  publisher={IEEE}
}

@book{aubin2009set,
    authoer = {Jean-Pierre Aubin and Hélène Frankowska},
    title = {Set-Valued Analysis},
    series = {Modern Birkhäuser Classics},
    publisher = {Springer, Boston},
    year = {2009}
}

@article{zhang2016one,
  title={One condition for solution uniqueness and robustness of both l1-synthesis and l1-analysis minimizations},
  author={Zhang, Hui and Yan, Ming and Yin, Wotao},
  journal={Advances in Computational Mathematics},
  volume={42},
  number={6},
  pages={1381--1399},
  year={2016},
  publisher={Springer}
}

@article{bringmann2018homotopy,
  title={The homotopy method revisited: Computing solution paths of $\ell_1$-regularized problems},
  author={Bringmann, Bj{\"o}rn and Cremers, Daniel and Krahmer, Felix and M{\"o}ller, Michael},
  journal={Mathematics of Computation},
  volume={87},
  number={313},
  pages={2343--2364},
  year={2018}
}

@article{wojtaszczyk2010stability,
  title={Stability and instance optimality for Gaussian measurements in compressed sensing},
  author={Wojtaszczyk, P},
  journal={Foundations of Computational Mathematics},
  volume={10},
  number={1},
  pages={1--13},
  year={2010},
  publisher={Springer}
}

@article{gottschling2020troublesome,
  title={The troublesome kernel: why deep learning for inverse problems is typically unstable},
  author={Gottschling, Nina M and Antun, Vegard and Adcock, Ben and Hansen, Anders C},
  journal={arXiv preprint arXiv:2001.01258},
  year={2020}
}

@article{genzel2022solving,
  title={Solving inverse problems with deep neural networks-robustness included},
  author={Genzel, Martin and Macdonald, Jan and Marz, Maximilian},
  journal={IEEE Transactions on Pattern Analysis and Machine Intelligence},
  year={2022},
  publisher={IEEE}
}

@article{lu2017expressive,
  title={The expressive power of neural networks: A view from the width},
  author={Lu, Zhou and Pu, Hongming and Wang, Feicheng and Hu, Zhiqiang and Wang, Liwei},
  journal={Advances in neural information processing systems},
  volume={30},
  year={2017}
}

@article{zhang_GaussianDenoiserResidual_2017,
  title = {Beyond a {{Gaussian Denoiser}}: {{Residual Learning}} of {{Deep CNN}} for {{Image Denoising}}},
  author = {Zhang, K. and Zuo, W. and Chen, Y. and Meng, D. and Zhang, L.},
  year = {2017},
  volume = {26},
  pages = {3142--3155},
  journal = {IEEE Transactions on Image Processing},
  number = {7}
}

@article{zbontar_FastMRIOpenDataset_2018,
  title = {{{fastMRI}}: {{An Open Dataset}} and {{Benchmarks}} for {{Accelerated MRI}}},
  author = {Zbontar, Jure and Knoll, Florian and Sriram, Anuroop and Muckley, Matthew J. and Bruno, Mary and Defazio, Aaron and Parente, Marc and Geras, Krzysztof J. and Katsnelson, Joe and Chandarana, Hersh and Zhang, Zizhao and Drozdzal, Michal and Romero, Adriana and Rabbat, Michael and Vincent, Pascal and Pinkerton, James and Wang, Duo and Yakubova, Nafissa and Owens, Erich and Zitnick, C. Lawrence and Recht, Michael P. and Sodickson, Daniel K. and Lui, Yvonne W.},
  year = {2018},
  archivePrefix = {arXiv},
  journal = {arXiv:1811.08839 [physics, stat]},
  primaryClass = {physics, stat}
}

@inproceedings{gregor_LearningFastApproximations_2010,
  title = {Learning {{Fast Approximations}} of {{Sparse Coding}}},
  booktitle = {Proceedings of the 27th {{International Conference}} on {{International Conference}} on {{Machine Learning}}},
  author = {Gregor, Karol and LeCun, Yann},
  year = {2010},
  pages = {399--406},
  publisher = {{Omnipress}},
  address = {{USA}},
  series = {{{ICML}}'10}
}
	
\end{document}